\definecolor{mydarkred}{rgb}{0.6,0,0}
\definecolor{mydarkgreen}{rgb}{0,0.6,0}
\theoremstyle{plain}
\newtheorem{theorem}{Theorem}[section]
\newtheorem{proposition}[theorem]{Proposition}
\newtheorem{lemma}[theorem]{Lemma}
\theoremstyle{definition}
\newtheorem{assumption}[theorem]{Assumption}
\newtheorem{remark}[theorem]{Remark}
\renewcommand{\P}{\mathbb{P}}
\newcommand{\E}{\mathbb{E}}
\newcommand{\Sc}{\mathcal{S}}
\newcommand{\X}{\mathcal{X}}
\newcommand{\Y}{\mathcal{Y}}
\newcommand{\C}{\mathcal{C}}
\newcommand{\id}[1]{\mathds{1}\offf{#1}}
\newcommand{\Eqref}[1]{Eq.~\eqref{#1}}
\newcommand{\of}[1]{\left(#1\right)}
\newcommand{\off}[1]{\left[#1\right]}
\newcommand{\offf}[1]{\left\{#1\right\}}
\newcommand{\abs}[1]{\left|#1\right|}
\title{Exploring the Noise Robustness of Online \\Conformal Prediction}
\author{
Huajun Xi\textsuperscript{1},\enspace
Kangdao Liu\textsuperscript{1,2},\enspace
Hao Zeng\textsuperscript{1},\enspace
Wenguang Sun\textsuperscript{3},\enspace
Hongxin Wei\textsuperscript{1}\thanks{Correspondence to: Hongxin Wei <\texttt{weihx@sustech.edu.cn}>}\\
\textsuperscript{1}Department of Statistics and Data Science, Southern University of Science and Technology \\
\textsuperscript{2}Department of Computer and
Information Science, University of Macau \\
\textsuperscript{3}Center for Data Science, Zhejiang University
}
\begin{document}

\doparttoc
\maketitle

\begin{abstract}
Conformal prediction is an emerging technique for uncertainty quantification that constructs prediction sets guaranteed to contain the true label with a predefined probability. 
Recent work develops online conformal prediction methods that adaptively construct prediction sets to accommodate distribution shifts. 
However, existing algorithms typically assume \textit{perfect label accuracy} which rarely holds in practice. 
In this work, we investigate the robustness of online conformal prediction under uniform label noise with a known noise rate. 
We show that label noise causes a persistent gap between the actual mis-coverage rate and the desired rate $\alpha$, leading to either overestimated or underestimated coverage guarantees. 
To address this issue, we propose a novel loss function \textit{robust pinball loss}, which provides an unbiased estimate of clean pinball loss without requiring ground-truth labels.
Theoretically, we demonstrate that robust pinball loss enables online conformal prediction to eliminate the coverage gap under uniform label noise, achieving a convergence rate of $\mathcal{O}(T^{-1/2})$ for both empirical and expected coverage errors (i.e., absolute deviation of the empirical and expected mis-coverage rate from the target level $\alpha$).
This loss offers a general solution to the uniform label noise, and is complementary to existing online conformal prediction methods.
Extensive experiments demonstrate that robust pinball loss enhances the noise robustness of various online conformal prediction methods by achieving a precise coverage guarantee and improved efficiency.
\end{abstract}

\section{Introduction}

Machine learning techniques are revolutionizing decision-making in high-stakes domains, such as autonomous driving \citep{bojarski2016end} and medical diagnostics \citep{caruana2015intelligible}. It is crucial to ensure the reliability of model predictions in these contexts, as wrong predictions can result in serious consequences. While various techniques have been developed for uncertainty estimation, including confidence calibration \citep{guo2017calibration} and Bayesian neural networks \citep{smith2013uncertainty}, they typically lack rigorous theoretical guarantees. \textit{Conformal prediction} addresses this limitation by establishing a systematic framework to construct prediction sets with provable coverage guarantee \citep{vovk2005algorithmic,shafer2008tutorial,balasubramanian2014conformal,angelopoulos2023conformal}. Notably, this framework requires no parametric assumptions about the data distribution and can be applied to any black-box predictor, which makes it a powerful technique for uncertainty quantification.

Recent research extends conformal prediction to \textit{online} setting where the data arrives in a sequential order \citep{gibbs2021adaptive, feldman2022achieving, bhatnagar2023improved, DBLP:conf/icml/AngelopoulosBB24, gibbs2024conformal, hajihashemimulti}. These methods provably achieve the desired coverage property under arbitrary distributional changes. However, previous studies typically assume \textit{perfect label accuracy},  an assumption that seldom holds true in practice due to the common occurrence of noisy labels in online learning \citep{ben2009agnostic, natarajan2013learning, wuinformation}. Recent work \citep{einbinder2024label} proves that online conformal prediction can achieve a conservative coverage guarantee under uniform label noise, leading to unnecessarily large prediction sets. However, their analysis relies on a strong distributional assumption of non-conformity score and cannot quantify the specific deviation of coverage guarantees. These limitations motivate us to establish a general theoretical framework for this problem and develop a noise-robust algorithm that maintains precise coverage guarantees while producing small prediction sets.

In this work, we present a general theoretical framework for analyzing how uniform label noise affects the performance of standard online conformal prediction, i.e., adaptive conformal inference (dubbed ACI) \citep{gibbs2021adaptive}.
Notably, our theoretical results are independent of the distributional assumptions on the non-conformity scores made in the previous work \citep{einbinder2024label}. 
In particular, we demonstrate that label noise causes a persistent gap between the actual mis-coverage rate and the desired rate $\alpha$, with higher noise rates resulting in larger gaps. 
This gap can lead to either overestimated or underestimated coverage guarantees, which depend on the size of the prediction sets (see Proposition~\ref{prop:noise_online}).


To address this challenge, we propose a novel loss function \textit{robust pinball loss}, which provides an unbiased estimate of clean pinball loss value without requiring access to ground-truth labels. 
Specifically, we construct the robust pinball loss as a weighted combination of the pinball loss computed with respect to noisy scores and the pinball loss with scores of all classes. 
We prove that this loss is equivalent to the pinball loss under clean labels in expectation.
Theoretically, we demonstrate that robust pinball loss enables ACI to eliminate the coverage gap under uniform label noise.
It achieves a convergence rate of $\mathcal{O}(T^{-1/2})$ for both empirical and expected coverage errors (i.e., absolute deviation of the empirical and expected mis-coverage rate from the target level $\alpha$). 
Notably, our robust pinball loss offers a general solution to the uniform label noise, and is complementary to existing online conformal prediction methods.

To verify the effectiveness of the robust pinball loss, we conduct extensive experiments on CIFAR-100 \citep{krizhevsky2009learning} and ImageNet \cite{deng2009imagenet} with synthetic uniform label noise.
In particular, we integrate the proposed loss into ACI with constant \citep{gibbs2021adaptive} and dynamic learning rates \citep{DBLP:conf/icml/AngelopoulosBB24}, and strongly adaptive online conformal prediction \citep{bhatnagar2023improved}.
Empirical results show that the robust pinball loss enhances the noise robustness of online conformal prediction by eliminating the coverage gap caused by the label noise. 
Thus, these methods achieve both long-run coverage and local coverage rate that are close to the target $1-\alpha$, and improved prediction set efficiency.
For example, on ImageNet with error rate $\alpha=0.1$, noise rate $\epsilon=0.15$ and dynamic learning rate, ACI with the standard pinball loss deviates from the target coverage level of $0.9$, exhibiting a coverage gap of 8.372\% and an average set size of 171.2. 
In contrast, ACI equipped with the robust pinball loss achieves a negligible coverage gap of 0.183\% and a prediction set size of 13.10.
In summary, our method consistently enhances the noise robustness of various online conformal prediction methods by achieving a precise coverage guarantee and improved efficiency.

We summarize our contributions as follows:
\begin{itemize}
\item 
We present a general theoretical framework for analyzing the effect of uniform label noise on the coverage of online conformal prediction. 
Our theoretical results are independent of the distributional assumptions made in the previous work \citep{einbinder2024label}.

\item 
To address the issue of label noise, we propose a novel loss function -- \textit{robust pinball loss} that enhances the noise robustness of online conformal prediction.
This loss is complementary to online conformal prediction algorithms and can be seamlessly integrated with these methods.

\item 
We empirically validate that our method can be applied to various online conformal prediction methods and non-conformity score functions. It is straightforward to implement and does not require sophisticated changes to the framework of online conformal prediction.
\end{itemize}

\section{Preliminary}

\paragraph{Online conformal prediction.} 
We study the problem of generating prediction sets in \textit{online} classification where the data arrives in a sequential order \citep{gibbs2021adaptive, DBLP:conf/icml/AngelopoulosBB24}. Formally, we consider a sequence of data points $(X_t,Y_t)$, $t\in\mathbb{N}^{+}$, which are sampled from a joint distribution $\mathcal{P}_{\mathcal{X}\mathcal{Y}}$ over the input space $\mathcal{X}\subset\mathbb{R}^d$, and the label space $\mathcal{Y}=\{1,\ldots,K\}$. In online conformal prediction, the goal is to construct prediction sets $\mathcal{C}_t(X_t)$, $t\in\mathbb{N}^{+}$, that provides \textit{precise} coverage guarantee: $\lim_{T\to+\infty}\frac{1}{T}\sum_{t=1}^T\id{Y_t\notin\C_t(X_t)}=\alpha$, where $\alpha\in(0,1)$ denotes a user-specified error rate.

At each time step $t$, we construct a prediction set $\C_t(X_t)$ by
\begin{equation}\label{eq:pred_set}
\C_t(X_t)=\offf{y\in\Y\,:\,\Sc(X_t,y)\leq\hat{\tau}_t}.
\end{equation}
where $\hat{\tau}_t$ is a data-driven threshold, and $\Sc:\X\times\Y\to\mathbb{R}$ denotes a \textit{non-conformity score} function that measures the deviation between a data sample and the training data. For example, given a pre-trained classifier $f:\mathcal{X} \rightarrow \mathbb{R}^{K}$, the LAC score \citep{sadinle2019least} is defined as $\Sc(X,Y)=1-\hat{\pi}_Y(X)$, where $\hat{\pi}_Y(X)=\sigma_Y(f(X))$ denotes the softmax probability of instance $X$ for class $Y$, and $\sigma$ is the softmax function. For notation shorthand, we use $S_t$ to denote the random variable $\Sc(X_t,Y_t)$ and use $S_{t,y}$ to denote $\Sc(X_t,y)$ for a given class $y\in\Y$. Following previous work \citep{DBLP:conf/icml/AngelopoulosBB24, DBLP:conf/icml/KiyaniPH24}, we will assume that the non-conformity score function is bounded, and the threshold is specifically initialized:
\begin{assumption}\label{ass:score_bounded}
The score is bounded by $\Sc(\cdot,\cdot)\in[0,1]$.
\end{assumption}
\begin{assumption}\label{ass:threshold_initialization}
The threshold is initialized by $\hat{\tau}_1\in[0,1]$.
\end{assumption}

In online conformal prediction, a representative method is adaptive conformal inference (ACI) \citep{gibbs2021adaptive}, which updates the threshold $\hat{\tau}_t$ with \textit{pinball loss}:
\begin{align*}
l_{1-\alpha}(\tau,s)=\alpha(\tau-s)\mathds{1}\{\tau\geq s\}+(1-\alpha)(s-\tau)\mathds{1}\{\tau\leq s\},
\end{align*}
where $\tau$ denotes a threshold and $s$ is a non-conformity score. As the label $Y_t$ of $X_t$ is observed after model prediction, the threshold is then updated via \textit{online gradient descent}:
\begin{equation}\label{eq:update_tau}
\begin{aligned}
\hat{\tau}_{t+1}
=\hat{\tau}_{t}-\eta\cdot\nabla_{\hat{\tau}_t}l_{1-\alpha}(\hat{\tau}_t,S_t) 
=\hat{\tau}_{t}+\eta\cdot(\mathds{1}\offf{Y_t\notin\C_t(X_t)}-\alpha)
\end{aligned}
\end{equation}
where $\nabla_{\tau}l_{1-\alpha}(\tau,s)$ denotes the gradient of pinball loss w.r.t the threshold $\tau$, and $\eta>0$ is the learning rate. 
The optimization will increase the threshold if the prediction set $\C_t(X_t)$ fails to encompass the label $Y_t$, resulting in more conservative predictions in future instances (and vice versa).

We use the \textit{empirical coverage error} and the \textit{expected coverage error} to evaluate the coverage performance. 
The empirical coverage error measures the absolute deviation of the mis-coverage rate from the target level $\alpha$, while the expected coverage error quantifies the absolute deviation in expectation.
In particular, for any $T\in\mathbb{N}^{*}$, we define
\begin{align*}
\mathrm{EmErr}(T)
=\abs{\frac{1}{T}\sum_{t=1}^T\id{Y_t\notin\C_t(X_t)}-\alpha},
\quad
\mathrm{ExErr}(T)
=\abs{\frac{1}{T}\sum_{t=1}^T\P\offf{Y_t\notin\C_t(X_t)}-\alpha}.
\end{align*}

\paragraph{Uniform Label noise.} In this paper, we focus on the issue of noisy labels in online learning, a common occurrence in the real world.  This is primarily due to the dynamic nature of real-time data streams and the potential for human error or sensor malfunctions during label collection. Let $(X_t,\tilde{Y}_t)$ be the data sequence with label noise, and $\tilde{S}_{t}=\Sc(X_t,\tilde{Y}_t)$ be the noisy non-conformity score. In this work, we focus on the setting of uniform label noise ~\citep{einbinder2024label, penso2024conformal}, i.e., the correct label is replaced by a label that is randomly sampled from the $K$ classes with a fixed probability $\epsilon\in(0,1)$:
\begin{align*}
\tilde{Y}_t=Y_t\cdot\mathds{1}\offf{U\geq\epsilon}+\bar{Y}\cdot\mathds{1}\offf{U\leq\epsilon},    
\end{align*}
where $U$ is uniformly distributed over $[0,1]$, and $\bar{Y}$ is uniformly sampled from the set of classes $\mathcal{Y}$. We assume the probability $\epsilon$ (i.e., the noise rate) is known, in alignment with prior works \citep{einbinder2024label, penso2024conformal, sesia2024adaptive}. This assumption is practical as the noise rate can be estimated from historical data \cite{liu2015classification, yu2018efficient, wei2020combating}. 

Recent work \citep{einbinder2024label} investigates the noise robustness of online conformal prediction under uniform label noise, with a strong distributional assumption. Their analysis demonstrates that noisy labels will lead to a conservative long-run coverage guarantee, with the assumption that noisy score distribution stochastically dominates the clean score distribution, i.e., $\P\offf{\tilde{S}\leq s}\leq\P\offf{S\leq s}, \forall s\in\mathbb{R}$. The distributional assumption is too strong to ensure valid coverage under general cases. Moreover, their analysis fails to quantify the specific deviation of coverage guarantees. These limitations motivate us to establish a general theoretical framework for this problem and develop a noise-robust algorithm for online conformal prediction.

\section{The impact of label noise on online conformal prediction}\label{section:constant_lr}

In this section, we theoretically analyze the impacts of uniform label noise on ACI \citep{gibbs2021adaptive}.
In this case, the threshold $\hat{\tau}_t$ is updated by
\begin{equation}\label{eq:constant_noise_update_rule}
\hat{\tau}_{t+1}
=\hat{\tau}_{t}-\eta\cdot\nabla_{\hat{\tau}_t}l_{1-\alpha}(\hat{\tau}_t,\tilde{S}_t),
\end{equation}
where $\tilde{S}_t$ is the noisy non-conformity score. As shown in \Eqref{eq:constant_noise_update_rule}, the essence of online conformal prediction is to update the threshold $\hat{\tau}_t$ with the gradient of pinball loss. However, the gradient estimates can be biased if the observed labels are corrupted. Formally, with high probability:
\begin{align*}
\nabla_{\hat{\tau}_t}l_{1-\alpha}(\hat{\tau}_t,S_t)\neq\nabla_{\hat{\tau}_t}l_{1-\alpha}(\hat{\tau}_t,\tilde{S}_t).
\end{align*}
This bias in gradient estimation can result in two potential consequences: conformal predictors may either fail to maintain the desired coverage or suffer from reduced efficiency (i.e., generating large prediction sets). We formalize these consequences in the following proposition:

\begin{proposition}\label{prop:noise_online}
Consider online conformal prediction under uniform label noise with noise rate $\epsilon \in (0,1)$. Given Assumptions~\ref{ass:score_bounded} and \ref{ass:threshold_initialization}, when updating the threshold according to \Eqref{eq:constant_noise_update_rule}, then for any $\delta\in(0,1)$ and $T\in\mathbb{N}^{+}$, the following bound holds with probability at least $1-\delta$:
\begin{align*}
\alpha-\frac{1}{T}\sum_{t=1}^T&\id{Y_t\notin\C_t(X_t)}
\leq\frac{A}{\sqrt{T}}+\frac{B}{T} 
+\frac{\epsilon}{1-\epsilon}\cdot\frac{1}{T}\sum_{t=1}^T\of{(1-\alpha)-\frac{1}{K}\E\off{\abs{\C_t(X_t)}}},
\end{align*}
where 
\begin{align*}
A=\frac{2-\epsilon}{1-\epsilon}\sqrt{2\log\of{\frac{4}{\delta}}}\quad\text{and}
\quad
B=\frac{1}{1-\epsilon}\of{\frac{1}{\eta}+1-\alpha}.
\end{align*} 
\end{proposition}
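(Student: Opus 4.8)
The plan is to combine a deterministic telescoping of the online‑gradient update, a decomposition of the \emph{noisy} mis‑coverage indicator into its clean counterpart plus a set‑size term, and two martingale concentration inequalities. Throughout, write $e_t:=\id{Y_t\notin\C_t(X_t)}=\id{S_t>\hat{\tau}_t}$ for the clean mis‑coverage indicator and $\tilde{e}_t:=\id{\tilde{Y}_t\notin\C_t(X_t)}=\id{\tilde{S}_t>\hat{\tau}_t}$ for the noisy one. First I would rewrite the noisy update \eqref{eq:constant_noise_update_rule} in the explicit form $\hat{\tau}_{t+1}=\hat{\tau}_t+\eta(\tilde{e}_t-\alpha)$, exactly as in \eqref{eq:update_tau} but driven by $\tilde{S}_t$, and telescope it to obtain the identity $\frac{1}{T}\sum_{t=1}^T\tilde{e}_t=\alpha+\frac{\hat{\tau}_{T+1}-\hat{\tau}_1}{\eta T}$. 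Using \Assref{ass:score_bounded} and \Assref{ass:threshold_initialization}, a short induction (a reflecting‑barrier argument: whenever $\hat{\tau}_t\ge 1$ coverage is forced so the threshold decreases, and whenever $\hat{\tau}_t\le 0$ mis‑coverage is forced so it increases) keeps $\hat{\tau}_t$ in a bounded interval, giving $\abs{\hat{\tau}_{T+1}-\hat{\tau}_1}\le 1+\eta(1-\alpha)$. Hence $\frac{\hat{\tau}_1-\hat{\tau}_{T+1}}{\eta T}\le\frac{1}{T}\of{\frac{1}{\eta}+1-\alpha}$, which after the $\frac{1}{1-\epsilon}$ scaling below is precisely the term $B/T$.

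The crux is the noise decomposition. Let $\mathcal{F}_{t-1}$ be the history before step $t$, so that $\hat{\tau}_t$ is $\mathcal{F}_{t-1}$‑measurable. Under the uniform‑noise model, with probability $1-\epsilon$ the label is clean and $\tilde{S}_t=S_t$, while with probability $\epsilon$ the label is resampled uniformly over $\Y$, contributing $\frac{1}{K}\sum_{y\in\Y}\id{\Sc(X_t,y)>\hat{\tau}_t}=1-\frac{\abs{\C_t(X_t)}}{K}$. Taking expectation over $(X_t,Y_t)$ and the noise yields
\[
\E\off{\tilde{e}_t\mid\mathcal{F}_{t-1}}=(1-\epsilon)\,\E\off{e_t\mid\mathcal{F}_{t-1}}+\epsilon\of{1-\tfrac{1}{K}\E\off{\abs{\C_t(X_t)}\mid\mathcal{F}_{t-1}}}.
\]
Solving this for $\E\off{e_t\mid\mathcal{F}_{t-1}}$ expresses the clean expected mis‑coverage through the noisy one and the expected set size; combining the resulting constant $\alpha-\frac{\alpha}{1-\epsilon}=-\frac{\alpha\epsilon}{1-\epsilon}$ with the set‑size contribution produces the term $\frac{\epsilon}{1-\epsilon}\cdot\frac{1}{T}\sum_t\of{(1-\alpha)-\frac{1}{K}\E\off{\abs{\C_t(X_t)}}}$ in the statement.

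It remains to pass from conditional expectations to empirical sums. Introduce two martingale difference sequences adapted to $\{\mathcal{F}_t\}$, namely $D_t=\tilde{e}_t-\E\off{\tilde{e}_t\mid\mathcal{F}_{t-1}}$ and $E_t=e_t-\E\off{e_t\mid\mathcal{F}_{t-1}}$, both bounded in $[-1,1]$. Azuma–Hoeffding gives $\abs{\sum_t D_t}\le\sqrt{2T\log(4/\delta)}$ and $\abs{\sum_t E_t}\le\sqrt{2T\log(4/\delta)}$, each with probability at least $1-\delta/2$, so by a union bound both hold with probability at least $1-\delta$ — this is the source of the $\log(4/\delta)$. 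Substituting the telescoping identity for $\sum_t\tilde{e}_t$ together with $\sum_t e_t=\sum_t\E\off{e_t\mid\mathcal{F}_{t-1}}+\sum_t E_t$ into the solved decomposition, then dividing by $T$, writes $\alpha-\frac{1}{T}\sum_t e_t$ as the threshold term (bounded by $B/T$), the set‑size term, and the residual $-\frac{1}{T}\sum_t E_t+\frac{1}{1-\epsilon}\cdot\frac{1}{T}\sum_t D_t$. Bounding the latter by $\of{1+\frac{1}{1-\epsilon}}\frac{\sqrt{2\log(4/\delta)}}{\sqrt{T}}=\frac{2-\epsilon}{1-\epsilon}\cdot\frac{\sqrt{2\log(4/\delta)}}{\sqrt{T}}=\frac{A}{\sqrt{T}}$ closes the argument.

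I expect the main obstacle to be the conditional‑expectation identity of the second paragraph: getting the uniform‑resampling contribution $1-\abs{\C_t(X_t)}/K$ correct and keeping the two sources of randomness (data sampling and label corruption) cleanly separated so that $D_t$ and $E_t$ are genuinely adapted to the same filtration. The only delicate point in the final assembly is tracking the coefficients $1$ and $\frac{1}{1-\epsilon}$ on the two martingale sums, since it is their combination that yields the stated constant $A=\frac{2-\epsilon}{1-\epsilon}\sqrt{2\log(4/\delta)}$; everything else is the standard telescoping, the reflecting‑barrier bound, and Azuma–Hoeffding bookkeeping.
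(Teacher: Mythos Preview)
Your proposal is correct and follows essentially the same route as the paper. The paper phrases everything in terms of pinball-loss gradients $\nabla_{\hat\tau_t}l_{1-\alpha}(\hat\tau_t,S_t)=\alpha-e_t$ rather than the indicators $e_t,\tilde e_t$, but the ingredients are identical: the reflecting-barrier bound on $\hat\tau_t$ (their Lemma~\ref{lemma:clean_hat_tau_bounded}), the noise identity relating $\P\{S_t\le\hat\tau_t\}$ to $\P\{\tilde S_t\le\hat\tau_t\}$ and $\frac{1}{K}\E[|\C_t(X_t)|]$ (their Lemma~\ref{lemma:noisy_score_distribution}, your conditional-expectation formula), two Azuma--Hoeffding applications with a union bound producing the $\log(4/\delta)$, and the telescoping of the noisy update; the coefficients $1$ and $\frac{1}{1-\epsilon}$ on the two martingale sums combine exactly as you describe to give $A$.
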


\paragraph{Interpretation.} The proof is provided in Appendix~\ref{proof:prop:noise_online}. In Proposition~\ref{prop:noise_online}, we evaluate the long-run mis-coverage rate using $\frac{1}{T}\sum_{t=1}^T\id{Y_t\notin\C_t(X_t)}$. Our analysis shows that label noise introduces a coverage gap of 
$$\frac{\epsilon}{1-\epsilon}\cdot\frac{1}{T}\sum_{t=1}^T\of{(1-\alpha)-\frac{1}{K}\E\off{\abs{\C_t(X_t)}}},$$ 
between the actual mis-coverage rate and desired mis-coverage rate $\alpha$ (the term $\frac{A}{\sqrt{T}}+\frac{B}{T}$ will diminish and eventually approach zero, as $T$ increases). In particular,  the effect of label noise on the coverage guarantee can manifest in two distinct scenarios, depending on the prediction set size: 
\begin{itemize}
\item When the prediction sets are small such that $\E\off{\abs{\C_t(X_t)}} \leq K(1-\alpha)$, label noise can result in \textit{over-coverage} of prediction sets: $\frac{1}{T}\sum_{t=1}^T\id{Y_t\notin\C_t(X_t)}\leq\alpha$. In this scenario, a higher noise rate $\epsilon$ results in a larger coverage gap.

\item  When the prediction sets are large such that $\E\off{\abs{\C_t(X_t)}} \geq K(1-\alpha)$, label noise causes \textit{under-coverage} of prediction sets. This situation generally occurs only when the model significantly underperforms on the task, which is uncommon in practical applications (See details in Appendix~\ref{appendix:undercoverage})).
\end{itemize}

\paragraph{Empirical verification.} 
To verify our theoretical result, we compare the performance of ACI under different noise rates.
In particular, we compute the long-run coverage, defined as $\mathrm{Cov}(T)=\frac{1}{T}\sum_{t=1}^T\id{Y_t\in\C_t(X_t)}$, under noise rate $\epsilon\in\{0.05,0.1,0.15\}$, with a ResNet18 model on CIFAR-100 and ImageNet datasets. We employ LAC score \citep{sadinle2019least} to generate prediction sets with error rate $\alpha=0.1$, and use noisy labels to update the threshold with a constant learning rate $\eta=0.05$. The experimental results in Figure~\ref{fig:constant_lr_motivation} validate our theoretical analysis: label noise introduces discrepancies between the actual and target coverage rates $1-\alpha$, with higher noise rates resulting in a more pronounced coverage gap.

Overall, our results show that label noise significantly impacts the coverage guarantee of online conformal prediction. The size of the prediction set determines whether coverage is inflated or deflated, and the noise rate controls how much coverage is changed. In the following, we introduce a robust pinball loss, which addresses the issue of label noise.

\begin{figure*}[t]
\centering
\begin{subfigure}[b]{0.49\textwidth}
\centering
\includegraphics[width=0.95\linewidth]{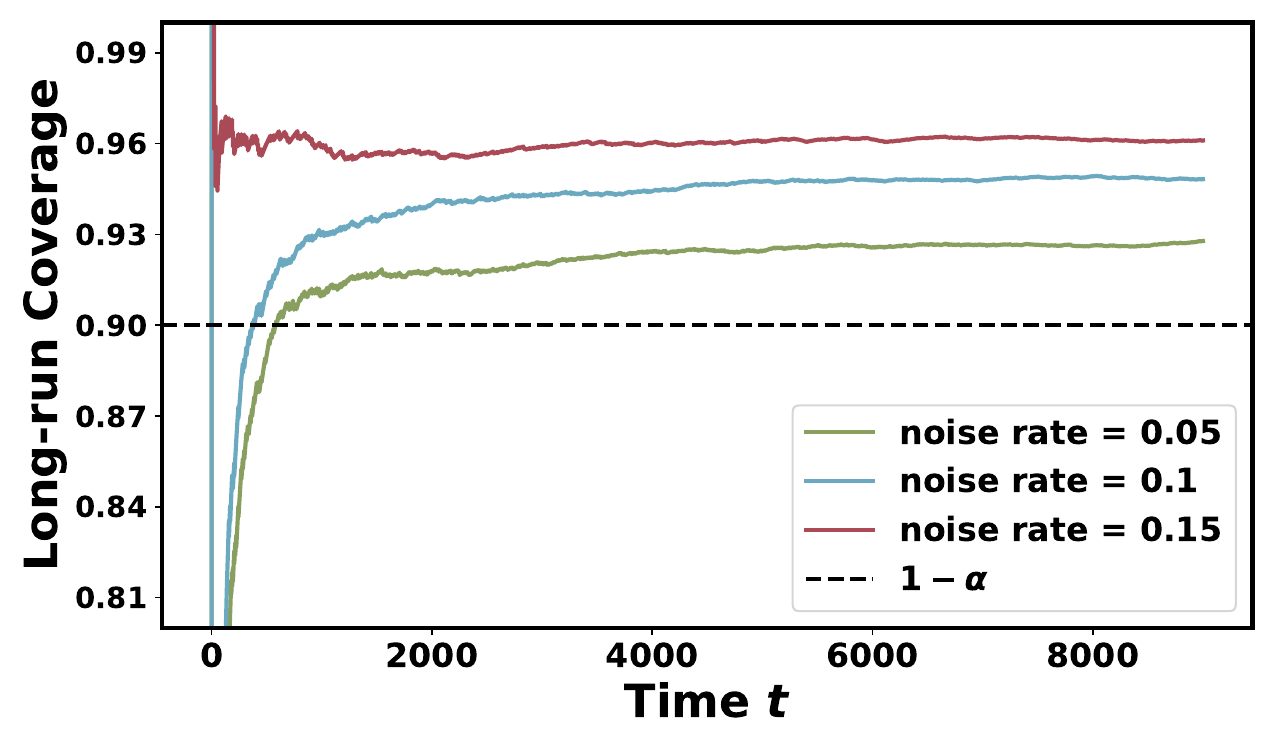}
\caption{CIFAR-100}
\end{subfigure}
\begin{subfigure}[b]{0.49\textwidth}
\centering
\includegraphics[width=0.95\linewidth]{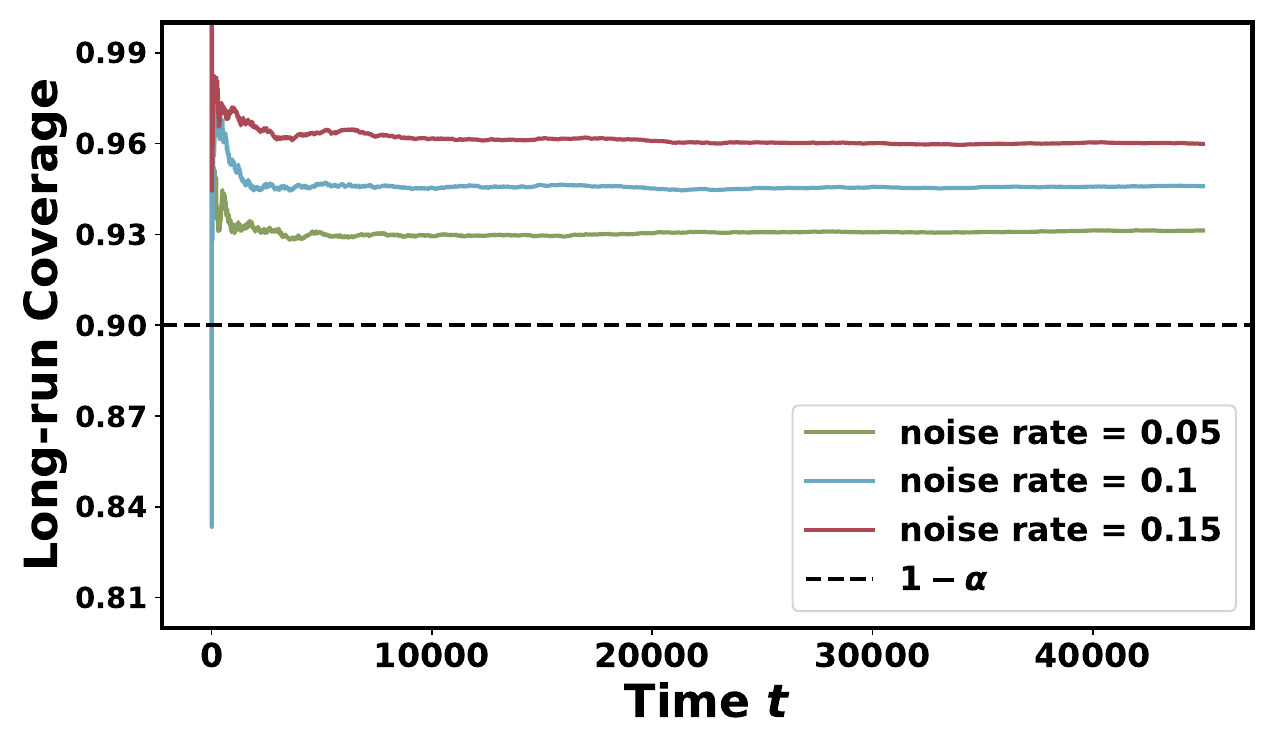}
\caption{ImageNet}
\end{subfigure}
\vspace{0.2cm}
\caption{Performance of ACI \citep{gibbs2021adaptive} with a constant learning rate $\eta=0.05$ under different noise rates, using ResNet18 on CIFAR-100 and ImageNet datasets.
The results validate that label noise introduces a coverage gap, with higher noise rates resulting in a more pronounced gap. 
}
\label{fig:constant_lr_motivation}
\label{fig:overconfidence}
\end{figure*}

\section{Method}

\subsection{Robust pinball loss}
Our theoretical analysis establishes that biased gradient estimates arising from label noise can significantly impact the coverage properties of online conformal prediction. 
Therefore, the key challenge of the noisy setting lies in how to obtain unbiased gradient estimates without requiring ground-truth labels. 
In this work, we propose a novel loss function - \textit{robust pinball loss}, which provides an unbiased estimate of the clean pinball loss value without requiring access to ground-truth labels.
We begin by developing the intuition behind how to approximate the clean pinball loss under uniform label noise.

Consider a data sample $(X,Y)$ with a noisy label $\tilde{Y}$, we denote the clean non-conformity score as $S=\Sc(X,Y)$, the noisy score as $\tilde{S}=\Sc(X,\tilde{Y})$, and the score for an arbitrary class $y\in\Y$ as $S_y=\Sc(X,y)$. Under a uniform label noise with noise rate $\epsilon\in(0,1)$, the distributions of these scores have the following relationship: $\P\offf{S\leq s}
=\frac{1}{1-\epsilon}\P\offf{\tilde{S}\leq s}
-\frac{\epsilon}{K(1-\epsilon)}\sum_{y=1}^K\P\offf{S_y\leq s}$, for an arbitrary number $s\in\mathbb{R}$. We formally establish this equation in Lemma~\ref{lemma:noisy_score_distribution} with a rigorous proof. This correlation motivates the following approximation: 
$$\id{S\leq s}\approx\frac{1}{1-\epsilon}\id{\tilde{S}\leq s}-\frac{\epsilon}{K(1-\epsilon)}\sum_{y=1}^K\id{S_y\leq s}.$$

The above decomposition suggests that the clean pinball loss can be approximated by replacing its indicator function with the above expression. Inspired by this, we propose the \textit{robust pinball loss} as:
\begin{equation}\label{eq:noise_pinball_loss}
\begin{aligned}
\tilde{l}_{1-\alpha}(\tau,\tilde{S},\{S_y\}_{y=1}^K)
=\frac{1}{1-\epsilon}l_{1-\alpha}(\tau,\tilde{S})
-\frac{\epsilon}{K(1-\epsilon)}\sum_{y=1}^Kl_{1-\alpha}(\tau,S_y).
\end{aligned}
\end{equation}
The following theoretical properties demonstrate how this loss function mitigates label noise bias:
\begin{proposition}\label{prop:noise_pinball_loss_good_approx}
The robust pinball loss defined in \Eqref{eq:noise_pinball_loss} satisfies the following two properties:
\begin{align*}
&(1) \E_{S}\off{l_{1-\alpha}(\tau,S)}=\E_{\tilde{S},S_y}\off{\tilde{l}_{1-\alpha}(\tau,\tilde{S},\{S_y\}_{y=1}^K)}, \\
&(2) \E_{S}\off{\nabla_{\tau}l_{1-\alpha}(\tau,S)}=\E_{\tilde{S},S_y}\off{\nabla_{\tau}\tilde{l}_{1-\alpha}(\tau,\tilde{S},\{S_y\}_{y=1}^K)}.
\end{align*}
\end{proposition}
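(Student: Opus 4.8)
The plan is to prove property (1) first and then obtain property (2) almost for free by differentiating under the expectation. The central tool is Lemma~\ref{lemma:noisy_score_distribution}, which gives the CDF identity $\P\{S\leq s\}=\frac{1}{1-\epsilon}\P\{\tilde{S}\leq s\}-\frac{\epsilon}{K(1-\epsilon)}\sum_{y=1}^K\P\{S_y\leq s\}$. First I would take expectations of the robust pinball loss and use linearity of expectation to write
\begin{align*}
\E\off{\tilde{l}_{1-\alpha}(\tau,\tilde{S},\{S_y\}_{y=1}^K)}
=\frac{1}{1-\epsilon}\E\off{l_{1-\alpha}(\tau,\tilde{S})}
-\frac{\epsilon}{K(1-\epsilon)}\sum_{y=1}^K\E\off{l_{1-\alpha}(\tau,S_y)}.
\end{align*}
The goal is then to show the right-hand side collapses to $\E\off{l_{1-\alpha}(\tau,S)}$.

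The key technical step is to rewrite $\E\off{l_{1-\alpha}(\tau,\cdot)}$ for any score variable purely as a functional of that variable's CDF, so that the CDF identity from the lemma can be applied linearly. I would do this by noting that the pinball loss $l_{1-\alpha}(\tau,s)=\alpha(\tau-s)\mathds{1}\{s\leq\tau\}+(1-\alpha)(s-\tau)\mathds{1}\{s\geq\tau\}$ depends on $s$ through the indicator $\mathds{1}\{s\leq\tau\}$ and through $s$ itself. The cleanest route is to expand $\E\off{l_{1-\alpha}(\tau,S)}=\alpha\,\E\off{(\tau-S)\mathds{1}\{S\leq\tau\}}+(1-\alpha)\E\off{(S-\tau)\mathds{1}\{S\geq\tau\}}$ and then use integration by parts (or the layer-cake identity) to express each expectation in terms of $\int \P\{S\leq s\}\,ds$ over the appropriate range; for example $\E\off{(\tau-S)\mathds{1}\{S\leq\tau\}}=\int_0^\tau\P\{S\leq s\}\,ds$ and $\E\off{(S-\tau)\mathds{1}\{S\geq\tau\}}=\int_\tau^1(1-\P\{S\leq s\})\,ds$, using boundedness of the scores in $[0,1]$ from Assumption~\ref{ass:score_bounded}. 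Once every expectation is a linear functional of the CDF, substituting the lemma's identity and matching the constant terms (the $\int 1\,ds$ pieces) verifies that the linear combination of CDFs for $\tilde{S}$ and the $S_y$'s reproduces exactly $\E\off{l_{1-\alpha}(\tau,S)}$, giving property (1).

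For property (2), since the robust pinball loss is an affine-in-loss combination and the subgradient of $l_{1-\alpha}(\tau,s)$ in $\tau$ is $\alpha\mathds{1}\{s\leq\tau\}-(1-\alpha)\mathds{1}\{s\geq\tau\}$ (equivalently $\mathds{1}\{s\leq\tau\}-(1-\alpha)$ off the kink), I would differentiate the CDF-functional representation with respect to $\tau$ and invoke the same linear CDF identity. The main obstacle I anticipate is the non-differentiability of the pinball loss at the kink $s=\tau$: the identity $\E\off{\nabla_\tau l_{1-\alpha}(\tau,S)}=\E\off{\nabla_\tau \tilde l_{1-\alpha}}$ holds provided each score distribution is non-atomic at $\tau$ (so the kink contributes zero and differentiation under the integral/expectation is justified). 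I would either invoke a mild continuity assumption on the score CDFs at $\tau$, or—more cleanly—observe that $\frac{d}{d\tau}\int_0^\tau\P\{\cdot\leq s\}\,ds=\P\{\cdot\leq\tau\}$ holds pointwise wherever the CDF is continuous, so differentiating the already-established functional identity in property (1) directly yields property (2) without separately handling subgradients.
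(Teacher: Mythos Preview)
Your proposal is correct and takes a genuinely cleaner route than the paper for property~(1). The paper expands $\E_S[l_{1-\alpha}(\tau,S)]$ term by term using both the density-type identity and the CDF identity from Lemma~\ref{lemma:noisy_score_distribution}, writing out four pieces (a)--(d) and reassembling them into the robust pinball loss. Your approach instead observes that $\E[l_{1-\alpha}(\tau,Z)]=\alpha\int_0^\tau F_Z(s)\,ds+(1-\alpha)\int_\tau^1(1-F_Z(s))\,ds$ is an affine functional of the CDF $F_Z$, so the single CDF identity $F_S=\tfrac{1}{1-\epsilon}F_{\tilde S}-\tfrac{\epsilon}{K(1-\epsilon)}\sum_y F_{S_y}$ plus the trivial check $\tfrac{1}{1-\epsilon}-\tfrac{\epsilon}{1-\epsilon}=1$ on the constant piece finishes the argument in one line. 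This buys you a shorter, more conceptual proof that works for any loss whose expectation is affine in the CDF.

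For property~(2) the comparison reverses. The paper simply computes $\E[\nabla_\tau\tilde l_{1-\alpha}]$ directly from the subgradient formula, applies the CDF identity to the indicator expectations, and reads off $\alpha\P\{S\leq\tau\}-(1-\alpha)\P\{S>\tau\}=\E[\nabla_\tau l_{1-\alpha}(\tau,S)]$; no differentiability issue ever arises. Your route of differentiating the identity in~(1) is valid---the integrals $\int_0^\tau F_Z$ and $\int_\tau^1(1-F_Z)$ are differentiable in $\tau$ with derivative $F_Z(\tau)$ and $-(1-F_Z(\tau))$ for every bounded $F_Z$---but it requires the extra observation that $\nabla_\tau\E[l_{1-\alpha}(\tau,Z)]=\E[\nabla_\tau l_{1-\alpha}(\tau,Z)]$ (i.e., both equal $F_Z(\tau)-(1-\alpha)$), which is exactly the kink concern you raise. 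The paper's direct computation sidesteps that step entirely, so for~(2) it is the more economical argument.
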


\paragraph{Interpretation.} The proof can be found in Appendix~\ref{proof:prop:noise_pinball_loss_good_approx}. In Proposition~\ref{prop:noise_pinball_loss_good_approx}, the first property ensures that our robust pinball loss matches the expected value of the true pinball loss, while the second guarantees that the gradients of both losses have the same expectation. These properties establish that updating the threshold with robust pinball loss is equivalent to updating with clean pinball loss in expectation. 

In summary, Proposition~\ref{prop:noise_pinball_loss_good_approx} establishes the validity of our robust pinball loss \textit{in expectation}. 
Notably, our robust pinball loss offers a general solution to the uniform label noise, and is complementary to existing online conformal prediction methods.
In the following sections, we apply the robust pinball loss to ACI with constant \citep{gibbs2021adaptive} and dynamic learning rates \citep{DBLP:conf/icml/AngelopoulosBB24}.
We will show that by updating the threshold with the proposed loss, ACI eliminates the coverage gap caused by the label noise.

\subsection{Convergence with constant learning rate}

We now analyze the convergence of the coverage rate of ACI under uniform label noise with a constant learning rate schedule \citep{gibbs2021adaptive}. 
In particular, we update the threshold of ACI with respect to the robust pinball loss:
\begin{equation}\label{eq:ulnrocp_update_rule}
\begin{aligned}
\hat{\tau}_{t+1}
&=\hat{\tau}_{t}-\eta\cdot\nabla_{\hat{\tau}_{t}}\tilde{l}_{1-\alpha}(\hat{\tau}_{t},\tilde{S}_t,\{S_{t,y}\}_{y=1}^K) \\
&=\hat{\tau}_{t}
-\eta\cdot\frac{1}{1-\epsilon}\off{\id{\tilde{S}_t\leq\hat{\tau}_{t}}
-(1-\alpha)}
+\eta\cdot\frac{\epsilon}{K(1-\epsilon)}\sum_{y=1}^K\off{\id{S_{t,y}\leq\hat{\tau}_{t}}-(1-\alpha)}.
\end{aligned}
\end{equation}
For notation shorthand, we denote:
\begin{align*}
&\E\off{\nabla_{\hat{\tau}_t}l_{1-\alpha}}
=\E_{S_t}\off{\nabla_{\hat{\tau}_t}l_{1-\alpha}(\hat{\tau}_t,S_t)},
&\E\off{\nabla_{\hat{\tau}_t}\tilde{l}_{1-\alpha}}
=\E_{\tilde{S}_t,S_{t,y}}\off{\nabla_{\hat{\tau}_t}\tilde{l}_{1-\alpha}(\hat{\tau}_t,\tilde{S}_t,\{S_{t,y}\}_{y=1}^K)}.
\end{align*}
We first present the results for expected coverage error:
\begin{proposition}\label{prop:constant_eta_exerr_bound}
Under the same assumptions in Proposition~\ref{prop:noise_online}, when updating the threshold according to \Eqref{eq:ulnrocp_update_rule}, then for any $\delta\in(0,1)$ and $T\in\mathbb{N}^{+}$, the following bound holds with probability at least $1-\delta$:
\begin{align*}
\mathrm{ExErr(T)}
\leq\sqrt{\frac{\log(2/\delta)}{1-\epsilon}}\cdot\frac{1}{\sqrt{T}}
+\of{\frac{1}{\eta}-\alpha+\frac{\epsilon}{1-\epsilon}}\cdot\frac{1}{T}.
\end{align*}
\end{proposition}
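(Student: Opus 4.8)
The plan is to bound $\mathrm{ExErr}(T)$ by analyzing the telescoping sum of threshold updates and relating it to the expected mis-coverage rate. First I would unroll the update rule in \Eqref{eq:ulnrocp_update_rule}. Summing the threshold increments from $t=1$ to $T$ gives a telescoping identity: $\hat{\tau}_{T+1}-\hat{\tau}_1 = -\eta\sum_{t=1}^T \nabla_{\hat{\tau}_t}\tilde{l}_{1-\alpha}$. Rearranging yields $\frac{1}{T}\sum_{t=1}^T \nabla_{\hat{\tau}_t}\tilde{l}_{1-\alpha} = \frac{\hat{\tau}_1 - \hat{\tau}_{T+1}}{\eta T}$. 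Since both $\hat{\tau}_1$ and $\hat{\tau}_{T+1}$ must lie in a bounded range — by Assumptions~\ref{ass:score_bounded} and \ref{ass:threshold_initialization} together with the fact that the gradient of the robust pinball loss is bounded, so the iterates cannot escape an interval of controlled width — the right-hand side is $O(1/(\eta T))$. This is where the deterministic $\frac{1}{\eta T}$ term in the bound originates, and I would track the constants carefully to extract the $\of{\tfrac{1}{\eta}-\alpha+\tfrac{\epsilon}{1-\epsilon}}\cdot\tfrac{1}{T}$ contribution.

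Next I would invoke the key property of the robust pinball loss from Proposition~\ref{prop:noise_pinball_loss_good_approx}(2): the expected gradient of the robust loss equals the expected gradient of the \emph{clean} pinball loss, i.e. $\E\off{\nabla_{\hat{\tau}_t}\tilde{l}_{1-\alpha}} = \E\off{\nabla_{\hat{\tau}_t}l_{1-\alpha}} = \alpha - \P\offf{Y_t\notin\C_t(X_t)}$. This is the conceptual heart of the argument: it converts the empirical average of robust gradients into something whose conditional expectation is precisely the clean coverage error $\alpha - \P\offf{Y_t\notin\C_t(X_t)}$, with no residual noise bias. The difficulty is that the telescoping identity controls the empirical average of the \emph{realized} gradients $\nabla_{\hat{\tau}_t}\tilde{l}_{1-\alpha}$, whereas $\mathrm{ExErr}(T)$ is defined through the \emph{expected} mis-coverage, so I must bridge the empirical and the expected quantities.

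To close that gap I would set up a martingale difference sequence. Define $D_t = \nabla_{\hat{\tau}_t}\tilde{l}_{1-\alpha} - \E_t\off{\nabla_{\hat{\tau}_t}\tilde{l}_{1-\alpha}}$, where $\E_t$ denotes conditioning on the history up to time $t$ (which determines $\hat{\tau}_t$). Each $D_t$ has zero conditional mean by construction and is bounded because the robust gradient is bounded in magnitude — here the factor $\frac{1}{1-\epsilon}$ from \Eqref{eq:ulnrocp_update_rule} enters the bound on $|D_t|$, explaining the $\sqrt{1/(1-\epsilon)}$ scaling in the final result. Applying the Azuma--Hoeffding inequality to $\frac{1}{T}\sum_{t=1}^T D_t$ gives, with probability at least $1-\delta$, a deviation of order $\sqrt{\tfrac{\log(2/\delta)}{(1-\epsilon)T}}$, matching the leading $\sqrt{\tfrac{\log(2/\delta)}{1-\epsilon}}\cdot\tfrac{1}{\sqrt{T}}$ term. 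Combining the telescoping bound, the expectation identity, and the concentration inequality via the triangle inequality then yields the stated bound on $\mathrm{ExErr}(T) = \abs{\frac{1}{T}\sum_{t=1}^T\P\offf{Y_t\notin\C_t(X_t)} - \alpha}$.

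The main obstacle I anticipate is the bounded-iterate argument: I must verify that $\hat{\tau}_{T+1}$ stays within a controlled interval despite the robust gradient being able to take values outside $[-1,1]$ (because of the $\frac{1}{1-\epsilon}$ amplification and the sum over $K$ classes). A careful inspection of \Eqref{eq:ulnrocp_update_rule} shows the per-step increment is bounded by a constant depending on $\eta$, $\alpha$, and $\epsilon$, so $\hat{\tau}_{T+1}$ cannot drift arbitrarily far from $[0,1]$ in a single step; pinning down the exact constant is what produces the precise $-\alpha+\tfrac{\epsilon}{1-\epsilon}$ correction in the $O(1/T)$ term, and getting that bookkeeping right is the delicate part of the proof.
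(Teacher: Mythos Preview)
Your proposal is correct and follows essentially the same approach as the paper: telescope the update rule to bound $\bigl|\sum_t \nabla_{\hat{\tau}_t}\tilde{l}_{1-\alpha}\bigr|$ via a bounded-iterate lemma (the paper's Lemma~\ref{lemma:noise_hat_tau_bounded}), apply Azuma--Hoeffding to the martingale differences $D_t$ (the paper's Lemma~\ref{lemma:noise_coverage_error_concentration}), and invoke Proposition~\ref{prop:noise_pinball_loss_good_approx}(2) to identify the expected robust gradient with the clean coverage error. The obstacle you flag --- verifying that $\hat{\tau}_t$ stays in a controlled interval despite the $\tfrac{1}{1-\epsilon}$ amplification --- is exactly what the paper isolates as a separate induction lemma, and its three-case argument (depending on whether $\hat{\tau}_T\in[0,1]$, $\hat{\tau}_T>1$, or $\hat{\tau}_T<0$) yields the precise constant $\tfrac{1}{\eta}-\alpha+\tfrac{\epsilon}{1-\epsilon}$ you anticipated.
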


\paragraph{Proof Sketch.} The proof (in Appendix~\ref{proof:prop:constant_eta_exerr_bound}) relies on the following decomposition:
\begin{align*}
\quad\mathrm{ExErr(T)} 
=\abs{\sum_{t=1}^T\E\off{\nabla_{\hat{\tau}_t}l_{1-\alpha}}}
\leq\underbrace{\abs{\sum_{t=1}^T\E\off{\nabla_{\hat{\tau}_t}\tilde{l}_{1-\alpha}}
-\sum_{t=1}^T\nabla_{\hat{\tau}_t}\tilde{l}_{1-\alpha}}}_{(a)}
+\underbrace{\abs{\sum_{t=1}^T\nabla_{\hat{\tau}_t}\tilde{l}_{1-\alpha}}}_{(b)}.
\end{align*}
Part (a) converges to zero in probability at rate $\mathcal{O}(T^{-1/2})$ by the Azuma–Hoeffding inequality, and part (b) achieves a convergence rate of $\mathcal{O}(T^{-1})$ following standard online conformal prediction theory. Combining the two parts establishes the desired upper bound.

Building on Proposition~\ref{prop:constant_eta_exerr_bound}, we now provide an upper bound for the empirical coverage error:
\begin{proposition}\label{prop:constant_eta_emerr_bound}
Under the same assumptions in Proposition~\ref{prop:noise_online}, when updating the threshold according to \Eqref{eq:ulnrocp_update_rule}, then for any $\delta\in(0,1)$ and $T\in\mathbb{N}^{+}$, the following bound holds with probability at least $1-\delta$:
\begin{align*}
\mathrm{EmErr(T)}
\leq\,\frac{2-\epsilon}{1-\epsilon}\sqrt{2\log\of{\frac{4}{\delta}}}\cdot\frac{1}{\sqrt{T}}
+\of{\frac{1}{\eta}-\alpha+\frac{\epsilon}{1-\epsilon}}\cdot\frac{1}{T}.
\end{align*}
\end{proposition}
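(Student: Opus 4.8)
The plan is to reduce $\mathrm{EmErr}(T)$ to the sum of a martingale concentration term and $\mathrm{ExErr}(T)$, the latter being already controlled by Proposition~\ref{prop:constant_eta_exerr_bound}. Writing $\mathcal{F}_{t-1}$ for the $\sigma$-algebra generated by all data seen up to time $t-1$---so that $\hat{\tau}_t$, and hence $\C_t(X_t)$, is $\mathcal{F}_{t-1}$-measurable and $\P\offf{Y_t\notin\C_t(X_t)}$ is read as the conditional probability given $\mathcal{F}_{t-1}$---I would first apply the triangle inequality to obtain
\begin{align*}
\mathrm{EmErr}(T)
\leq\abs{\frac{1}{T}\sum_{t=1}^T\of{\id{Y_t\notin\C_t(X_t)}-\P\offf{Y_t\notin\C_t(X_t)}}}
+\mathrm{ExErr}(T).
\end{align*}
This isolates a concentration term (the deviation of the empirical mis-coverage from its conditional mean) from the expected coverage error.

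Next I would bound the concentration term by Azuma--Hoeffding. Setting $Z_t=\id{Y_t\notin\C_t(X_t)}-\P\offf{Y_t\notin\C_t(X_t)}$, the fact that $\hat{\tau}_t$ is $\mathcal{F}_{t-1}$-measurable while $(X_t,Y_t)$ is drawn independently of the past gives $\E\off{Z_t\mid\mathcal{F}_{t-1}}=0$, so $(Z_t)$ is a martingale difference sequence adapted to $(\mathcal{F}_t)$; since $Z_t$ is a difference of two quantities lying in $[0,1]$, we have $\abs{Z_t}\leq 1$. Allocating half of the failure budget to this term, the Azuma--Hoeffding inequality then yields
\begin{align*}
\abs{\frac{1}{T}\sum_{t=1}^T Z_t}\leq\sqrt{\frac{2\log(4/\delta)}{T}}
\end{align*}
with probability at least $1-\delta/2$.

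Finally I would invoke Proposition~\ref{prop:constant_eta_exerr_bound} with confidence parameter $\delta/2$, which bounds $\mathrm{ExErr}(T)$ by $\sqrt{\log(4/\delta)/(1-\epsilon)}\cdot T^{-1/2}+\of{\frac{1}{\eta}-\alpha+\frac{\epsilon}{1-\epsilon}}T^{-1}$ with probability at least $1-\delta/2$; a union bound makes both events hold together with probability at least $1-\delta$. The $T^{-1}$ term then carries over verbatim, while the two $T^{-1/2}$ contributions must be merged into the single coefficient $\frac{2-\epsilon}{1-\epsilon}\sqrt{2\log(4/\delta)}$. The only genuinely fiddly step is the elementary inequality
\begin{align*}
\sqrt{2}+\frac{1}{\sqrt{1-\epsilon}}\leq\frac{2-\epsilon}{1-\epsilon}\sqrt{2},
\end{align*}
which after subtracting $\sqrt{2}$ from both sides reduces to $\sqrt{1-\epsilon}\leq\sqrt{2}$ and hence holds for all $\epsilon\in(0,1)$; multiplying through by $\sqrt{\log(4/\delta)}$ recovers exactly the claimed coefficient.

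I expect the main obstacle to be bookkeeping rather than any hard estimate: correctly treating $\P\offf{Y_t\notin\C_t(X_t)}$ as a conditional probability so that $(Z_t)$ is a bona fide bounded martingale difference, splitting the failure probability so that both high-probability events carry $\log(4/\delta)$, and verifying that the combined sub-Gaussian coefficient is dominated by $\frac{2-\epsilon}{1-\epsilon}\sqrt{2}$. Once these are handled, the stated bound follows immediately.
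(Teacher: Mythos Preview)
Your approach is correct and essentially identical to the paper's: both decompose $\mathrm{EmErr}(T)$ into a bounded martingale-difference concentration term (handled by Azuma--Hoeffding) plus the expected coverage error from Proposition~\ref{prop:constant_eta_exerr_bound}, then combine via a union bound at level $\delta/2$ each. The only cosmetic difference is that the paper invokes the intermediate bound $\frac{\sqrt{2T\log(4/\delta)}}{1-\epsilon}$ from inside the proof of Proposition~\ref{prop:constant_eta_exerr_bound} rather than its stated form, so the coefficient $\frac{2-\epsilon}{1-\epsilon}$ emerges exactly as $1+\frac{1}{1-\epsilon}$ without needing your final elementary inequality.
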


\paragraph{Proof Sketch.} The proof (detailed in Appendix~\ref{proof:prop:constant_eta_emerr_bound}) employs a similar decomposition:
\begin{align*}
\quad\mathrm{EmErr(T)} 
=\abs{\sum_{t=1}^T\nabla_{\hat{\tau}_t}l_{1-\alpha}} 
\leq\underbrace{\abs{\sum_{t=1}^T\nabla_{\hat{\tau}_t}l_{1-\alpha}
-\sum_{t=1}^T\E\off{\nabla_{\hat{\tau}_t}l_{1-\alpha}}}}_{(a)}
+\underbrace{\abs{\sum_{t=1}^T\E\off{\nabla_{\hat{\tau}_t}l_{1-\alpha}}}}_{(b)}.
\end{align*}
The analysis shows that both terms achieve $\mathcal{O}(T^{-1/2})$ convergence: part (a) through the Azuma–Hoeffding inequality, and part (b) via Proposition~\ref{prop:constant_eta_exerr_bound}.

\begin{remark}
It is worth noting that our method achieves a $\mathcal{O}(T^{-1/2})$ convergence rate for empirical coverage error even in the absence of noise (i.e., $\epsilon=0$), which is slightly slower than the $\mathcal{O}(T^{-1})$ rate achieved by standard online conformal prediction theory \citep{gibbs2021adaptive, DBLP:conf/icml/AngelopoulosBB24}. 
This is because our analysis relies on martingale-based concentration to handle label noise, leading to the $\mathcal{O}(T^{-1/2})$ rate.
\end{remark}

\subsection{Convergence with dynamic learning rate}
Recent work \citep{DBLP:conf/icml/AngelopoulosBB24} highlights a limitation of constant learning rates: while coverage holds on average over time, the \textit{instantaneous} coverage rate $\mathrm{Cov}(\hat{\tau}_t)=\P\offf{S\leq\hat{\tau}_t}$ would exhibit substantial temporal variability (see Proposition 1 in \citep{DBLP:conf/icml/AngelopoulosBB24}). 
Thus, they extend ACI to \textit{dynamic} learning rate schedule where $\eta_t$ can change over time for updating the threshold.
In this section, we apply the robust pinball loss to the dynamic learning rate schedule.
Specifically, we update the threshold by:
\begin{equation}\label{eq:changing_lr_ulnrocp_update_rule}
\begin{aligned}
\hat{\tau}_{t+1}
&=\hat{\tau}_{t}-\eta_t\cdot\nabla_{\hat{\tau}_{t}}\tilde{l}_{1-\alpha}(\hat{\tau}_{t},\tilde{S}_t,\{S_{t,y}\}_{y=1}^K) \\
&=\hat{\tau}_{t}
-\eta_t\cdot\frac{1}{1-\epsilon}\off{\id{\tilde{S}_t\leq\hat{\tau}_{t}}
-(1-\alpha)}
+\eta_t\cdot\frac{\epsilon}{K(1-\epsilon)}\sum_{y=1}^K\off{\id{S_{t,y}\leq\hat{\tau}_{t}}-(1-\alpha)}.
\end{aligned}
\end{equation}
For the convergence of coverage rate, our theoretical results show that, under dynamic learning rate, NR-OCP achieves convergence rates of $\mathrm{EmCovErr(T)}=\mathcal{O}(T^{-1/2})$ and $\mathrm{ExCovErr(T)}=\mathcal{O}(T^{-1/2})$. The proofs are presented in Appendix~\ref{proof:prop:changing_lr_exerr_bound} and \ref{proof:prop:changing_lr_emerr_bound}.

\begin{proposition}\label{prop:changing_lr_exerr_bound}
Under the same assumptions in Proposition~\ref{prop:noise_online}, when updating the threshold according to \Eqref{eq:changing_lr_ulnrocp_update_rule}, then for any $\delta\in(0,1)$ and $T\in\mathbb{N}^{+}$, the following bound holds with probability at least $1-\delta$:
\begin{align*}
\mathrm{ExErr(T)}
\leq\sqrt{\frac{\log(2/\delta)}{1-\epsilon}}\cdot\frac{1}{\sqrt{T}}
+\off{\of{1+\max_{1\leq t\leq T-1}\eta_t\cdot\frac{1+\epsilon}{1-\epsilon}}\sum_{t=1}^T\abs{\eta_t^{-1}-\eta_{t-1}^{-1}}}\cdot\frac{1}{T}.
\end{align*}
\end{proposition}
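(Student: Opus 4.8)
The plan is to reuse the decomposition behind the constant-rate bound (Proposition~\ref{prop:constant_eta_exerr_bound}) and to localize the effect of the dynamic schedule into a single deterministic ``regret'' term. First I would rewrite the expected coverage error through gradients. Writing $\mathcal{F}_{t-1}$ for the natural filtration generated by the data observed before time $t$, the iterate $\hat{\tau}_t$ is $\mathcal{F}_{t-1}$-measurable, so the conditional mean of the clean pinball gradient is $\E[\nabla_{\hat{\tau}_t} l_{1-\alpha}(\hat{\tau}_t,S_t)\mid\mathcal{F}_{t-1}] = \P\{S_t\leq\hat{\tau}_t\mid\mathcal{F}_{t-1}\}-(1-\alpha)=\alpha-\P\{Y_t\notin\C_t(X_t)\}$, which gives $\mathrm{ExErr}(T)=\big|\tfrac1T\sum_{t=1}^T\E[\nabla_{\hat{\tau}_t}l_{1-\alpha}]\big|$. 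Invoking Proposition~\ref{prop:noise_pinball_loss_good_approx}(2), which equates the conditional means of the clean and robust pinball gradients, I replace $\E[\nabla_{\hat{\tau}_t}l_{1-\alpha}]$ by $\E[\nabla_{\hat{\tau}_t}\tilde{l}_{1-\alpha}]$ and split
$$\mathrm{ExErr}(T)\leq\underbrace{\Big|\tfrac1T\sum_{t=1}^T\big(\E[\nabla_{\hat{\tau}_t}\tilde{l}_{1-\alpha}]-\nabla_{\hat{\tau}_t}\tilde{l}_{1-\alpha}\big)\Big|}_{(a)}+\underbrace{\Big|\tfrac1T\sum_{t=1}^T\nabla_{\hat{\tau}_t}\tilde{l}_{1-\alpha}\Big|}_{(b)}.$$

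Term $(a)$ does not see the learning-rate schedule, so it is handled exactly as in the constant-rate case. The summands $M_t=\E[\nabla_{\hat{\tau}_t}\tilde{l}_{1-\alpha}\mid\mathcal{F}_{t-1}]-\nabla_{\hat{\tau}_t}\tilde{l}_{1-\alpha}$ form a bounded martingale difference sequence, since under Assumption~\ref{ass:score_bounded} the realized robust-pinball gradient lies in a bounded interval of width $\tfrac{1+\epsilon}{1-\epsilon}$. I would therefore reuse the Azuma--Hoeffding step from the proof of Proposition~\ref{prop:constant_eta_exerr_bound} verbatim to get $(a)\leq\sqrt{\log(2/\delta)/(1-\epsilon)}\cdot T^{-1/2}$ with probability at least $1-\delta$, which is precisely the first term of the claim.

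The substance is term $(b)$, the deterministic regret of the dynamic schedule. Here I would use \Eqref{eq:changing_lr_ulnrocp_update_rule} to write $\nabla_{\hat{\tau}_t}\tilde{l}_{1-\alpha}=(\hat{\tau}_t-\hat{\tau}_{t+1})/\eta_t$ and apply summation by parts to $\sum_{t=1}^T\eta_t^{-1}(\hat{\tau}_t-\hat{\tau}_{t+1})$, which produces a telescoping boundary contribution together with $\sum_t(\eta_t^{-1}-\eta_{t-1}^{-1})\hat{\tau}_t$. To control these I first establish that the iterates stay uniformly bounded. Because all scores lie in $[0,1]$, the robust-pinball gradient equals $\alpha>0$ whenever $\hat{\tau}_t\geq1$ and equals $-(1-\alpha)<0$ whenever $\hat{\tau}_t\leq0$, so the update is self-correcting toward $[0,1]$, while a single step moves the threshold by at most $\eta_t\,|\nabla_{\hat{\tau}_t}\tilde{l}_{1-\alpha}|\leq\eta_t\tfrac{1+\epsilon}{1-\epsilon}$. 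An induction from $\hat{\tau}_1\in[0,1]$ (Assumption~\ref{ass:threshold_initialization}) then confines every iterate to $\big[-\eta_{\max}\tfrac{1+\epsilon}{1-\epsilon},\,1+\eta_{\max}\tfrac{1+\epsilon}{1-\epsilon}\big]$ with $\eta_{\max}=\max_{1\leq t\leq T-1}\eta_t$, hence $|\hat{\tau}_t|\leq1+\eta_{\max}\tfrac{1+\epsilon}{1-\epsilon}$. Bounding each $|\hat{\tau}_t|$ by this diameter and collecting the increments $|\eta_t^{-1}-\eta_{t-1}^{-1}|$ (with the convention $\eta_0^{-1}=0$ absorbing the leading boundary term) yields $(b)\leq\big(1+\eta_{\max}\tfrac{1+\epsilon}{1-\epsilon}\big)\sum_{t=1}^T|\eta_t^{-1}-\eta_{t-1}^{-1}|\cdot T^{-1}$, the second term of the claim. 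Since $(b)$ is deterministic, adding it to the high-probability bound on $(a)$ finishes the proof.

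I expect the main obstacle to be term $(b)$, and within it the uniform boundedness lemma: verifying the sign of the robust-pinball gradient at the two endpoints and propagating the one-step displacement bound through the induction is exactly what makes the diameter $1+\eta_{\max}\tfrac{1+\epsilon}{1-\epsilon}$ emerge, and the summation-by-parts bookkeeping---in particular matching the telescoping boundary terms to $\sum_t|\eta_t^{-1}-\eta_{t-1}^{-1}|$---is the delicate step. By contrast, term $(a)$ is essentially inherited from the constant-rate analysis, which is why its contribution is unchanged across the two learning-rate schedules.
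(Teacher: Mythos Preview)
Your plan is correct and matches the paper's proof: the same $(a)+(b)$ split, Azuma--Hoeffding for $(a)$, and Abel summation combined with an iterate-boundedness induction for $(b)$. One bookkeeping refinement worth noting: the paper writes the Abel summation directly as $\sum_{t}(\eta_t^{-1}-\eta_{t-1}^{-1})(\hat{\tau}_T-\hat{\tau}_t)$, so each summand involves a \emph{difference} of two iterates, which is bounded by the \emph{diameter} $1+\eta_{\max}\tfrac{1+\epsilon}{1-\epsilon}$ of the (slightly tighter, $\alpha$-dependent) iterate interval and simultaneously absorbs both boundary terms; your version, bounding $|\hat{\tau}_t|$ individually, leaves a dangling trailing term $-\eta_T^{-1}\hat{\tau}_{T+1}$ that you would still need to fold back in (via $\eta_T^{-1}=\sum_t(\eta_t^{-1}-\eta_{t-1}^{-1})$) to recover the stated constant without an extra factor of $2$.
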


\begin{proposition}\label{prop:changing_lr_emerr_bound}
Under the same assumptions in Proposition~\ref{prop:noise_online}, when updating the threshold according to \Eqref{eq:changing_lr_ulnrocp_update_rule}, then for any $\delta\in(0,1)$ and $T\in\mathbb{N}^{+}$, the following bound holds with probability at least $1-\delta$:
\begin{align*}
\mathrm{EmErr(T)}
\leq\frac{2-\epsilon}{1-\epsilon}\sqrt{2\log\of{\frac{4}{\delta}}}\cdot\frac{1}{\sqrt{T}}
+\off{\of{1+\max_{1\leq t\leq T-1}\eta_t\cdot\frac{1+\epsilon}{1-\epsilon}}\sum_{t=1}^T\abs{\eta_t^{-1}-\eta_{t-1}^{-1}}}\cdot\frac{1}{T}.
\end{align*}
\end{proposition}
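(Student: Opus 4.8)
The plan is to bound $\mathrm{EmErr}(T)=\abs{\sum_{t=1}^T\nabla_{\hat{\tau}_t}l_{1-\alpha}}$ by splitting it, exactly as in the proof sketch for Proposition~\ref{prop:constant_eta_emerr_bound}, into a stochastic deviation term and an expected-gradient term:
\begin{align*}
\mathrm{EmErr}(T)
\leq\underbrace{\abs{\sum_{t=1}^T\nabla_{\hat{\tau}_t}l_{1-\alpha}-\sum_{t=1}^T\E\off{\nabla_{\hat{\tau}_t}l_{1-\alpha}}}}_{(a)}
+\underbrace{\abs{\sum_{t=1}^T\E\off{\nabla_{\hat{\tau}_t}l_{1-\alpha}}}}_{(b)}.
\end{align*}
For part $(a)$, I would define the martingale difference sequence $D_t=\nabla_{\hat{\tau}_t}l_{1-\alpha}-\E[\nabla_{\hat{\tau}_t}l_{1-\alpha}]$, where the conditional expectation is taken given the history up to time $t$. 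Since $\hat{\tau}_t$ is $\mathcal{F}_{t-1}$-measurable and the clean gradient $\nabla_{\hat{\tau}_t}l_{1-\alpha}(\hat{\tau}_t,S_t)=\id{S_t\leq\hat{\tau}_t}-(1-\alpha)$ lies in an interval of length one, each $D_t$ is bounded, and the Azuma--Hoeffding inequality yields $(a)=\mathcal{O}(T^{-1/2})$ with the constant $\frac{2-\epsilon}{1-\epsilon}\sqrt{2\log(4/\delta)}$ matching the claimed first term (with an appropriate $\delta/2$ split from part $(b)$).

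The substantive work is part $(b)$, the bound on the expected clean gradient under the dynamic learning rate. Here I would first invoke Proposition~\ref{prop:noise_pinball_loss_good_approx}(2) to replace $\E[\nabla_{\hat{\tau}_t}l_{1-\alpha}]$ by $\E[\nabla_{\hat{\tau}_t}\tilde{l}_{1-\alpha}]$ termwise, so that $(b)=\abs{\sum_{t=1}^T\E[\nabla_{\hat{\tau}_t}\tilde{l}_{1-\alpha}]}$. The idea is then to relate the summed expected robust gradient to the telescoping of the threshold updates in \Eqref{eq:changing_lr_ulnrocp_update_rule}. Rearranging the update gives $\nabla_{\hat{\tau}_t}\tilde{l}_{1-\alpha}=\eta_t^{-1}(\hat{\tau}_t-\hat{\tau}_{t+1})$; summing against the time-varying $\eta_t^{-1}$ and applying Abel summation (summation by parts) produces boundary terms controlled by $\hat{\tau}_1,\hat{\tau}_{T+1}\in[\text{bounded range}]$ together with the variation sum $\sum_{t=1}^T\abs{\eta_t^{-1}-\eta_{t-1}^{-1}}$. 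This is precisely where the factor $\sum_{t=1}^T\abs{\eta_t^{-1}-\eta_{t-1}^{-1}}$ in the statement originates, and it is the dynamic-learning-rate analogue of the $\frac{1}{\eta}$ term in the constant-rate Proposition~\ref{prop:constant_eta_emerr_bound}.

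The main obstacle will be controlling the range within which $\hat{\tau}_t$ can travel under the robust update, since the robust gradient in \Eqref{eq:changing_lr_ulnrocp_update_rule} is no longer bounded in $[-1,1]$: the correction term $\frac{\epsilon}{K(1-\epsilon)}\sum_{y=1}^K[\id{S_{t,y}\leq\hat{\tau}_t}-(1-\alpha)]$ inflates the effective step size, so a single update can move $\hat{\tau}_t$ by as much as $\eta_t\cdot\frac{1}{1-\epsilon}(1+\epsilon)$ in magnitude. I expect this is why the bound carries the factor $\bigl(1+\max_t\eta_t\cdot\frac{1+\epsilon}{1-\epsilon}\bigr)$: one must show that $\hat{\tau}_t$ stays within an $\mathcal{O}(\max_t\eta_t\cdot\frac{1+\epsilon}{1-\epsilon})$-enlargement of $[0,1]$, so that the boundary and telescoping terms in the Abel summation remain uniformly bounded. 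Once this excursion bound is established and combined with the variation sum, parts $(a)$ and $(b)$ together give the stated $\mathcal{O}(T^{-1/2})$ leading term plus the $\mathcal{O}(T^{-1})$ dynamic-rate correction, completing the proof.
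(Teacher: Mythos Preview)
Your approach is the same as the paper's, and the ingredients you list (Azuma--Hoeffding for (a), Proposition~\ref{prop:noise_pinball_loss_good_approx}(2) to pass to robust gradients, Abel summation on $\sum_t\eta_t^{-1}(\hat\tau_t-\hat\tau_{t+1})$, and an excursion bound on $\hat\tau_t$) are exactly right. There is, however, one missing step in your outline of part~(b), and it is the source of a bookkeeping error on the constant. Part~(a) alone cannot yield the constant $\frac{2-\epsilon}{1-\epsilon}$: the clean gradient $\id{S_t\leq\hat\tau_t}-(1-\alpha)$ lies in an interval of length~$1$, so Azuma--Hoeffding gives $(a)\leq\sqrt{2T\log(4/\delta)}$ with probability $1-\delta/2$, i.e.\ constant~$1$. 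The extra $\frac{1}{1-\epsilon}$ must come from inside~(b). The point is that you cannot telescope $\sum_t\E[\nabla_{\hat\tau_t}\tilde l_{1-\alpha}]$ directly, because the update rule \Eqref{eq:changing_lr_ulnrocp_update_rule} involves the \emph{realized} robust gradient, not its conditional expectation. You therefore need a second martingale concentration,
\[
\abs{\sum_{t=1}^T\E[\nabla_{\hat\tau_t}\tilde l_{1-\alpha}]-\sum_{t=1}^T\nabla_{\hat\tau_t}\tilde l_{1-\alpha}}\leq\frac{\sqrt{2T\log(4/\delta)}}{1-\epsilon},
\]
before the Abel summation applies; this is precisely the content of Proposition~\ref{prop:changing_lr_exerr_bound}, which the paper simply invokes for~(b). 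Summing the two concentration constants $1+\frac{1}{1-\epsilon}=\frac{2-\epsilon}{1-\epsilon}$ then gives the stated leading coefficient.
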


In Proposition~\ref{prop:changing_lr_exerr_bound} and \ref{prop:changing_lr_emerr_bound}, we establish that for any sequence of learning rates that satisfies the condition $\sum_{t=1}^T\abs{\eta_t^{-1}-\eta_{t-1}^{-1}}/T\to0$ as $T\to+\infty$, both expected and empirical coverage errors asymptotically vanish with a convergence rate of $\mathcal{O}(T^{-1/2})$. Therefore, by applying our method, the long-term coverage rate would approach the desired target level of $1-\alpha$. 

Online learning theory (see Theorem 2.13 of \citep{orabona2019modern}) established that ACI achieves a \textit{regret} bound: 
\begin{align*}
\mathrm{Reg}(T)
=\sum_{t=1}^Tl_{1-\alpha}(\hat{\tau}_t,S_t)-\min_{\tau}\sum_{t=1}^Tl_{1-\alpha}(\tau,S_t)  
=\mathcal{O}(T^{-\frac{1}{2}}),
\end{align*}
with optimally chosen $\eta_t$, which serves as a helpful measure alongside coverage \citep{bhatnagar2023improved}. We provide a regret analysis for our method in Appendix~\ref{appendix:regret_analysis}.
In addition, we formally establish the convergence of $\hat{\tau}_t$ towards the global minima in Appendix~\ref{appendix:tau_convergence_analysis} by extending the standard convergence analysis of stochastic gradient descent (see Theorem 5.3 in \citep{garrigos2023handbook}).

\begin{figure*}[t]
\centering
\begin{subfigure}[b]{\textwidth}
\centering
\includegraphics[width=\linewidth]{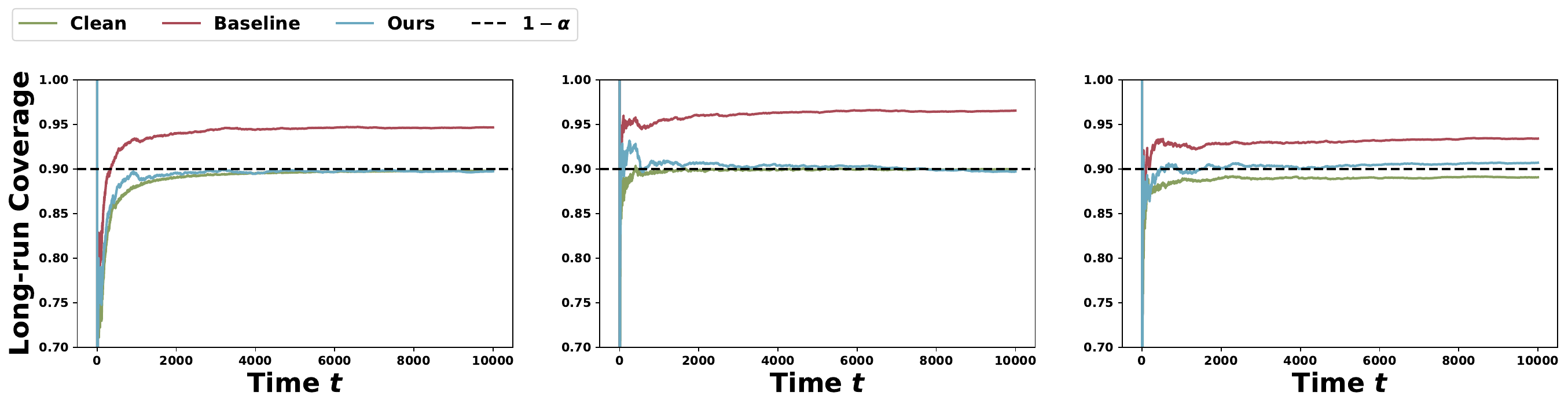}
\end{subfigure}
\begin{subfigure}[b]{\textwidth}
\centering
\includegraphics[width=\linewidth]{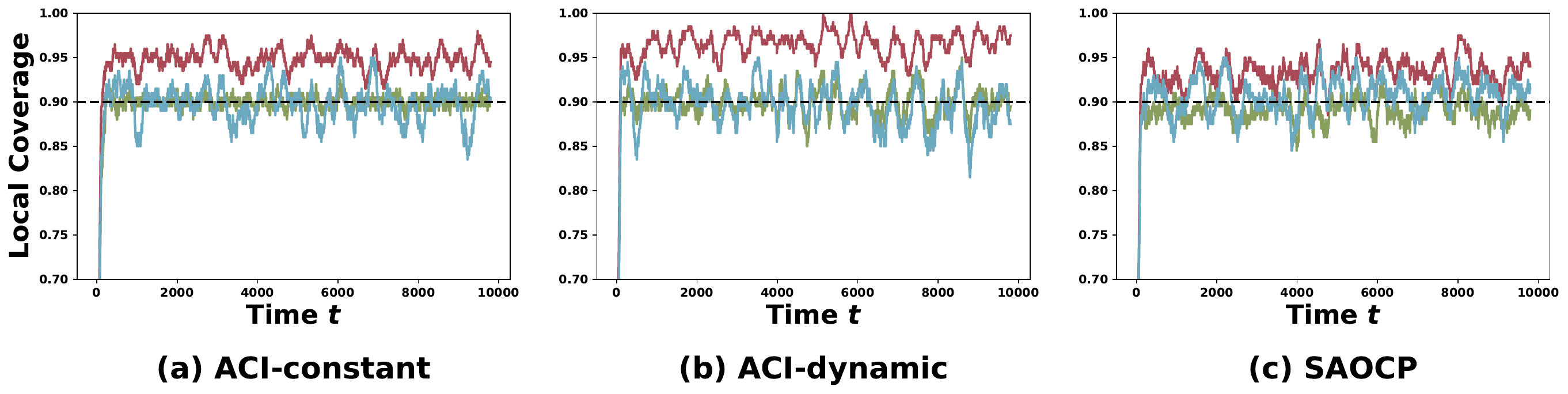}
\end{subfigure}
\caption{
Long-run coverage and local coverage performance of various methods under uniform noisy labels with noise rate $\epsilon=0.05$.
We apply robust pinball loss to ACI with (a) constant \citep{gibbs2021adaptive} and (b) dynamic learning rate \citep{DBLP:conf/icml/AngelopoulosBB24}, and (c) SAOCP \citep{bhatnagar2023improved}.
We employ LAC to generate prediction sets with $\alpha=0.1$, using ResNet18 on CIFAR100.
``Baseline'' and ``Clean'' denote the online conformal prediction with standard pinball loss, using noisy and clean labels. 
}
\label{fig:coverage_dynamic}
\end{figure*}

\section{Experiments}\label{section:experiment}
\subsection{Experimental setups}

\textbf{Datasets and setup.}\quad
We evaluate the performance of NR-OCP under uniform label noise. The experiments include both constant $\eta=0.05$ and dynamic learning rates $\eta_t=1/t^{1/2+\varepsilon}$ with $\varepsilon=0.1$, following prior work \citep{DBLP:conf/icml/AngelopoulosBB24}). We use CIFAR-100 \citep{krizhevsky2009learning} and ImageNet \citep{deng2009imagenet} datasets with synthetic label noise. On ImageNet, we use four pre-trained classifiers from TorchVision \citep{paszke2019pytorch} - ResNet18, ResNet50 \citep{he2016deep}, DenseNet121 \citep{huang2017densely} and VGG16 \citep{simonyan2014very}. On CIFAR-100, we train these models for 200 epochs using SGD with a momentum of 0.9, a weight decay of 0.0005, and a batch size of 128. We set the initial learning rate as 0.1, and reduce it by a factor of 5 at 60, 120 and 160 epochs. 

\textbf{Conformal prediction algorithms.}\quad
We integrate the robust pinball loss into ACI with constant \citep{gibbs2021adaptive} and dynamic learning rates \citep{DBLP:conf/icml/AngelopoulosBB24}, and strongly adaptive online conformal prediction \citep{bhatnagar2023improved}.
We apply LAC \citep{sadinle2019least}, APS \citep{romano2020classification}, RAPS \citep{angelopoulos2020uncertainty} and SAPS \citep{huang2024uncertainty} to generate prediction sets. 
A detailed description of these non-conformity scores is provided in Appendix~\ref{appendix:scores}. 
In addition, we use error rates $\alpha\in\{0.1,0.05\}$ in experiments.

\textbf{Metrics.}\quad
We employ four evaluation metrics: \textit{long-run coverage (Cov)}, \textit{local coverage (LocalCov)}, \textit{coverage gap (CovGap)} and \textit{prediction set size (Size)}. 
We use long-run coverage and local coverage to present the dynamics of coverage during the optimization. 
Formally, for any $T\in\mathbb{N}^{+}$,
\begin{align*}
& \mathrm{Cov}(T)=\frac{1}{T}\sum_{t=1}^T\id{Y_t\in\C_t(X_t)},\quad
\mathrm{LocalCov}(T)=\frac{1}{L}\sum_{t=T}^{T+L}\id{Y_t\in\C_t(X_t)}, 
\end{align*}
In particular, long-run coverage measures the coverage rate over the first $T$ time steps, while local coverage is over an interval with length $L$.
In the experiments, we employ a length of $L=200$.
Besides, the coverage gap and the prediction set size are computed over the full test set, indicating the final performance of the online conformal prediction method.
Formally, given a test dataset $\mathcal{I}_{test}$,
\begin{align*}
& \mathrm{CovGap}=\abs{\frac{1}{|\mathcal{I}_{test}|}\sum_{t\in\mathcal{I}_{test}}\id{Y_t\in\C_t(X_t)}-(1-\alpha)},\quad
\mathrm{Size}=\frac{1}{|\mathcal{I}_{test}|}\sum_{t\in\mathcal{I}_{test}}|\C_t(X_t)|.    
\end{align*}
Small prediction sets are preferred as they can provide precise predictions, thereby enabling accurate human decision-making in real-world scenarios \citep{DBLP:conf/icml/CresswellSKV24}.

\subsection{Main results}
\textbf{Robust pinball loss enhances the noise robustness of existing online conformal prediction methods.}\quad
In figure~\ref{fig:coverage_dynamic}, we apply robust pinball loss to ACI with constant \citep{gibbs2021adaptive} and dynamic learning rate \citep{DBLP:conf/icml/AngelopoulosBB24}, and SAOCP \citep{bhatnagar2023improved}.
We evaluate these methods with long-run coverage and local coverage.
We use the LAC score and a ResNet18 model on CIFAR100, with error rate $\alpha=0.1$ and noise rate $\epsilon=0.05$. 
The results demonstrate that the proposed robust pinball loss enables these methods to achieve a long-run coverage and local coverage close to the target coverage rate of $1-\alpha$.
In summary, the empirical results highlight the versatility of our robust pinball loss, demonstrating its potential to enhance the noise robustness across diverse algorithms.

\textbf{Robust pinball loss is effective across various settings.}\quad
In Table~\ref{table:method_average}, we incorporate robust pinball loss into ACI with constant \citep{gibbs2021adaptive} and dynamic learning rate \citep{DBLP:conf/icml/AngelopoulosBB24}, under different noise rate $\epsilon$, error rate $\alpha$ and non-conformity scores.
We evaluate the prediction sets with coverage gap and prediction set size.
Due to space constraints, we report the average performance across four non-conformity score functions. 
The performance on each score function is provided in Appendix~\ref{appendix:experiment_scores}. 
The results show that robust pinball loss allows ACI to eliminate the coverage gap, achieving precise coverage guarantees while significantly improving the long-run efficiency of prediction sets. 
For example, on ImageNet with error rate $\alpha=0.1$, noise rate $\epsilon=0.15$ and dynamic learning rate, ACI employing the robust pinball loss achieves a negligible coverage gap of 0.183\% and a prediction set size of 13.10.
We report additional results on various models in Appendix~\ref{appendix:experiemnt_models}.
Overall, the robust pinball loss is effective across different settings, including various label noise, error rate, non-conformity score functions, and model architectures.

In real-world scenarios where the noise rate could be unknown, it can be estimated from historical data \cite{liu2015classification, yu2018efficient, wei2020combating}.
In Appendix~\ref{appendix:misest_eps}, we leverage an existing approach to estimate the noise rate.
Then, we employ this estimated noise rate in our method.
We show that in this circumstance, the robust pinball loss can improve the noise robustness of online conformal prediction algorithms.

\begin{table*}[t]
\centering
\caption{Average performance of different methods under uniform noisy labels across 4 score functions, using ResNet18. 
The performance on each score function is provided in Appendix~\ref{appendix:experiment_scores}.
``Baseline'' denotes the ACI with standard pinball loss. We include two learning rate schedules: constant learning rate $\eta=0.05$ and dynamic learning rates $\eta_t=1/t^{1/2+\varepsilon}$ where $\varepsilon=0.1$. 
``$\downarrow$'' indicates smaller values are better and \textbf{Bold} numbers are superior results. } 
\label{table:method_average}
\renewcommand\arraystretch{0.85}
\resizebox{\textwidth}{!}{
\setlength{\tabcolsep}{1mm}{
\begin{tabular}{@{}cccccccccccccc@{}}
\toprule
\multirow{3}{*}{LR Schedule} & \multirow{3}{*}{Error rate}   & \multirow{3}{*}{Method} & \multicolumn{5}{c}{CIFAR100}                                 &  & \multicolumn{5}{c}{ImageNet}                                 \\ \cmidrule(lr){4-8} \cmidrule(l){10-14} 
&                               &                         & \multicolumn{2}{c}{CovGap(\%) $\downarrow$} &  & \multicolumn{2}{c}{Size $\downarrow$} &  & \multicolumn{2}{c}{CovGap(\%) $\downarrow$} &  & \multicolumn{2}{c}{Size $\downarrow$} \\ \cmidrule(lr){4-5} \cmidrule(lr){7-8} \cmidrule(lr){10-11} \cmidrule(l){13-14} 
&                               &                         & $\alpha=0.1$     & $\alpha=0.05$     &  & $\alpha=0.1$  & $\alpha=0.05$  &  & $\alpha=0.1$     & $\alpha=0.05$     &  & $\alpha=0.1$  & $\alpha=0.05$  \\ \midrule
\multirow{6}{*}{\rotatebox{90}{Constant\quad\quad}}    & \multirow{2}{*}{$\epsilon=0.05$} & Baseline                & 3.942         & 2.867          &  & 11.93      & 30.66       &  & 4.163         & 3.289          &  & 101.0      & 231.7       \\ \cmidrule(l){3-14} 
&                               & Ours                    & \textbf{0.386}         & \textbf{0.183}          &  & \textbf{7.786}      & \textbf{16.54}       &  & \textbf{0.084}         & \textbf{0.272}          &  & \textbf{67.37}      & \textbf{143.3}       \\ \cmidrule(l){2-14} 
& \multirow{2}{*}{$\epsilon=0.1$}  & Baseline                & 7.139         & 3.753          &  & 21.81      & 46.93       &  & 7.509         & 4.068          &  & 178.4      & 389.7       \\ \cmidrule(l){3-14} 
&                               & Ours                    & \textbf{0.270}         & \textbf{0.428}          &  & \textbf{8.815}      & \textbf{17.96}       &  & \textbf{0.095}         & \textbf{0.194}          &  & \textbf{81.87}      & \textbf{157.3}       \\ \cmidrule(l){2-14} 
& \multirow{2}{*}{$\epsilon=0.15$} & Baseline                & 8.032         & 4.147          &  & 36.95      & 59.92       &  & 8.566         & 4.348          &  & 318.0      & 513.2       \\ \cmidrule(l){3-14} 
&                               & Ours                    & \textbf{0.520}         & \textbf{0.395}          &  & \textbf{9.396}      & \textbf{19.41}       &  & \textbf{0.198}         & \textbf{0.144}          &  & \textbf{90.52}      & \textbf{163.4}       \\ \midrule
\multirow{6}{*}{\rotatebox{90}{Dynamic\quad\quad}}     & \multirow{2}{*}{$\epsilon=0.05$} & Baseline                & 4.106         & 2.780          &  & 6.527      & 21.97       &  & 4.498         & 2.584          &  & 31.15      & 128.1       \\ \cmidrule(l){3-14} 
&                               & Ours                    & \textbf{0.170}         & \textbf{0.658}          &  & \textbf{2.958}      & \textbf{10.95}       &  & \textbf{0.120}         & \textbf{0.242}          &  & \textbf{7.502}      & \textbf{47.41}       \\ \cmidrule(l){2-14} 
& \multirow{2}{*}{$\epsilon=0.1$}  & Baseline                & 7.414         & 3.733          &  & 18.18      & 37.18       &  & 7.249         & 3.595          &  & 99.54      & 214.9       \\ \cmidrule(l){3-14} 
&                               & Ours                    & \textbf{0.414}         & \textbf{0.217}          &  & \textbf{3.361}      & \textbf{12.01}       &  & \textbf{0.079}         & \textbf{0.321}          &  & \textbf{10.34}      & \textbf{67.03}       \\ \cmidrule(l){2-14} 
& \multirow{2}{*}{$\epsilon=0.15$} & Baseline                & 8.403         & 4.211          &  & 29.41      & 48.24       &  & 8.372         & 4.127          &  & 171.2      & 274.9       \\ \cmidrule(l){3-14} 
&                               & Ours                    & \textbf{0.214}         & \textbf{0.195}          &  & \textbf{4.065}      & \textbf{12.98}       &  & \textbf{0.183}         & \textbf{0.256}          &  & \textbf{13.10}      & \textbf{78.64}       \\ \bottomrule
\end{tabular}}}
\end{table*}

\section{Conclusion}
In this work, we investigate the robustness of online conformal prediction under uniform label noise with a known noise rate, in both constant and dynamic learning rate schedules. 
Our theoretical analysis shows that the presence of label noise causes a deviation between the actual and desired mis-coverage rate $\alpha$, with higher noise rates resulting in larger gaps. 
To address this issue, we propose a novel loss function \textit{robust pinball loss}, which provides an unbiased estimate of clean pinball loss without requiring ground-truth labels.
We theoretically establish that this loss is equivalent to the pinball loss under clean labels in expectation. 
In our theoretical analysis, we show that robust pinball loss enables online conformal prediction to eliminate the coverage gap caused by the label noise, achieving a convergence rate of $\mathcal{O}(T^{-1/2})$ for both empirical and expected coverage errors under uniform label noise. 
This loss offers a general solution to the uniform label noise, and is complementary to existing online conformal prediction methods.
Extensive experiments demonstrate that the proposed loss enhances the noise robustness of various online conformal prediction methods by eliminating the coverage gap caused by the label noise.
Notably, our loss function is effective across different label noise, error rate, non-conformity score functions, and model architectures.

\paragraph{Limitation.}
As the first step to explore the label noise issue in online conformal prediction, our analysis and method are limited to the setting of uniform noisy labels with a known noise rate. We believe it will be interesting to develop online conformal prediction algorithms that are robust to various types of label noise with fewer assumptions in the future.

\clearpage
\bibliography{references}
\bibliographystyle{unsrt}


\newpage
\appendix


\section{Related work}
Conformal prediction \citep{papadopoulos2002inductive,vovk2005algorithmic} is a statistical framework for uncertainty qualification. In the literature, many conformal prediction methods have been proposed across various domains, such as regression \citep{lei2014distribution, romano2019conformalized}, image classification \citep{angelopoulos2020uncertainty,huangconformal,liu2024c}, outlier detection \citep{guan2022prediction,bates2023testing,liang2024integrative}, and large language models \citep{DBLP:journals/corr/abs-2405-10301,DBLP:journals/corr/abs-2406-09714}. Conformal prediction is also deployed in other real-world applications, such as human-in-the-loop decision-making \citep{DBLP:conf/icml/CresswellSKV24}, automated vehicles~\citep{bang2024safe}, and scientific computation~\citep{moya2024conformalized}. In what follows, we introduce the most related works in two settings: online learning and noise robustness.

\textbf{Online conformal prediction.} Conventional conformal prediction algorithms provide coverage guarantees under the assumption of data exchangeability. However, in real-world online scenarios, the data distribution may evolve over time, violating the exchangeability assumption \citep{zhang2018dynamic,zhao2022efficient}. To address this challenge, recent research develops online conformal prediction methods that adaptively construct prediction sets to accommodate distribution shift \citep{gibbs2021adaptive,feldman2022achieving,bhatnagar2023improved,DBLP:conf/icml/AngelopoulosBB24,gibbs2024conformal, hajihashemimulti}. Building on online convex optimization techniques \citep{anderson2008theory,moore2011learning,hazan2016introduction,singh2019many,hoi2021online,orabona2019modern}, these methods employ online gradient descent with pinball loss to provably achieve desired coverage under arbitrary distributional changes \citep{gibbs2021adaptive}. Still, these algorithms typically assume perfect label accuracy, an assumption that rarely holds in practice, given the prevalence of noisy labels in online learning \citep{ben2009agnostic,natarajan2013learning,wuinformation}. 
In this work, we theoretically show that label noise can significantly affect the long-run mis-coverage rate through biased pinball loss gradients, leading to either inflated or deflated coverage guarantee.

\textbf{Noise-robust conformal prediction.} 
The issue of label noise has been a common challenge in machine learning with extensive studies \citep{xia2019anchor, wei2020combating, chen2021beyond, wei2021open, wu2021learning, li2021provably, zhu2022beyond, wei2023mitigating, DBLP:conf/iclr/0102000W0SR24, gao2024on}. In the context of conformal prediction, recent works develop noise-robust conformal prediction algorithms for both uniform noise \citep{penso2024conformal} and noise transition matrix \citep{sesia2024adaptive}. The most relevant work \citep{einbinder2024label} shows that online conformal prediction maintains valid coverage when noisy scores stochastically dominate clean scores. Our analysis extends this work by removing the assumption on noisy and clean scores, offering a more general theoretical framework for understanding the impact of uniform label noise.

\section{Additional results}

\subsection{Regret analysis}\label{appendix:regret_analysis}

As \citep{bhatnagar2023improved} demonstrates, regret serves as a helpful performance measure alongside coverage. In particular, it can identify algorithms that achieve valid coverage guarantees through impractical means. For example, prediction sets that alternate between empty and full sets with frequencies $\{\alpha,1-\alpha\}$ satisfy coverage bounds on any distribution but have linear regret on simple distributions (see detailed proof in Appendix A.2 of \citep{bhatnagar2023improved}). Drawing from standard online learning theory (see Theorem 2.13. of \citep{orabona2019modern}), we analyze the regret bound for our method. Let $\tau^*:=\arg\min_{\hat{\tau}}\sum_{t=1}^T\tilde{l}_{1-\alpha}(\tau^*,\tilde{S}_t,\{S_{t,y}\}_{y=1}^K)$, and define the regret as
\begin{align*}
\mathrm{Reg}(T)=\sum_{t=1}^T\of{\tilde{l}_{1-\alpha}(\hat{\tau}_t,\tilde{S}_t,\{S_{t,y}\}_{y=1}^K)
-\tilde{l}_{1-\alpha}(\tau^*,\tilde{S}_t,\{S_{t,y}\}_{y=1}^K)}.
\end{align*}
This leads to the following regret bound (the proof is provided in Appendix~\ref{appendix:proof:prop:regret_analysis}):

\begin{proposition}\label{prop:regret_analysis}
Consider online conformal prediction under uniform label noise with noise rate $\epsilon \in (0,1)$. Given Assumptions~\ref{ass:score_bounded} and \ref{ass:threshold_initialization}, when updating the threshold according to \Eqref{eq:changing_lr_ulnrocp_update_rule}, for any $T\in\mathbb{N}^{+}$, we have:
\begin{align*}
\mathrm{Reg}(T)
\leq\frac{1}{2\eta_T}\of{1+\underset{1\leq t\leq T-1}{\max}\eta_t\cdot\frac{1+\epsilon}{1-\epsilon}}^2
+\of{\frac{1+\epsilon}{1-\epsilon}}^2\cdot\sum_{t=1}^T\frac{\eta_t}{2}.
\end{align*}
\end{proposition}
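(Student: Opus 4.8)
The plan is to read the update in \Eqref{eq:changing_lr_ulnrocp_update_rule} as online subgradient descent on the sequence of robust losses $\tilde{l}_{1-\alpha}(\cdot,\tilde{S}_t,\{S_{t,y}\}_{y=1}^K)$ with time-varying step sizes $\eta_t$, and to run the standard online convex optimization regret argument referenced as Theorem 2.13 of \citep{orabona2019modern}. That argument needs only two quantitative inputs: a uniform bound $G$ on the per-step gradient $g_t:=\nabla_{\hat{\tau}_t}\tilde{l}_{1-\alpha}(\hat{\tau}_t,\tilde{S}_t,\{S_{t,y}\}_{y=1}^K)$ and a bound $D$ on $\max_{1\le t\le T}\abs{\hat{\tau}_t-\tau^*}$. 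Given these, the one-step expansion $(\hat{\tau}_{t+1}-\tau^*)^2=(\hat{\tau}_t-\tau^*)^2-2\eta_t g_t(\hat{\tau}_t-\tau^*)+\eta_t^2 g_t^2$, summed and rearranged with an Abel summation over the $\eta_t^{-1}$, telescopes (using that $\eta_t$ is non-increasing, as in the dynamic schedule $\eta_t=t^{-1/2-\varepsilon}$) to $\sum_{t=1}^T g_t(\hat{\tau}_t-\tau^*)\le\frac{D^2}{2\eta_T}+\sum_{t=1}^T\frac{\eta_t}{2}g_t^2$.

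First I would compute $g_t$ explicitly. Since $\nabla_\tau l_{1-\alpha}(\tau,s)=\mathds{1}\{s\le\tau\}-(1-\alpha)$, which lies in $[-(1-\alpha),\alpha]\subseteq[-1,1]$, the robust gradient is $g_t=\frac{1}{1-\epsilon}\of{\mathds{1}\{\tilde{S}_t\le\hat{\tau}_t\}-(1-\alpha)}-\frac{\epsilon}{K(1-\epsilon)}\sum_{y=1}^K\of{\mathds{1}\{S_{t,y}\le\hat{\tau}_t\}-(1-\alpha)}$. Bounding each bracketed term by $1$ in absolute value and applying the triangle inequality gives $\abs{g_t}\le\frac{1}{1-\epsilon}+\frac{\epsilon}{1-\epsilon}=\frac{1+\epsilon}{1-\epsilon}=:G$, which is precisely the gradient constant appearing in the stated bound; hence $\sum_t\frac{\eta_t}{2}g_t^2\le\of{\frac{1+\epsilon}{1-\epsilon}}^2\sum_t\frac{\eta_t}{2}$.

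The diameter is the more delicate ingredient, and it is where Assumptions~\ref{ass:score_bounded} and \ref{ass:threshold_initialization} enter. I would first locate the comparator: for $\tau>1$ every score satisfies $s\le1<\tau$, so all indicators equal one and the robust gradient collapses to the constant $\alpha>0$, whereas for $\tau<0$ all indicators vanish and it equals $-(1-\alpha)<0$; thus the cumulative robust loss is strictly increasing on $(1,\infty)$ and strictly decreasing on $(-\infty,0)$, forcing $\tau^*\in[0,1]$. The same sign computation confines the iterates: by induction on $t$, if $\hat{\tau}_t\in[0,1]$ then one step moves it by at most $\eta_t G$, and if $\hat{\tau}_t$ has left $[0,1]$ the gradient points strictly back toward it, so $\hat{\tau}_t\in[-\eta_{\max}G,\,1+\eta_{\max}G]$ for all $t$ with $\eta_{\max}:=\max_{1\le t\le T-1}\eta_t$. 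Combining with $\tau^*\in[0,1]$ yields $\max_t\abs{\hat{\tau}_t-\tau^*}\le 1+\eta_{\max}G=1+\max_{1\le t\le T-1}\eta_t\cdot\frac{1+\epsilon}{1-\epsilon}=:D$, and substituting $D$ and $G$ into the telescoped bound reproduces the claimed inequality.

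The step I expect to be the main obstacle is passing from the linearized sum $\sum_t g_t(\hat{\tau}_t-\tau^*)$, which the telescoping controls unconditionally, to the true regret $\sum_t\of{\tilde{l}_{1-\alpha}(\hat{\tau}_t,\cdot)-\tilde{l}_{1-\alpha}(\tau^*,\cdot)}$. For a convex loss this is immediate from the first-order inequality $\tilde{l}_t(\hat{\tau}_t)-\tilde{l}_t(\tau^*)\le g_t(\hat{\tau}_t-\tau^*)$, but the robust pinball loss is a difference of two convex pinball losses and is \emph{not} convex in general, so this inequality is not available verbatim inside $[0,1]$, exactly the region where $\tau^*$ and the iterates live. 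I would address this either by interpreting $\mathrm{Reg}(T)$ in the linearized (first-order) sense that the OGD analysis directly certifies, or by invoking Proposition~\ref{prop:noise_pinball_loss_good_approx} to replace the non-convex robust loss by the convex clean pinball loss to which it is equal in expectation, paying a martingale-concentration term of order $T^{-1/2}$ to control the pathwise fluctuation. Making this reduction rigorous is the crux, since it is the only place where the non-convexity of $\tilde{l}_{1-\alpha}$ could invalidate the otherwise routine online-learning calculation.
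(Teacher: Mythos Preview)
Your route is the same as the paper's: the one-step OGD expansion $(\hat{\tau}_{t+1}-\tau^*)^2=(\hat{\tau}_t-\tau^*)^2-2\eta_t g_t(\hat{\tau}_t-\tau^*)+\eta_t^2 g_t^2$, the gradient bound $\abs{g_t}\le\frac{1+\epsilon}{1-\epsilon}$ (packaged in the paper as Lemma~\ref{lemma:noise_pinball_gradient_bound}), and the diameter bound $\abs{\hat{\tau}_t-\tau^*}\le 1+\eta_{\max}\cdot\frac{1+\epsilon}{1-\epsilon}$ via iterate confinement (Lemma~\ref{lemma:dynamic_lr_noise_hat_tau_bounded}; your sign argument for $\tau^*\in[0,1]$ is the piece the paper leaves implicit). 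The telescoping after the Abel summation is then identical.

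Where you and the paper diverge is exactly the step you single out as the obstacle. The paper does not address non-convexity at all: in its proof it re-displays the robust pinball loss with a \emph{plus} sign in front of the $\frac{\epsilon}{K(1-\epsilon)}\sum_{y} l_{1-\alpha}(\tau,S_y)$ term (contrary to the minus sign in the definition \Eqref{eq:noise_pinball_loss}), asserts ``since pinball loss is convex, robust pinball loss inherits the convexity property,'' and applies the first-order convexity inequality directly. Under the actual definition with the minus sign, each kink at $\tau=S_{t,y}$ contributes a \emph{negative} jump $-\frac{\epsilon}{K(1-\epsilon)}$ to the subgradient, so $\tilde{l}_{1-\alpha}$ is not convex and the paper's step is not justified as written. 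Your instinct that this is the crux is therefore correct; neither of your two proposed repairs (interpreting $\mathrm{Reg}(T)$ as linearized regret, or passing to expectation via Proposition~\ref{prop:noise_pinball_loss_good_approx} plus an Azuma fluctuation term) appears in the paper, and the paper's own argument contains precisely the gap you anticipated.
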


\paragraph{Example 1: constant learning rate.} With a constant learning rate $\eta_t\equiv\eta$, our method yields the following regret bound:
\begin{align*}
\mathrm{Reg}(T)
\leq\frac{1}{\eta}\of{1+\eta\cdot\frac{1+\epsilon}{1-\epsilon}}^2
+\of{\frac{1+\epsilon}{1-\epsilon}}^2\cdot\frac{T\eta}{2}.
\end{align*}
This linear regret aligns with the observation in \citep{DBLP:conf/icml/AngelopoulosBB24}: the constant learning rate schedule, while providing valid coverage on average, lead to significant temporal variability in Coverage($\hat{\tau}_t$) (see Proposition 1 in \citep{DBLP:conf/icml/AngelopoulosBB24}). The linear regret bound provides additional theoretical justification for this instability.

\paragraph{Example 2: decaying learning rate.} Suppose the proposed method updates the threshold with a decaying learning rate schedule: $\eta_t=\of{1-\epsilon/1+\epsilon+\eta_0}\cdot\sqrt{t}$, where $\eta_0\in\mathbb{R}$. The inequality $\sum_{t=1}^T1/\sqrt{t}\leq2\sqrt{T}$ follows that
\begin{align*}
\mathrm{Reg}(T)
\leq2\off{\frac{1+\epsilon}{1-\epsilon}+\eta_0\cdot\of{\frac{1+\epsilon}{1-\epsilon}}^2}\cdot\sqrt{T}
\end{align*}
This sublinear regret bound $\mathcal{O}(\sqrt{T})$ implies that the decaying learning rate schedule achieves superior convergence compared to the linear regret of constant learning rates.

\subsection{Convergence analysis of $\hat{\tau}_t$}\label{appendix:tau_convergence_analysis}

We analyze the convergence of our method toward global minima by combining the convergence analysis of SGD (see Theorem 5.3 in \citep{garrigos2023handbook}) and self-calibration inequality of pinball loss (see Theorem 2.7. in \citep{steinwart2011estimating}). Following \citep{steinwart2011estimating}, we first establish assumptions on the distribution of $S$. Let $R:=2S-1\in[0,1]$, with $\gamma$ denoting the $1-\alpha$ quantile of $R$. This indicates that $\gamma=2\tau-1$, where $\tau$ is the $1-\alpha$ quantile of $S$. We make the following assumption:
\begin{assumption}\label{assumption:score_distribution}
There exists constants $b>0$, $q\geq2$, and $\varepsilon_0>0$ such that $\P\offf{R=\hat{\gamma}}\geq b\abs{\hat{\gamma}-\gamma}^{q-2}$ holds for all $\hat{\tau}\in[\tau-\varepsilon_0,\tau+\varepsilon_0]$.
\end{assumption}
Furthermore, let $\beta=b/(q-1)$ and $\delta=\beta(2\varepsilon_0)^{q-1}$. This leads to the following proposition (the proof is provided in Appendix~\ref{proof:prop:convergence_global minima}):

\begin{proposition}\label{prop:convergence_global minima}
Consider online conformal prediction under uniform label noise with noise rate $\epsilon \in (0,1)$. Given Assumptions~\ref{ass:score_bounded}, \ref{ass:threshold_initialization} and \ref{assumption:score_distribution}, when updating the threshold according to \Eqref{eq:changing_lr_ulnrocp_update_rule}, for any $T\in\mathbb{N}^{+}$, we have:
\begin{align*}
\abs{\bar{\tau}-\tau}^q
\leq\frac{q(1-q)}{b}\of{\frac{1}{2\varepsilon_0}}^q\cdot
\off{\frac{(\hat{\tau}_{1}-\tau^*)^2}{2\sum_{t=1}^T\eta_t}
+\frac{\sum_{t=1}^T\eta_t^2}{\sum_{t=1}^T\eta_t}\cdot\of{\frac{1+\epsilon}{1-\epsilon}}^2}
\end{align*}
where $\bar{\tau}=\sum_{t=1}^T\eta_t\hat{\tau}_t/\sum_{t=1}^T\eta_t$.
\end{proposition}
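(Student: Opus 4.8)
The plan is to combine two independent ingredients: a convergence result for the averaged iterate $\bar\tau$ of stochastic gradient descent applied to the robust pinball loss, and the self-calibration inequality of \citep{steinwart2011estimating} that converts excess risk into a bound on the quantile deviation $\abs{\bar\tau-\tau}$. The overall strategy is to first bound the expected excess risk $\E\off{L(\bar\tau)-L(\tau^*)}$, where $L(\tau)=\E_S\off{l_{1-\alpha}(\tau,S)}$ is the population pinball risk, and then feed this into the calibration bound to recover the desired power-$q$ estimate.

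\textbf{Step 1: set up SGD on the clean population risk via the robust loss.} By Proposition~\ref{prop:noise_pinball_loss_good_approx}, the robust pinball loss is an unbiased estimator of the clean pinball loss in expectation, both in value and gradient. Hence the update rule \Eqref{eq:changing_lr_ulnrocp_update_rule} is exactly stochastic gradient descent on the clean population objective $L(\tau)$, with stochastic gradients given by $\nabla_{\hat\tau_t}\tilde l_{1-\alpha}$ whose conditional expectation equals $\nabla L(\hat\tau_t)$. Since $L$ is convex in $\tau$ (being an expectation of the convex pinball loss), I would invoke the standard averaged-iterate SGD guarantee (Theorem 5.3 in \citep{garrigos2023handbook}), which gives
\begin{align*}
\E\off{L(\bar\tau)-L(\tau^*)}
\leq\frac{(\hat\tau_1-\tau^*)^2}{2\sum_{t=1}^T\eta_t}
+\frac{\sum_{t=1}^T\eta_t^2}{\sum_{t=1}^T\eta_t}\cdot\frac{G^2}{2},
\end{align*}
where $G$ bounds the stochastic gradient norm. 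The main computational step here is to verify $G\leq\frac{1+\epsilon}{1-\epsilon}$: each robust-loss gradient is a weighted combination of indicator-minus-$(1-\alpha)$ terms, and using $\abs{\id{\cdot}-(1-\alpha)}\leq1$ together with the weights $\frac{1}{1-\epsilon}$ and $\frac{\epsilon}{K(1-\epsilon)}$ summed over $K$ classes yields the gradient magnitude bound $\frac{1}{1-\epsilon}+\frac{\epsilon}{1-\epsilon}=\frac{1+\epsilon}{1-\epsilon}$.

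\textbf{Step 2: apply the self-calibration inequality.} Under Assumption~\ref{assumption:score_distribution}, the pinball loss satisfies a self-calibration inequality (Theorem 2.7 in \citep{steinwart2011estimating}) of the form $\abs{\hat\tau-\tau}^q\lesssim L(\hat\tau)-L(\tau)$, with the constant controlled by $b$, $q$, and $\varepsilon_0$ through the quantities $\beta=b/(q-1)$ and $\delta=\beta(2\varepsilon_0)^{q-1}$ introduced before the statement. Since $\tau^*$ is the population minimizer (so $L(\tau^*)=L(\tau)$ as $\tau$ is the $1-\alpha$ quantile), the excess risk bound from Step 1 transfers directly: applying the calibration inequality to $\bar\tau$ and substituting the SGD bound produces the factor $\frac{q(1-q)}{b}\of{\frac{1}{2\varepsilon_0}}^q$ multiplying the bracketed SGD terms, matching the claimed inequality.

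\textbf{Main obstacle.} The delicate part is reconciling the self-calibration constant with the precise prefactor $\frac{q(1-q)}{b}\of{\frac{1}{2\varepsilon_0}}^q$ claimed in the statement, and in particular tracking whether the bound is stated for $\E\abs{\bar\tau-\tau}^q$ or a high-probability/deterministic version, since Jensen's inequality ($\E\abs{\bar\tau-\tau}^q\geq\abs{\E\bar\tau-\tau}^q$ only for the reverse direction when $q\geq1$) must be applied in the correct orientation. I would need to carefully match the hypotheses of Theorem 2.7 in \citep{steinwart2011estimating} — specifically ensuring $\bar\tau$ lands in the calibration neighborhood $[\tau-\varepsilon_0,\tau+\varepsilon_0]$ so the local lower bound on $\P\offf{R=\hat\gamma}$ applies — and confirm the expectation is handled consistently on both sides. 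The rest is routine substitution.
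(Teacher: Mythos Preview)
Your proposal is correct and follows essentially the same route as the paper: the paper derives the averaged-iterate SGD bound from the one-step descent inequality (rather than citing Theorem~5.3 of \citep{garrigos2023handbook} directly), uses the same gradient bound $\frac{1+\epsilon}{1-\epsilon}$ (their Lemma~\ref{lemma:noise_pinball_gradient_bound}), applies Jensen via convexity of the pinball loss to pass to $\bar\tau$, and then invokes the self-calibration inequality (their Lemma~\ref{lemma:pinball_calibration}, built on Theorem~2.7 of \citep{steinwart2011estimating}) exactly as you outline. The concerns you flag in your obstacle section---whether the final bound should carry an expectation and whether $\bar\tau$ is guaranteed to lie in the calibration neighborhood---are legitimate, but the paper's proof glosses over them as well.
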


\subsection{Why does standard online conformal prediction rarely exhibit under-coverage under label noise?}\label{appendix:undercoverage}

As established in Proposition~\ref{prop:noise_online}, the label noise introduces a gap of $\frac{\epsilon}{1-\epsilon}\cdot\frac{1}{T}\sum_{t=1}^T\of{(1-\alpha)-\frac{1}{K}\E\off{\C_t(X_t)}}$ between the actual mis-coverage rate and desired mis-coverage rate $\alpha$. This result indicates that when the prediction sets are sufficiently large such that $(1-\alpha)\leq\frac{1}{K}\E\off{\C_t(X_t)}$, a high noise rate $\epsilon$ decreases this upper bound, resulting in under-coverage prediction sets, i.e., $\frac{1}{T}\sum_{t=1}^T\id{Y_t\notin\C_t(X_t)}\geq\alpha$. However, this scenario only arises when $\E\off{\C_t(X_t)}\geq K(1-\alpha)$. To illustrate, considering CIFAR-100 with $K=100$ classes and error rate $\alpha=0.1$, under-coverage would require $\E\off{\C_t(X_t)}\geq90$, a condition that remains improbable even with random predictions. Moreover, in such extreme cases, as $(1-\alpha)-\frac{1}{K}\E\off{\C_t(X_t)}$ approaches 0, the coverage gap becomes negligible, resulting in coverage rates that approximate the desired $1-\alpha$. We verify this empirically in Figure~\ref{fig:motivation_random}, where we implement online conformal prediction using an untrained, randomly initialized ResNet18 model on CIFAR-10 and CIFAR-100 with noise rates $\epsilon=0.05,0.1,0.15$. The results confirm that even in this extreme scenario, the coverage gap remains negligible, with coverage rates approaching the desired 0.9.

\begin{figure}[H]
\centering
\begin{subfigure}[b]{0.45\textwidth}
\centering
\includegraphics[width=0.9\linewidth]{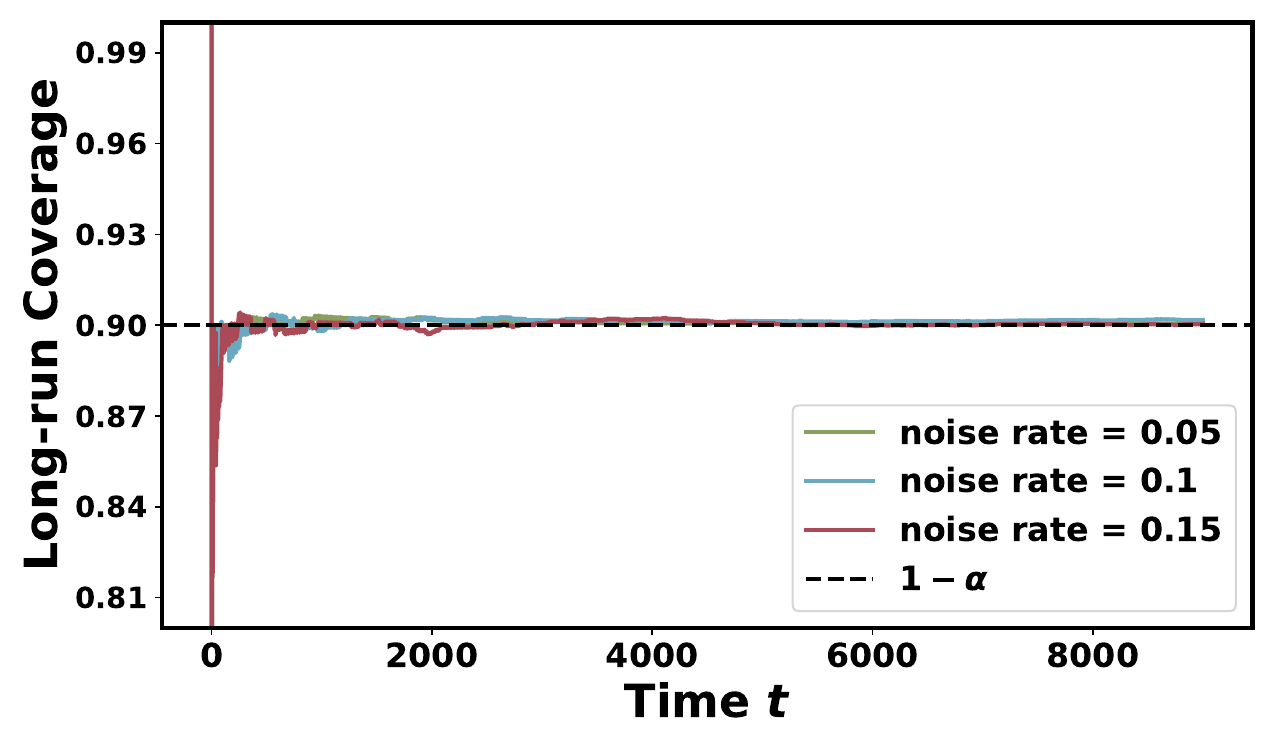}
\caption{CIFAR-10}
\end{subfigure}
\begin{subfigure}[b]{0.45\textwidth}
\centering
\includegraphics[width=0.9\linewidth]{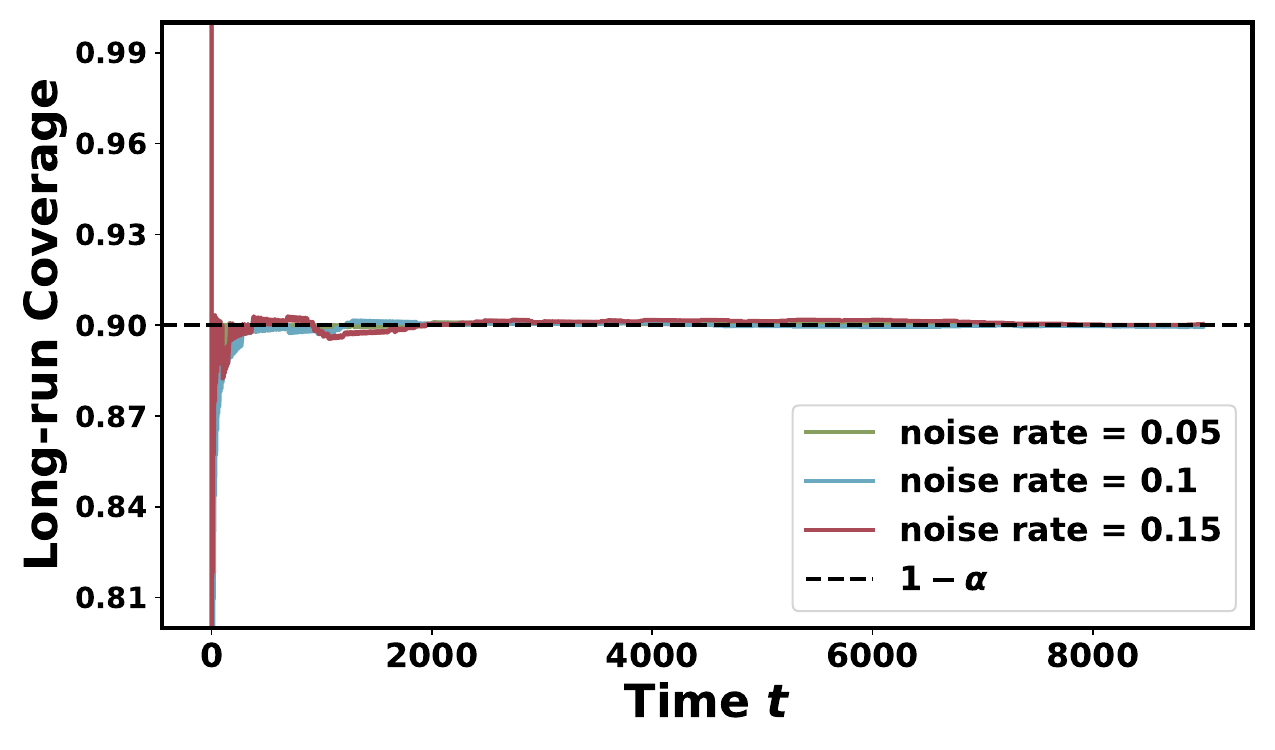}
\caption{CIFAR-100}
\end{subfigure}
\caption{Performance of standard online conformal prediction under different noise rates, with a ResNet18 model on CIFAR-10 and CIFAR-100 datasets. We use noisy labels to update the threshold with decaying learning rates $\eta_t=1/t^{1/2+\varepsilon}$ where $\varepsilon=0.1$.}
\label{fig:motivation_random}
\end{figure}

\subsection{Common non-conformity scores}\label{appendix:scores}

\textbf{Adaptive Prediction Set (APS). \citep{romano2020classification}} In the APS method, the non-conformity score of a data pair $(\bm{x},y)$ is calculated by accumulating the sorted softmax probability, defined as:
\begin{align*}
\mathcal{S}_{APS}(\bm{x},y)=\pi_{(1)}(\bm{x})+\cdots+ u \cdot \pi_{o(y,\pi(\bm{x}))}(\bm{x}),
\end{align*}
where $\pi_{(1)}(\bm{x}),\pi_{(2)}(\bm{x}),\cdots,\pi_{(K)}(\bm{x})$ are the sorted softmax probabilities in descending order, and $o(y,\pi(\bm{x}))$ denotes the order of $\pi_{y}(\bm{x})$, i.e., the softmax probability for the ground-truth label $y$. In addition, the term $u$ is an independent random variable that follows a uniform distribution on $[0,1]$.

\textbf{Regularized Adaptive Prediction Set (RAPS). \citep{angelopoulos2020uncertainty}} The non-conformity score function of RAPS encourages a small set size by adding a penalty, as formally defined below:
\begin{align*}
\mathcal{S}_{RAPS}(\bm{x},y)=\pi_{(1)}(\bm{x})+\cdots+u \cdot\pi_{o(y,\pi(\bm{x}))}(\bm{x})+\lambda\cdot(o(y,\pi(\bm{x}))-k_{reg})^{+},
\end{align*}
where $(z)^{+}=\max\{0,z\}$, $k_{reg}$ controls the number of penalized classes, and $\lambda$ is the penalty term.

\textbf{Sorted Adaptive Prediction Set (SAPS). \citep{huang2024uncertainty}} 
Recall that APS calculates the non-conformity score by accumulating the sorted softmax values in descending order. However, the softmax probabilities typically exhibit a long-tailed distribution, allowing for easy inclusion of those tail classes in the prediction sets. To alleviate this issue, SAPS discards all the probability values except for the maximum softmax probability when computing the non-conformity score. Formally, the non-conformity score of SAPS for a data pair $(\boldsymbol{x},y)$ can be calculated as 
\begin{equation*}\label{eq:SAPS_score}\scalebox{0.85}{$
 S_{saps}(\boldsymbol{x},y,u;\hat{\pi}) := \left\{ 
  \begin{array}{ll}
    u \cdot\hat{\pi}_{max} (\boldsymbol{x}), \qquad \text{if\quad} o(y,\hat{\pi}(\boldsymbol{x}))=1,\\
    \hat{\pi}_{max} (\boldsymbol{x}) +(o(y,\hat{\pi}(\boldsymbol{x}))-2+u) \cdot \lambda, \quad \text{else},
  \end{array}
\right.$}
\end{equation*} 
where $\lambda$ is a hyperparameter representing the weight of ranking information,  $\hat{\pi}_{max} (\boldsymbol{x})$ denotes the maximum softmax probability and $u$ is a uniform random variable.




\subsection{Additional experiments on different non-conformity score functions}\label{appendix:experiment_scores}

We evaluate NR-OCP (Ours) against the standard online conformal prediction (Baseline) that updates the threshold with noisy labels for both constant and dynamic learning rate schedules (see \Eqref{eq:constant_noise_update_rule}). We use LAC (Table~\ref{table:lac}), APS (Table~\ref{table:aps}), RAPS (Table~\ref{table:raps}) and SAPS (Table~\ref{table:saps}) scores to generate prediction sets with error rates $\alpha\in\{0.1,0.05\}$, and employ noise rates $\epsilon\in\{0.05,0.1,0.15\}$. A detailed description of these non-conformity scores is provided in Appendix~\ref{appendix:scores}.

\begin{table}[H]
\centering
\caption{Performance of different methods under uniform noisy labels with LAC score, using ResNet18. 
`Baseline' denotes the standard online conformal prediction methods. We include two learning rate schedules: constant learning rate $\eta=0.05$ and dynamic learning rates $\eta_t=1/t^{1/2+\varepsilon}$ where $\varepsilon=0.1$. 
``$\downarrow$'' indicates smaller values are better and \textbf{Bold} numbers are superior results. } 
\label{table:lac}
\renewcommand\arraystretch{0.80}
\resizebox{0.9\textwidth}{!}{
\setlength{\tabcolsep}{1.5mm}{
\begin{tabular}{@{}cccccccccccccc@{}}
\toprule
\multirow{3}{*}{LR Schedule} & \multirow{3}{*}{Error rate}   & \multirow{3}{*}{Method} & \multicolumn{5}{c}{CIFAR100}                                 &  & \multicolumn{5}{c}{ImageNet}                                 \\ \cmidrule(lr){4-8} \cmidrule(l){10-14} 
&                               &                         & \multicolumn{2}{c}{CovGap(\%) $\downarrow$} &  & \multicolumn{2}{c}{Size $\downarrow$} &  & \multicolumn{2}{c}{CovGap(\%) $\downarrow$} &  & \multicolumn{2}{c}{Size $\downarrow$} \\ \cmidrule(lr){4-5} \cmidrule(lr){7-8} \cmidrule(lr){10-11} \cmidrule(l){13-14} 
&                               &                         & $\alpha=0.1$     & $\alpha=0.05$     &  & $\alpha=0.1$  & $\alpha=0.05$  &  & $\alpha=0.1$     & $\alpha=0.05$     &  & $\alpha=0.1$  & $\alpha=0.05$  \\ \midrule
\multirow{6}{*}{\rotatebox{90}{Constant\quad\quad}}    & \multirow{2}{*}{$\epsilon=0.05$} & Baseline                & 2.744         & 1.900            &  & 31.61      & 56.84       &  & 2.747         & 1.601          &  & 364.5      & 599.1       \\ \cmidrule(l){3-14} 
&                               & Ours                    & \textbf{0.289}         & \textbf{0.122}          &  & \textbf{22.78}      & \textbf{47.31}       &  & \textbf{0.018}         & \textbf{0.138}          &  & \textbf{254.7}      & \textbf{530.6}       \\ \cmidrule(l){2-14} 
& \multirow{2}{*}{$\epsilon=0.1$}  & Baseline                & 4.911         & 2.967          &  & 43.03      & 67.58       &  & 4.578         & 2.656          &  & 488.1      & 707.6       \\ \cmidrule(l){3-14} 
&                               & Ours                    & \textbf{0.056}         & \textbf{0.189}          &  & \textbf{26.57}      & \textbf{54.18}       &  & \textbf{0.027}         & \textbf{0.156}          &  & \textbf{312.4}      & \textbf{584.9}       \\ \cmidrule(l){2-14} 
& \multirow{2}{*}{$\epsilon=0.15$} & Baseline                & 6.056         & 3.533          &  & 54.11      & 75.06       &  & 5.791         & 3.200            &  & 571.5      & 759.3       \\ \cmidrule(l){3-14} 
&                               & Ours                    & \textbf{0.378}         & \textbf{0.344}          &  & \textbf{28.39}      & \textbf{57.53}       &  & \textbf{0.251}         & \textbf{0.076}          &  & \textbf{346.7}      & \textbf{603.4}       \\ \midrule
\multirow{6}{*}{\rotatebox{90}{Dynamic\quad\quad}}     & \multirow{2}{*}{$\epsilon=0.05$} & Baseline                & 3.978         & 2.833          &  & 11.54      & 41.75       &  & 4.333         & 2.889          &  & 87.27      & 391.8       \\ \cmidrule(l){3-14} 
&                               & Ours                    & \textbf{0.089}         & \textbf{0.067}          &  & \textbf{4.290}       & \textbf{26.56}       &  & \textbf{0.031}         & \textbf{0.020}           &  & \textbf{16.46}      & \textbf{150.5}       \\ \cmidrule(l){2-14} 
& \multirow{2}{*}{$\epsilon=0.1$}  & Baseline                & 6.756         & 3.844          &  & 30.07      & 60.17       &  & 6.960          & 3.933          &  & 284.6      & 600.2       \\ \cmidrule(l){3-14} 
&                               & Ours                    & \textbf{0.233}         & \textbf{0.222}          &  & \textbf{5.394}      & \textbf{30.38}       &  & \textbf{0.091}         & \textbf{0.313}          &  & \textbf{27.76}      & \textbf{227.9}       \\ \cmidrule(l){2-14} 
& \multirow{2}{*}{$\epsilon=0.15$} & Baseline                & 7.844         & 4.289          &  & 42.69      & 70.67       &  & 8.098         & 4.276          &  & 447.9      & 704.4       \\ \cmidrule(l){3-14} 
&                               & Ours                    & \textbf{0.456}         & \textbf{0.067}          &  & \textbf{8.359}      & \textbf{33.60}        &  & \textbf{0.131}         & \textbf{0.158}          &  & \textbf{38.89}      & \textbf{272.5}       \\ \bottomrule
\end{tabular}}}
\end{table}

\begin{table}[H]
\centering
\caption{Performance of different methods under uniform noisy labels with APS score, using ResNet18. 
`Baseline' denotes the standard online conformal prediction methods. We include two learning rate schedules: constant learning rate $\eta=0.05$ and dynamic learning rates $\eta_t=1/t^{1/2+\varepsilon}$ where $\varepsilon=0.1$. 
``$\downarrow$'' indicates smaller values are better and \textbf{Bold} numbers are superior results. } 
\label{table:aps}
\renewcommand\arraystretch{0.80}
\resizebox{0.9\textwidth}{!}{
\setlength{\tabcolsep}{1.5mm}{
\begin{tabular}{@{}cccccccccccccc@{}}
\toprule
\multirow{3}{*}{LR Schedule} & \multirow{3}{*}{Error rate}   & \multirow{3}{*}{Method} & \multicolumn{5}{c}{CIFAR100}                                 &  & \multicolumn{5}{c}{ImageNet}                                 \\ \cmidrule(lr){4-8} \cmidrule(l){10-14} 
&                               &                         & \multicolumn{2}{c}{CovGap(\%) $\downarrow$} &  & \multicolumn{2}{c}{Size $\downarrow$} &  & \multicolumn{2}{c}{CovGap(\%) $\downarrow$} &  & \multicolumn{2}{c}{Size $\downarrow$} \\ \cmidrule(lr){4-5} \cmidrule(lr){7-8} \cmidrule(lr){10-11} \cmidrule(l){13-14} 
&                               &                         & $\alpha=0.1$     & $\alpha=0.05$     &  & $\alpha=0.1$  & $\alpha=0.05$  &  & $\alpha=0.1$     & $\alpha=0.05$     &  & $\alpha=0.1$  & $\alpha=0.05$  \\ \midrule
\multirow{6}{*}{\rotatebox{90}{Constant\quad\quad}}    & \multirow{2}{*}{$\epsilon=0.05$} & Baseline                & 4.556         & 2.933          &  & 5.862      & 21.17       &  & 4.728         & 3.508          &  & 13.35      & 79.88       \\ \cmidrule(l){3-14} 
&                               & Ours                    & \textbf{2.667}         & \textbf{0.311}          &  & \textbf{2.788}      & \textbf{6.763}       &  & \textbf{0.144}         & \textbf{0.211}          &  & \textbf{4.879}      & \textbf{14.09}       \\ \cmidrule(l){2-14} 
& \multirow{2}{*}{$\epsilon=0.1$}  & Baseline                & 7.511         & 3.822          &  & 15.71      & 37.19       &  & 8.402         & 4.388          &  & 67.25      & 224.2       \\ \cmidrule(l){3-14} 
&                               & Ours                    & \textbf{1.333}         & \textbf{0.367}          &  & \textbf{2.861}      & \textbf{6.302}       &  & \textbf{0.119}         & \textbf{0.246}          &  & \textbf{4.797}      & \textbf{14.21}       \\ \cmidrule(l){2-14} 
& \multirow{2}{*}{$\epsilon=0.15$} & Baseline                & 8.533         & 4.188          &  & 29.76      & 51.63       &  & 9.380          & 4.644          &  & 192.4      & 355.3       \\ \cmidrule(l){3-14} 
&                               & Ours                    & \textbf{0.500}           & \textbf{0.233}          &  & \textbf{2.915}      & \textbf{6.813}       &  & \textbf{0.288}         & \textbf{0.139}          &  & \textbf{4.787}      & \textbf{14.71}       \\ \midrule
\multirow{6}{*}{\rotatebox{90}{Dynamic\quad\quad}}    & \multirow{2}{*}{$\epsilon=0.05$} & Baseline                & 4.000             & 2.433          &  & 4.693      & 13.03       &  & 4.157         & 2.122          &  & 11.31      & 31.38       \\ \cmidrule(l){3-14} 
&                               & Ours                    & \textbf{0.278}         & \textbf{0.466}          &  & \textbf{2.427}      & \textbf{5.475}       &  & \textbf{0.108}         & \textbf{0.255}          &  & \textbf{4.431}      & \textbf{13.36}       \\ \cmidrule(l){2-14} 
& \multirow{2}{*}{$\epsilon=0.1$}  & Baseline                & 7.211         & 3.366          &  & 11.14      & 21.91       &  & 6.791         & 3.137          &  & 27.78      & 54.14       \\ \cmidrule(l){3-14} 
&                               & Ours                    & \textbf{0.400}           & \textbf{0.456}          &  & \textbf{2.716}      & \textbf{5.515}       &  & \textbf{0.028}         & \textbf{0.277}          &  & \textbf{4.513}      & \textbf{13.30}        \\ \cmidrule(l){2-14} 
& \multirow{2}{*}{$\epsilon=0.15$} & Baseline                & 8.288         & 3.911          &  & 19.62      & 31.19       &  & 7.928         & 3.675          &  & 47.88      & 80.33       \\ \cmidrule(l){3-14} 
&                               & Ours                    & \textbf{0.188}         & \textbf{0.211}          &  & \textbf{2.479}      & \textbf{5.790}        &  & \textbf{0.264}         & \textbf{0.088}          &  & \textbf{4.321}      & \textbf{14.03}       \\ \bottomrule
\end{tabular}}}
\end{table}

\begin{table}[H]
\centering
\caption{Performance of different methods under uniform noisy labels with RAPS score, using ResNet18. 
`Baseline' denotes the standard online conformal prediction methods. We include two learning rate schedules: constant learning rate $\eta=0.05$ and dynamic learning rates $\eta_t=1/t^{1/2+\varepsilon}$ where $\varepsilon=0.1$. 
``$\downarrow$'' indicates smaller values are better and \textbf{Bold} numbers are superior results. } 
\label{table:raps}
\renewcommand\arraystretch{0.80}
\resizebox{0.9\textwidth}{!}{
\setlength{\tabcolsep}{1.5mm}{
\begin{tabular}{@{}cccccccccccccc@{}}
\toprule
\multirow{3}{*}{LR Schedule} & \multirow{3}{*}{Error rate}   & \multirow{3}{*}{Method} & \multicolumn{5}{c}{CIFAR100}                                 &  & \multicolumn{5}{c}{ImageNet}                                 \\ \cmidrule(lr){4-8} \cmidrule(l){10-14} 
&                               &                         & \multicolumn{2}{c}{CovGap(\%) $\downarrow$} &  & \multicolumn{2}{c}{Size $\downarrow$} &  & \multicolumn{2}{c}{CovGap(\%) $\downarrow$} &  & \multicolumn{2}{c}{Size $\downarrow$} \\ \cmidrule(lr){4-5} \cmidrule(lr){7-8} \cmidrule(lr){10-11} \cmidrule(l){13-14} 
&                               &                         & $\alpha=0.1$     & $\alpha=0.05$     &  & $\alpha=0.1$  & $\alpha=0.05$  &  & $\alpha=0.1$     & $\alpha=0.05$     &  & $\alpha=0.1$  & $\alpha=0.05$  \\ \midrule
\multirow{6}{*}{\rotatebox{90}{Constant\quad\quad}}    & \multirow{2}{*}{$\epsilon=0.05$} & Baseline                & 3.978         & 3.533          &  & 5.275      & 26.71       &  & 4.716         & 4.509          &  & 13.73      & 164.7       \\ \cmidrule(l){3-14} 
&                               & Ours                    & \textbf{0.533}         & \textbf{0.089}          &  & \textbf{2.971}      & \textbf{6.265}       &  & \textbf{0.051}         & \textbf{0.673}          &  & \textbf{5.217}      & \textbf{14.21}       \\ \cmidrule(l){2-14} 
& \multirow{2}{*}{$\epsilon=0.1$}  & Baseline                & 7.656         & 4.278          &  & 15.25      & 50.25       &  & 8.689         & 4.84           &  & 85.19      & 395.5       \\ \cmidrule(l){3-14} 
&                               & Ours                    & \textbf{0.356}         & \textbf{0.200}          &  & \textbf{3.126}      & \textbf{6.370}       &  & \textbf{0.184}         & \textbf{0.271}          &  & \textbf{5.264}      & \textbf{14.76}       \\ \cmidrule(l){2-14} 
& \multirow{2}{*}{$\epsilon=0.15$} & Baseline                & 8.896         & 4.467          &  & 36.03      & 62.27       &  & 9.733         & 4.933          &  & 305.6      & 576.3       \\ \cmidrule(l){3-14} 
&                               & Ours                    & \textbf{0.489}         & \textbf{0.560}          &  & \textbf{3.500}      & \textbf{7.693}       &  & \textbf{0.224}         & \textbf{0.227}          &  & \textbf{5.616}      & \textbf{19.81}       \\ \midrule
\multirow{6}{*}{\rotatebox{90}{Dynamic\quad\quad}}     & \multirow{2}{*}{$\epsilon=0.05$} & Baseline                & 4.322         & 3.254          &  & 4.883      & 19.42       &  & 4.642         & 3.457          &  & 12.57      & 60.94       \\ \cmidrule(l){3-14} 
&                               & Ours                    & \textbf{0.233}         & \textbf{0.211}          &  & \textbf{2.586}      & \textbf{5.344}       &  & \textbf{0.067}         & \textbf{0.078}          &  & \textbf{4.305}      & \textbf{13.73}       \\ \cmidrule(l){2-14} 
& \multirow{2}{*}{$\epsilon=0.1$}  & Baseline                & 8.378         & 4.400          &  & 19.68      & 44.43       &  & 8.084         & 4.311          &  & 54.42      & 153.7       \\ \cmidrule(l){3-14} 
&                               & Ours                    & \textbf{0.756}         & \textbf{0.122}          &  & \textbf{2.891}      & \textbf{5.942}       &  & \textbf{0.009}         & \textbf{0.140}          &  & \textbf{4.381}      & \textbf{14.67}       \\ \cmidrule(l){2-14} 
& \multirow{2}{*}{$\epsilon=0.15$} & Baseline                & 9.022         & 4.600          &  & 33.87      & 57.87       &  & 9.256         & 4.978          &  & 133.3      & 239.8       \\ \cmidrule(l){3-14} 
&                               & Ours                    & \textbf{0.011}         & \textbf{0.440}          &  & \textbf{2.821}      & \textbf{5.290}       &  & \textbf{0.231}         & \textbf{0.264}          &  & \textbf{4.578}      & \textbf{15.59}       \\ \bottomrule
\end{tabular}}}
\end{table}
\newpage

\begin{table}[H]
\centering
\caption{Performance of different methods under uniform noisy labels with SAPS score, using ResNet18. 
`Baseline' denotes the standard online conformal prediction methods. We include two learning rate schedules: constant learning rate $\eta=0.05$ and dynamic learning rates $\eta_t=1/t^{1/2+\varepsilon}$ where $\varepsilon=0.1$. 
``$\downarrow$'' indicates smaller values are better and \textbf{Bold} numbers are superior results. } 
\label{table:saps}
\renewcommand\arraystretch{0.80}
\resizebox{\textwidth}{!}{
\setlength{\tabcolsep}{1.5mm}{
\begin{tabular}{@{}cccccccccccccc@{}}
\toprule
\multirow{3}{*}{LR Schedule} & \multirow{3}{*}{Error rate}   & \multirow{3}{*}{Method} & \multicolumn{5}{c}{CIFAR100}                                 &  & \multicolumn{5}{c}{ImageNet}                                 \\ \cmidrule(lr){4-8} \cmidrule(l){10-14} 
&                               &                         & \multicolumn{2}{c}{CovGap(\%) $\downarrow$} &  & \multicolumn{2}{c}{Size $\downarrow$} &  & \multicolumn{2}{c}{CovGap(\%) $\downarrow$} &  & \multicolumn{2}{c}{Size $\downarrow$} \\ \cmidrule(lr){4-5} \cmidrule(lr){7-8} \cmidrule(lr){10-11} \cmidrule(l){13-14} 
&                               &                         & $\alpha=0.1$     & $\alpha=0.05$     &  & $\alpha=0.1$  & $\alpha=0.05$  &  & $\alpha=0.1$     & $\alpha=0.05$     &  & $\alpha=0.1$  & $\alpha=0.05$  \\ \midrule
\multirow{6}{*}{\rotatebox{90}{Constant\quad\quad}}    & \multirow{2}{*}{$\epsilon=0.05$} & Baseline                & 4.489         & 3.100          &  & 4.982      & 17.92       &  & 4.462         & 3.537          &  & 12.53      & 83.04       \\ \cmidrule(l){3-14} 
&                               & Ours                    & \textbf{0.455}         & \textbf{0.211}          &  & \textbf{2.604}      & \textbf{5.810}       &  & \textbf{0.124}         & \textbf{0.067}          &  & \textbf{4.702}      & \textbf{14.32}       \\ \cmidrule(l){2-14} 
& \multirow{2}{*}{$\epsilon=0.1$}  & Baseline                & 8.478         & 3.944          &  & 13.25      & 32.71       &  & 8.368         & 4.389          &  & 73.14      & 231.5       \\ \cmidrule(l){3-14} 
&                               & Ours                    & \textbf{0.533}         & \textbf{0.955}          &  & \textbf{2.701}      & \textbf{4.987}       &  & \textbf{0.051}         & \textbf{0.102}          &  & \textbf{5.013}      & \textbf{15.37}       \\ \cmidrule(l){2-14} 
& \multirow{2}{*}{$\epsilon=0.15$} & Baseline                & 8.644         & 4.400          &  & 27.89      & 50.73       &  & 9.360         & 4.613          &  & 202.3      & 361.7       \\ \cmidrule(l){3-14} 
&                               & Ours                    & \textbf{0.711}         & \textbf{0.444}          &  & \textbf{2.778}      & \textbf{5.609}       &  & \textbf{0.029}         & \textbf{0.135}          &  & \textbf{4.971}      & \textbf{15.68}       \\ \midrule
\multirow{6}{*}{\rotatebox{90}{Dynamic\quad\quad}}     & \multirow{2}{*}{$\epsilon=0.05$} & Baseline                & 4.122         & 2.600          &  & 4.992      & 13.68       &  & 4.859         & 1.866          &  & 13.46      & 28.28       \\ \cmidrule(l){3-14} 
&                               & Ours                    & \textbf{0.078}         & \textbf{1.889}          &  & \textbf{2.528}      & \textbf{6.417}       &  & \textbf{0.275}         & \textbf{0.615}          &  & \textbf{4.813}      & \textbf{12.06}       \\ \cmidrule(l){2-14} 
& \multirow{2}{*}{$\epsilon=0.1$}  & Baseline                & 7.311         & 3.322          &  & 11.84      & 22.19       &  & 7.162         & 3.000          &  & 31.37      & 51.64       \\ \cmidrule(l){3-14} 
&                               & Ours                    & \textbf{0.267}         & \textbf{0.067}          &  & \textbf{2.444}      & \textbf{6.218}       &  & \textbf{0.186}         & \textbf{0.553}          &  & \textbf{4.715}      & \textbf{12.23}       \\ \cmidrule(l){2-14} 
& \multirow{2}{*}{$\epsilon=0.15$} & Baseline                & 8.456         & 4.044          &  & 21.44      & 33.21       &  & 8.204         & 3.577          &  & 55.78      & 75.26       \\ \cmidrule(l){3-14} 
&                               & Ours                    & \textbf{0.200}         & \textbf{0.063}          &  & \textbf{2.602}      & \textbf{7.230}       &  & \textbf{0.104}         & \textbf{0.515}          &  & \textbf{4.594}      & \textbf{12.43}       \\ \bottomrule
\end{tabular}}}
\end{table}

\subsection{Additional experiments on different model architectures}\label{appendix:experiemnt_models}

We evaluate NR-OCP (Ours) against the standard online conformal prediction (Baseline) that updates the threshold with noisy labels for both constant and dynamic learning rate schedules (see \Eqref{eq:constant_noise_update_rule} and \Eqref{eq:changing_noise_update_rule}), employing ResNet50 (Table~\ref{table:resnet50}), DenseNet121 (Table~\ref{table:densenet121}), and VGG16 (Table~\ref{table:vgg16}). 
We use LAC scores to generate prediction sets with error rates $\alpha\in\{0.1,0.05\}$, and employ noise rates $\epsilon\in\{0.05,0.1,0.15\}$.

\begin{table}[H]
\centering
\caption{Performance of different methods under uniform noisy labels with LAC score, using ResNet50. 
`Baseline' denotes the standard online conformal prediction methods. We include two learning rate schedules: constant learning rate $\eta=0.05$ and dynamic learning rates $\eta_t=1/t^{1/2+\varepsilon}$ where $\varepsilon=0.1$. 
``$\downarrow$'' indicates smaller values are better and \textbf{Bold} numbers are superior results. } 
\label{table:resnet50}
\renewcommand\arraystretch{0.80}
\resizebox{\textwidth}{!}{
\setlength{\tabcolsep}{1.5mm}{
\begin{tabular}{@{}cccccccccccccc@{}}
\toprule
\multirow{3}{*}{LR Schedule} & \multirow{3}{*}{Error rate}   & \multirow{3}{*}{Method} & \multicolumn{5}{c}{CIFAR100}                                 &  & \multicolumn{5}{c}{ImageNet}                                 \\ \cmidrule(lr){4-8} \cmidrule(l){10-14} 
&                               &                         & \multicolumn{2}{c}{CovGap(\%) $\downarrow$} &  & \multicolumn{2}{c}{Size $\downarrow$} &  & \multicolumn{2}{c}{CovGap(\%) $\downarrow$} &  & \multicolumn{2}{c}{Size $\downarrow$} \\ \cmidrule(lr){4-5} \cmidrule(lr){7-8} \cmidrule(lr){10-11} \cmidrule(l){13-14}
&                               &                         & $\alpha=0.1$     & $\alpha=0.05$     &  & $\alpha=0.1$  & $\alpha=0.05$  &  & $\alpha=0.1$     & $\alpha=0.05$     &  & $\alpha=0.1$  & $\alpha=0.05$  \\ \midrule
\multirow{6}{*}{\rotatebox{90}{Constant\quad\quad}}    & \multirow{2}{*}{$\epsilon=0.05$} & Baseline                & 3.244         & 1.783          &  & 28.58      & 56.08       &  & 3.226         & 1.982          &  & 294.4      & 575.7       \\ \cmidrule(l){3-14} 
&                               & Ours                    & \textbf{0.033}         & \textbf{0.089}          &  & \textbf{16.26}      & \textbf{46.04}       &  & \textbf{0.073}         & \textbf{0.035}          &  & \textbf{188.5}      & \textbf{466.8}       \\ \cmidrule(l){2-14} 
& \multirow{2}{*}{$\epsilon=0.1$}  & Baseline                & 5.267         & 2.879          &  & 42.09      & 67.25       &  & 5.326         & 2.935          &  & 425.9      & 674.2       \\ \cmidrule(l){3-14} 
&                               & Ours                    & \textbf{0.125}         & \textbf{0.122}          &  & \textbf{17.88}      & \textbf{49.06}       &  & \textbf{0.231}         & \textbf{0.067}          &  & \textbf{241.1}      & \textbf{506.4}       \\ \cmidrule(l){2-14} 
& \multirow{2}{*}{$\epsilon=0.15$} & Baseline                & 6.369         & 3.445          &  & 53.34      & 74.03       &  & 6.584         & 3.453          &  & 531.4      & 758.5       \\ \cmidrule(l){3-14} 
&                               & Ours                    & \textbf{0.223}         & \textbf{0.216}          &  & \textbf{21.62}      & \textbf{50.14}       &  & \textbf{0.202}         & \textbf{0.056}          &  & \textbf{263.1}      & \textbf{549.4}       \\ \midrule
\multirow{6}{*}{\rotatebox{90}{Dynamic\quad\quad}}     & \multirow{2}{*}{$\epsilon=0.05$} & Baseline                & 4.166         & 2.724          &  & 10.91      & 41.95       &  & 4.397         & 3.104          &  & 44.86      & 361.8       \\ \cmidrule(l){3-14} 
&                               & Ours                    & \textbf{0.115}         & \textbf{0.189}          &  & \textbf{3.120}      & \textbf{18.35}       &  & \textbf{0.082}         & \textbf{0.048}          &  & \textbf{8.377}      & \textbf{96.68}       \\ \cmidrule(l){2-14} 
& \multirow{2}{*}{$\epsilon=0.1$}  & Baseline                & 6.984         & 3.921          &  & 30.09      & 61.96       &  & 7.279         & 4.073          &  & 232.8      & 582.4       \\ \cmidrule(l){3-14} 
&                               & Ours                    & \textbf{0.205}         & \textbf{0.310}          &  & \textbf{4.143}      & \textbf{27.87}       &  & \textbf{0.113}         & \textbf{0.178}          &  & \textbf{10.83}      & \textbf{170.2}       \\ \cmidrule(l){2-14} 
& \multirow{2}{*}{$\epsilon=0.15$} & Baseline                & 8.093         & 4.223          &  & 43.90      & 70.67       &  & 8.406         & 4.453          &  & 417.9      & 705.9       \\ \cmidrule(l){3-14} 
&                               & Ours                    & \textbf{0.244}         & \textbf{0.043}          &  & \textbf{4.368}      & \textbf{27.81}       &  & \textbf{0.133}         & \textbf{0.059}          &  & \textbf{21.98}      & \textbf{178.0}       \\ \bottomrule
\end{tabular}}}
\end{table}
\newpage

\begin{table}[H]
\centering
\caption{Performance of different methods under uniform noisy labels with LAC score, using DenseNet121. 
`Baseline' denotes the standard online conformal prediction methods. We include two learning rate schedules: constant learning rate $\eta=0.05$ and dynamic learning rates $\eta_t=1/t^{1/2+\varepsilon}$ where $\varepsilon=0.1$. 
``$\downarrow$'' indicates smaller values are better and \textbf{Bold} numbers are superior results. } 
\label{table:densenet121}
\renewcommand\arraystretch{0.80}
\resizebox{\textwidth}{!}{
\setlength{\tabcolsep}{1.5mm}{
\begin{tabular}{@{}cccccccccccccc@{}}
\toprule
\multirow{3}{*}{LR Schedule} & \multirow{3}{*}{Error rate}   & \multirow{3}{*}{Method} & \multicolumn{5}{c}{CIFAR100}                                 &  & \multicolumn{5}{c}{ImageNet}                                 \\ \cmidrule(lr){4-8} \cmidrule(l){10-14} 
&                               &                         & \multicolumn{2}{c}{CovGap(\%) $\downarrow$} &  & \multicolumn{2}{c}{Size $\downarrow$} &  & \multicolumn{2}{c}{CovGap(\%) $\downarrow$} &  & \multicolumn{2}{c}{Size $\downarrow$} \\ \cmidrule(lr){4-5} \cmidrule(lr){7-8} \cmidrule(lr){10-11} \cmidrule(l){13-14} 
&                               &                         & $\alpha=0.1$     & $\alpha=0.05$     &  & $\alpha=0.1$  & $\alpha=0.05$  &  & $\alpha=0.1$     & $\alpha=0.05$     &  & $\alpha=0.1$  & $\alpha=0.05$  \\ \midrule
\multirow{6}{*}{\rotatebox{90}{Constant\quad\quad}}    & \multirow{2}{*}{$\epsilon=0.05$} & Baseline                & 3.978         & 3.533          &  & 5.275      & 26.71       &  & 4.716         & 4.509          &  & 13.73      & 164.7       \\ \cmidrule(l){3-14} 
&                               & Ours                    & \textbf{0.533}         & \textbf{0.089}          &  & \textbf{2.971}      & \textbf{6.265}       &  & \textbf{0.051}         & \textbf{0.673}          &  & \textbf{5.217}      & \textbf{14.21}       \\ \cmidrule(l){2-14} 
& \multirow{2}{*}{$\epsilon=0.1$}  & Baseline                & 7.656         & 4.278          &  & 15.25      & 50.25       &  & 8.689         & 4.84           &  & 85.19      & 395.5       \\ \cmidrule(l){3-14} 
&                               & Ours                    & \textbf{0.356}         & \textbf{0.200}          &  & \textbf{3.126}      & \textbf{6.370}       &  & \textbf{0.184}         & \textbf{0.271}          &  & \textbf{5.264}      & \textbf{14.76}       \\ \cmidrule(l){2-14} 
& \multirow{2}{*}{$\epsilon=0.15$} & Baseline                & 8.896         & 4.467          &  & 36.03      & 62.27       &  & 9.733         & 4.933          &  & 305.6      & 576.3       \\ \cmidrule(l){3-14} 
&                               & Ours                    & \textbf{0.489}         & \textbf{0.560}          &  & \textbf{3.500}      & \textbf{7.693}       &  & \textbf{0.224}         & \textbf{0.227}          &  & \textbf{5.616}      & \textbf{19.81}       \\ \midrule
\multirow{6}{*}{\rotatebox{90}{Dynamic\quad\quad}}     & \multirow{2}{*}{$\epsilon=0.05$} & Baseline                & 4.322         & 3.254          &  & 4.883      & 19.42       &  & 4.642         & 3.457          &  & 12.57      & 60.94       \\ \cmidrule(l){3-14} 
&                               & Ours                    & \textbf{0.233}         & \textbf{0.211}          &  & \textbf{2.586}      & \textbf{5.344}       &  & \textbf{0.067}         & \textbf{0.078}          &  & \textbf{4.305}      & \textbf{13.73}       \\ \cmidrule(l){2-14} 
& \multirow{2}{*}{$\epsilon=0.1$}  & Baseline                & 8.378         & 4.400          &  & 19.68      & 44.43       &  & 8.084         & 4.311          &  & 54.42      & 153.7       \\ \cmidrule(l){3-14} 
&                               & Ours                    & \textbf{0.756}         & \textbf{0.122}          &  & \textbf{2.891}      & \textbf{5.942}       &  & \textbf{0.009}         & \textbf{0.140}          &  & \textbf{4.381}      & \textbf{14.67}       \\ \cmidrule(l){2-14} 
& \multirow{2}{*}{$\epsilon=0.15$} & Baseline                & 9.022         & 4.600          &  & 33.87      & 57.87       &  & 9.256         & 4.978          &  & 133.3      & 239.8       \\ \cmidrule(l){3-14} 
&                               & Ours                    & \textbf{0.011}         & \textbf{0.440}          &  & \textbf{2.821}      & \textbf{5.290}       &  & \textbf{0.231}         & \textbf{0.264}          &  & \textbf{4.578}      & \textbf{15.59}       \\ \bottomrule
\end{tabular}}}
\end{table}

\begin{table}[H]
\centering
\caption{Performance of different methods under uniform noisy labels with LAC score, using VGG16. 
`Baseline' denotes the standard online conformal prediction methods. We include two learning rate schedules: constant learning rate $\eta=0.05$ and dynamic learning rates $\eta_t=1/t^{1/2+\varepsilon}$ where $\varepsilon=0.1$. 
``$\downarrow$'' indicates smaller values are better and \textbf{Bold} numbers are superior results. } 
\label{table:vgg16}
\renewcommand\arraystretch{0.80}
\resizebox{\textwidth}{!}{
\setlength{\tabcolsep}{1.5mm}{
\begin{tabular}{@{}cccccccccccccc@{}}
\toprule
\multirow{3}{*}{LR Schedule} & \multirow{3}{*}{Error rate}   & \multirow{3}{*}{Method} & \multicolumn{5}{c}{CIFAR100}                                 &  & \multicolumn{5}{c}{ImageNet}                                 \\ \cmidrule(lr){4-8} \cmidrule(l){10-14} 
&                               &                         & \multicolumn{2}{c}{CovGap(\%) $\downarrow$} &  & \multicolumn{2}{c}{Size $\downarrow$} &  & \multicolumn{2}{c}{CovGap(\%) $\downarrow$} &  & \multicolumn{2}{c}{Size $\downarrow$} \\ \cmidrule(lr){4-5} \cmidrule(lr){7-8} \cmidrule(lr){10-11} \cmidrule(l){13-14} 
&                               &                         & $\alpha=0.1$     & $\alpha=0.05$     &  & $\alpha=0.1$  & $\alpha=0.05$  &  & $\alpha=0.1$     & $\alpha=0.05$     &  & $\alpha=0.1$  & $\alpha=0.05$  \\ \midrule
\multirow{6}{*}{\rotatebox{90}{Constant\quad\quad}}    & \multirow{2}{*}{$\epsilon=0.05$} & Baseline                & 1.911         & 1.033          &  & 51.13      & 73.03       &  & 3.164         & 1.995          &  & 297.8      & 565.3       \\ \cmidrule(l){3-14} 
&                               & Ours                    & \textbf{0.078}         & \textbf{0.144}          &  & \textbf{43.50}      & \textbf{67.97}       &  & \textbf{0.002}         & \textbf{0.031}          &  & \textbf{206.6}      & \textbf{488.1}       \\ \cmidrule(l){2-14} 
& \multirow{2}{*}{$\epsilon=0.1$}  & Baseline                & 3.474         & 1.700          &  & 58.08      & 76.44       &  & 4.878         & 2.975          &  & 439.8      & 669.4       \\ \cmidrule(l){3-14} 
&                               & Ours                    & \textbf{0.224}         & \textbf{0.267}          &  & \textbf{45.48}      & \textbf{68.28}       &  & \textbf{0.217}         & \textbf{0.119}          &  & \textbf{229.4}      & \textbf{511.2}       \\ \cmidrule(l){2-14} 
& \multirow{2}{*}{$\epsilon=0.15$} & Baseline                & 4.489         & 2.411          &  & 64.16      & 80.16       &  & 6.173         & 3.626          &  & 544.2      & 746.8       \\ \cmidrule(l){3-14} 
&                               & Ours                    & \textbf{0.300}         & \textbf{0.189}          &  & \textbf{44.43}      & \textbf{71.81}       &  & \textbf{0.168}         & \textbf{0.004}          &  & \textbf{281.9}      & \textbf{551.3}       \\ \midrule
\multirow{6}{*}{\rotatebox{90}{Dynamic\quad\quad}}     & \multirow{2}{*}{$\epsilon=0.05$} & Baseline                & 2.678         & 1.388          &  & 40.96      & 66.81       &  & 4.608         & 3.044          &  & 51.64      & 374.1       \\ \cmidrule(l){3-14} 
&                               & Ours                    & \textbf{0.244}         & \textbf{0.004}          &  & \textbf{28.92}      & \textbf{59.26}       &  & \textbf{0.062}         & \textbf{0.013}          &  & \textbf{5.316}      & \textbf{112.7}       \\ \cmidrule(l){2-14} 
& \multirow{2}{*}{$\epsilon=0.1$}  & Baseline                & 4.189         & 2.567          &  & 50.40      & 74.38       &  & 7.142         & 4.067          &  & 244.4      & 580.6       \\ \cmidrule(l){3-14} 
&                               & Ours                    & \textbf{0.289}         & \textbf{0.133}          &  & \textbf{30.81}      & \textbf{61.08}       &  & \textbf{0.195}         & \textbf{0.071}          &  & \textbf{10.65}      & \textbf{152.1}       \\ \cmidrule(l){2-14} 
& \multirow{2}{*}{$\epsilon=0.15$} & Baseline                & 5.733         & 3.044          &  & 60.46      & 78.65       &  & 8.413         & 4.451          &  & 432.2      & 708.9       \\ \cmidrule(l){3-14} 
&                               & Ours                    & \textbf{0.440}         & \textbf{0.144}          &  & \textbf{35.67}      & \textbf{63.38}       &  & \textbf{0.119}         & \textbf{0.282}          &  & \textbf{18.39}      & \textbf{224.6}       \\ \bottomrule
\end{tabular}}}
\end{table}

\section{The impact of misestimated noise rate on NR-OCP}\label{appendix:misest_eps}

In this section, we conduct an additional experiment where the noise rate is unknown and needs to be estimated. 
In particular, we apply an existing algorithm \citep{li2021provably} to estimate the noise rate without access to clean data during the training of the pre-trained model. Then, we employ this estimated noise rate in our method for online conformal prediction. 
The experiments are conducted on CIFAR-10 dataset, using ResNet18 model with noise rates $\epsilon\in\{0.1, 0.2, 0.3\}$. 
We employ LAC score to generate prediction sets and use a constant learning rate $\eta=0.05$. 
The results demonstrate that our method consistently achieves much lower coverage gaps and smaller prediction sets compared to the baseline. 
This highlights the robustness and applicability of our method even when the noise rate is not precisely estimated.
\begin{table}[H]
  \centering
  \caption{} 
  \label{tab:results} 
  \begin{tabular}{cccccc}
    \toprule
    \textbf{Noise rate} & \textbf{Estimated Noise rate} & \textbf{Method} & \textbf{CovGap (\%)} & \textbf{Size} \\
    \midrule
    \multirow{2}{*}{$\epsilon = 0.1$} & \multirow{2}{*}{$\hat{\epsilon} = 0.09$} & Baseline & 6.99 & 2.47 \\
    & & Ours & 0.82 & 0.94 \\
    \midrule
    \multirow{2}{*}{$\epsilon = 0.2$} & \multirow{2}{*}{$\hat{\epsilon} = 0.23$} & Baseline & 8.72 & 5.41 \\
    & & Ours & 1.67 & 0.98 \\
    \midrule
    \multirow{2}{*}{$\epsilon = 0.3$} & \multirow{2}{*}{$\hat{\epsilon} = 0.26$} & Baseline & 9.04 & 6.72 \\
    & & Ours & 2.32 & 1.15 \\
    \bottomrule
  \end{tabular}
\end{table}

\section{Additional information on Strongly adaptive online conformal prediction}\label{app:saocp}
\subsection{Strongly adaptive online conformal prediction}
The SAOCP algorithm leverages techniques for minimizing the strongly adaptive regret \cite{jun2017improved} to perform online conformal prediction. 
In particular, SAOCP is a meta-algorithm that manages multiple experts, where each expert is itself an arbitrary online learning algorithm taking charge of its own active interval that has a finite \textit{lifetime}. 
At each $t \in [T]$, SAOCP instantiates a new expert $\mathcal{A}_t$ with active interval $[t, t + L(t) - 1]$, where $L(t)$ is the lifetime:
\begin{equation}
L(t) = g \cdot \max_{n \in \mathbb{Z}_{\geq 1}} \{ 2^n : t \equiv 0 \pmod{2^n} \}, \label{eq:lifetime}
\end{equation}
where $g \in \mathbb{Z}_{\geq 1}$ is a multiplier for the lifetime of each expert.
It can be shown that at most $g \log_2 t$ experts are active at any time $t$ under the choice of $L(t)$ in \eqref{eq:lifetime}, resulting in a total runtime of $O(T \log T)$ for SAOCP when $g = \Theta(1)$. 
At each time $t$, the threshold $\hat{\tau}_t$ is obtained by aggregating the predictions of the active experts.
$$
\hat{\tau}_t=
\sum_{i\in\text{Active}(t)}p_{i,t}\hat{\tau}_t^i
$$
where the weight $\{p_{i,t}\}_{i\in[t]}$ is computed by the coin betting scheme \cite{jun2017improved, orabona2018scale}.

\paragraph{Choice of expert.}
SAOCP employs Scale-Free OGD (dubbed SF-OGD) \cite{orabona2018scale} as its expert.
In particular, SF-OGD is a variant of OGD that decays its effective learning rate based on cumulative past gradient norms. 
\subsection{Noise-robust strongly adaptive online conformal prediction}
In the following, we present a pseudo-algorithm of NR-SAOCP. 
The essence of this method is to update the threshold with our robust pinball loss.
In Algorithm~\ref{alg:nr-saocp}, we demonstrate how NR-SAOCP manages multiple experts to update the threshold.
In Algorithm~\ref{alg:nr-sf-ogd}, we present how each expert updates the corresponding threshold.

\begin{algorithm}[H]
\caption{Noise-Robust Strongly Adaptive Online Conformal Prediction (NR-SAOCP)}
\label{alg:nr-saocp}
\begin{algorithmic}[1]
\REQUIRE Target coverage $1 - \alpha \in (0, 1)$, initial threshold $\hat{\tau}_0$.
\FOR{$t = 1, \ldots, T$}
    \STATE Initialize new expert $\mathcal{A}_t = \text{NR-SF-OGD}(\alpha\leftarrow\alpha; 
    \eta\leftarrow 1 / \sqrt{3};
    \hat{\tau}_1\leftarrow\hat{\tau}_{t-1})$ (Algorithm~\ref{alg:nr-sf-ogd}), and set weight $w_{t,t} = 0$
    \STATE Compute active set 
    $\text{Active}(t) = \{i \in [T] : t - L(i) \leq i\leq t\},$
    where $L(i)$ is defined in \Eqref{eq:lifetime}
    \STATE Compute prior probability 
    $\pi_i \propto i^{-2} \left(1+\lfloor w_{t,i}\rfloor_{+}\right)\id{i\in\text{Active}(t)}$, compute un-normalized probability $\hat{p}_i = \pi_i [w_{t,i}]_+$ for all $i\in[t]$, normalize $p = \hat{p} / ||\hat{p}||_1$ if $||\hat{p}||_1>0$, else $p=\pi$
    \STATE Update the threshold 
    $\hat{\tau}_t=\sum_{i\in\text{Active}(t)}p_i\hat{\tau}_t^i$ 
    \STATE Observe input $X_t$ and construct prediction set $\hat{C}_t(X_t)$ as in \Eqref{eq:pred_set}
    \STATE Observe true label $Y_t \in \mathcal{Y}$, compute non-conformity score $S_t = \mathcal{S}(X_t,Y_t)$ and $S_{t,y} = \mathcal{S}(X_t,y)$ for all $y\in\mathcal{Y}$
    \FOR{$i \in \text{Active}(t)$}
        \STATE Update expert $A_i$ with robust pinball loss and obtain next predicted radius $\hat{\tau}_{t+1}^i$
        \STATE Compute 
        \begin{align*}
           g_{i,t} = 
           \begin{cases}
            \tilde{l}(\hat{\tau}_{t},S_t,\{S_{t,y}\}_{y=1}^K) - \tilde{l}(\hat{\tau}_{t}^i,S_t,\{S_{t,y}\}_{y=1}^K) & \text{if }w_{t,i}>0 \\
            [\tilde{l}(\hat{\tau}_{t},S_t,\{S_{t,y}\}_{y=1}^K) - \tilde{l}(\hat{\tau}_{t}^i,S_t,\{S_{t,y}\}_{y=1}^K)]_+ & \text{if }w_{t,i}\leq0
           \end{cases}
        \end{align*}
        where $\tilde{l}$ is the robust pinball loss defined in \Eqref{eq:noise_pinball_loss}.
        \STATE Update expert weight 
        $w_{i,t+1} = 
        \frac{1}{t-i+1} \left( \sum_{j = i}^{t} g_{i,j} \right)
        \of{1+\sum_{j=i}^tw_{i,j}g_{i,j}}
        $
    \ENDFOR
\ENDFOR
\end{algorithmic}
\end{algorithm}

\begin{algorithm}[H]
\caption{Noise-Robust Scale-Free Online Gradient Descent (NR-SF-OGD)}
\label{alg:nr-sf-ogd}
\begin{algorithmic}[1]
\REQUIRE $\alpha \in (0, 1)$, learning rate $\eta > 0$, initial threshold $\hat{\tau}_1 \in \mathbb{R}$
\FOR{$t \geq 1$}
    \STATE Observe input $X_t$ and construct prediction set $\hat{C}_t(X_t)$ as in \Eqref{eq:pred_set}
    \STATE Observe true label $Y_t \in \mathcal{Y}$, compute non-conformity score $S_t = \mathcal{S}(X_t,Y_t)$ and $S_{t,y} = \mathcal{S}(X_t,y)$ for all $y\in\mathcal{Y}$
    \STATE Update the threshold:
    $$
    \hat{\tau}_{t+1} = \hat{\tau}_t - \eta \, \frac{\nabla \tilde{l}_{\hat{\tau}_t}(\hat{\tau}_{t},S_t,\{S_{t,y}\}_{y=1}^K)}{\sqrt{\sum_{i=1}^t \|\nabla_{\hat{\tau}_i} \tilde{l}(\hat{\tau}_{i},S_i,\{S_{i,y}\}_{y=1}^K)\|_2^2}}
    $$
    where $\tilde{l}$ is the robust pinball loss defined in \Eqref{eq:noise_pinball_loss}.
\ENDFOR
\end{algorithmic}
\end{algorithm}

\section{Omitted proofs}

\subsection{Proof for Proposition~\ref{prop:noise_online}}\label{proof:prop:noise_online}

\begin{lemma}\label{lemma:clean_hat_tau_bounded}
Given Assumptions~\ref{ass:score_bounded} and \ref{ass:threshold_initialization}, when updating the threshold according to \Eqref{eq:constant_noise_update_rule}, for any  $T\in\mathbb{N}^{+}$, we have
\begin{equation}\label{eq:clean_hat_tau_bounded}
-\alpha\eta
\leq\hat{\tau}_t
\leq1+(1-\alpha)\eta.
\end{equation}
\end{lemma}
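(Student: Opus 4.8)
The plan is to establish both inequalities in \eqref{eq:clean_hat_tau_bounded} simultaneously by induction on $t$, exploiting the sign structure of the pinball-loss gradient together with the boundedness of the score in \Assref{ass:score_bounded} and the initialization in \Assref{ass:threshold_initialization}.

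First I would rewrite the noisy update \Eqref{eq:constant_noise_update_rule} in the additive form analogous to \Eqref{eq:update_tau}. Since $\nabla_\tau l_{1-\alpha}(\tau,s)=\alpha\,\id{\tau\geq s}-(1-\alpha)\,\id{\tau<s}$, the update reads $\hat\tau_{t+1}=\hat\tau_t+\eta\of{(1-\alpha)-\id{\tilde S_t\leq\hat\tau_t}}$. In words, $\hat\tau_t$ increases by exactly $\eta(1-\alpha)$ when $\tilde S_t>\hat\tau_t$ (the noisy score is excluded from the set) and decreases by exactly $\eta\alpha$ when $\tilde S_t\leq\hat\tau_t$ (the score is covered). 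This monotone-step description is the engine of the whole argument.

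For the upper bound, the key observation is that an increasing step can originate only from a threshold value strictly below $1$: if the step at time $t$ is an increase then $\tilde S_t>\hat\tau_t$, and since $\tilde S_t\leq1$ by \Assref{ass:score_bounded} we obtain $\hat\tau_t<1$, so $\hat\tau_{t+1}=\hat\tau_t+\eta(1-\alpha)<1+(1-\alpha)\eta$. A decreasing step satisfies $\hat\tau_{t+1}\leq\hat\tau_t$ and hence preserves the bound by the induction hypothesis. The base case holds because $\hat\tau_1\in[0,1]$ by \Assref{ass:threshold_initialization} gives $\hat\tau_1\leq1\leq1+(1-\alpha)\eta$. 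The lower bound follows by a symmetric argument: a decreasing step requires $\tilde S_t\leq\hat\tau_t$, and $\tilde S_t\geq0$ then forces $\hat\tau_t\geq0$, so $\hat\tau_{t+1}=\hat\tau_t-\eta\alpha\geq-\alpha\eta$; an increasing step only raises the threshold and preserves the bound, while the base case $\hat\tau_1\geq0\geq-\alpha\eta$ is immediate.

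There is no substantive obstacle here; the only delicate point is the non-differentiable boundary $\tilde S_t=\hat\tau_t$, which I would handle by adopting the convention $\id{\tilde S_t\leq\hat\tau_t}$ already fixed by the set definition \Eqref{eq:pred_set} (classifying this case as covered). Either sub-gradient choice yields the same two-sided bound, so combining the two inductions establishes \eqref{eq:clean_hat_tau_bounded} for every $t\in\mathbb{N}^{+}$. Note that the argument never uses any property of $\tilde S_t$ beyond $\tilde S_t\in[0,1]$, so the bound is insensitive to label noise and applies verbatim to the clean update as well.
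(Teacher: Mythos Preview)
Your argument is correct and is essentially the same induction as the paper's proof; the only cosmetic difference is that the paper splits into three cases based on where $\hat\tau_t$ lies ($[0,1]$, $[1,1+(1-\alpha)\eta]$, or $[-\alpha\eta,0]$), whereas you split into two cases based on the sign of the step, which is a slightly cleaner bookkeeping of the same idea.
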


\begin{proof}
We prove this by induction. First, we know $\hat{\tau}_1\in[0,1]$ by assumption, which indicates that \Eqref{eq:clean_hat_tau_bounded} is satisfied at $t=1$. Then, we assume that \Eqref{eq:clean_hat_tau_bounded} holds for $t=T$, and we will show that $\hat{\tau}_{T+1}$ lies in this range. Consider three cases:

\textbf{Case 1.} If $\hat{\tau}_T\in[0,1]$, we have
\begin{align*}
\hat{\tau}_{T+1}
&=\hat{\tau}_{T}-\eta\cdot\nabla_{\hat{\tau}_t}l_{1-\alpha}(\hat{\tau}_t,\tilde{S}_t)
\overset{(a)}{\in}\off{-\alpha\eta,1+(1-\alpha)\eta}
\end{align*}
where (a) is because $\nabla_{\hat{\tau}_t}l_{1-\alpha}(\hat{\tau}_t,\tilde{S}_t)\in[\alpha-1,\alpha]$.

\textbf{Case 2.} Consider the case where $\hat{\tau}_T\in[1,1+(1-\alpha)\eta]$. The assumption that $\tilde{S}\in[0,1]$ implies  $\id{\tilde{S}>\hat{\tau}_t}=0$. Thus, we have
\begin{align*}
\nabla_{\hat{\tau}_t}l_{1-\alpha}(\hat{\tau}_t,\tilde{S}_t)=-\id{\tilde{S}>\hat{\tau}_t}+\alpha=\alpha
\end{align*}
which follows that
\begin{align*}
\hat{\tau}_{T+1}
=\hat{\tau}_{T}-\eta\cdot\nabla_{\hat{\tau}_t}l_{1-\alpha}(\hat{\tau}_t,\tilde{S}_t)
=\hat{\tau}_{T}-\eta\alpha
\in\off{1-\eta\alpha,1+(1-\alpha)\eta}
\subset\off{-\alpha\eta,1+(1-\alpha)\eta}
\end{align*}

\textbf{Case 3.} Consider the case where $\hat{\tau}_T\in[-\alpha\eta,0]$. The assumption that $\tilde{S}\in[0,1]$ implies  $\id{\tilde{S}>\hat{\tau}_t}=1$. Thus, we have
\begin{align*}
\nabla_{\hat{\tau}_t}l_{1-\alpha}(\hat{\tau}_t,\tilde{S}_t)=-\id{\tilde{S}>\hat{\tau}_t}+\alpha=-1+\alpha
\end{align*}
which follows that
\begin{align*}
\hat{\tau}_{T+1}
=\hat{\tau}_{T}-\eta\cdot\nabla_{\hat{\tau}_t}l_{1-\alpha}(\hat{\tau}_t,\tilde{S}_t)
=\hat{\tau}_{T}-\eta(-1+\alpha)
\in[-\alpha\eta,(1-\alpha)\eta]
\subset\off{-\alpha\eta,1+(1-\alpha)\eta}
\end{align*}
Combining three cases, we can conclude that 
\begin{align*}
-\alpha\eta
\leq\hat{\tau}_t
\leq1+(1-\alpha)\eta
\end{align*}
\end{proof}

\begin{proposition}[Restatement of Proposition~\ref{prop:noise_online}]
Consider online conformal prediction under uniform label noise with noise rate $\epsilon \in (0,1)$. Given Assumptions~\ref{ass:score_bounded} and \ref{ass:threshold_initialization}, when updating the threshold according to \Eqref{eq:constant_noise_update_rule}, for any $\delta\in(0,1)$ and $T\in\mathbb{N}^{+}$, the following bound holds with probability at least $1-\delta$:
\begin{align*}
\alpha-\frac{1}{T}\sum_{t=1}^T\id{Y_t\notin\C_t(X_t)}
\leq&\frac{2-\epsilon}{1-\epsilon}\sqrt{2\log\of{\frac{4}{\delta}}}\cdot\frac{1}{\sqrt{T}}
+\frac{1}{1-\epsilon}\of{\frac{1}{\eta}+1-\alpha}\cdot\frac{1}{T} \\
&+\frac{\epsilon}{1-\epsilon}\cdot\frac{1}{T}\sum_{t=1}^T\of{(1-\alpha)-\frac{1}{K}\E\off{\C_t(X_t)}}.
\end{align*}
\end{proposition}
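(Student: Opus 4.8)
The plan is to combine a telescoping identity for the noisy update with the uniform-noise decomposition of the score distributions, and to absorb the sampling fluctuations through two Azuma--Hoeffding bounds. First I would telescope the update rule \Eqref{eq:constant_noise_update_rule}. Since $\nabla_\tau l_{1-\alpha}(\tau,s)=\alpha-\id{s>\tau}$, the recursion reads $\hat\tau_{t+1}=\hat\tau_t+\eta\of{\id{\tilde S_t>\hat\tau_t}-\alpha}$, so summing yields
\[
\frac1T\sum_{t=1}^T\id{\tilde S_t>\hat\tau_t}-\alpha=\frac{\hat\tau_{T+1}-\hat\tau_1}{\eta T}.
\]
Lemma~\ref{lemma:clean_hat_tau_bounded} confines $\hat\tau_t$ to a bounded interval, so the right-hand side is of order $1/T$ and will supply the $B/T$ piece. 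This controls the \emph{noisy} empirical mis-coverage; the remaining work is to transfer it to the \emph{clean} quantity $\frac1T\sum_t\id{Y_t\notin\C_t(X_t)}=\frac1T\sum_t\id{S_t>\hat\tau_t}$.

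Next I would build the bridge between noisy and clean indicators. Let $\mathcal F_{t-1}$ be the history determining $\hat\tau_t$, so $\hat\tau_t$ is $\mathcal F_{t-1}$-measurable while the fresh sample $(X_t,Y_t,\tilde Y_t)$ is drawn afterward. Conditioning on $\mathcal F_{t-1}$ and using the uniform-noise decomposition of Lemma~\ref{lemma:noisy_score_distribution} (equivalently, $\tilde S_t=S_t$ with probability $1-\epsilon$ and $\tilde S_t=\Sc(X_t,\bar Y)$ with $\bar Y$ uniform with probability $\epsilon$) gives
\[
\E\off{\id{\tilde S_t>\hat\tau_t}\mid\mathcal F_{t-1}}=(1-\epsilon)\,\E\off{\id{S_t>\hat\tau_t}\mid\mathcal F_{t-1}}+\frac{\epsilon}{K}\sum_{y=1}^K\P\off{S_{t,y}>\hat\tau_t\mid\mathcal F_{t-1}}.
\]
Writing $p_t,q_t$ for the conditional clean and noisy mis-coverage probabilities and using $\sum_y\id{S_{t,y}>\hat\tau_t}=K-\abs{\C_t(X_t)}$, this solves to $p_t=\frac{q_t}{1-\epsilon}-\frac{\epsilon}{1-\epsilon}\of{1-\frac1K\E\off{\abs{\C_t(X_t)}\mid\mathcal F_{t-1}}}$, which after summing is precisely where the gap term $\frac{\epsilon}{1-\epsilon}\cdot\frac1T\sum_t\of{(1-\alpha)-\frac1K\E\off{\abs{\C_t(X_t)}}}$ emerges once the offset $\alpha$ is folded in.

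Then I would pass between conditional probabilities and realized indicators via two martingale concentrations. The sequences $W_t=\id{\tilde S_t>\hat\tau_t}-q_t$ and $V_t=\id{S_t>\hat\tau_t}-p_t$ are martingale differences adapted to $\{\mathcal F_t\}$ with increments bounded by $1$; Azuma--Hoeffding, applied at level $\delta/2$ to each and combined by a union bound, bounds $\frac1T\abs{\sum_t W_t}$ and $\frac1T\abs{\sum_t V_t}$ each by $\sqrt{2\log(4/\delta)/T}$. Assembling everything, I would substitute $\frac1T\sum_t q_t$ from the telescoping identity, replace $p_t$ via the bridge, and exchange the conditional sums for realized ones up to the $V_t,W_t$ fluctuations. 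The $W_t$ term enters with coefficient $\frac1{1-\epsilon}$ and the $V_t$ term with coefficient $1$, so together they give $\of{\frac1{1-\epsilon}+1}\sqrt{2\log(4/\delta)/T}=\frac{2-\epsilon}{1-\epsilon}\sqrt{2\log(4/\delta)/T}$, the $A/\sqrt T$ term, while the $-\alpha\epsilon/(1-\epsilon)$ offset merges into the set-size term, turning the constant $1$ into $1-\alpha$.

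The main obstacle I anticipate is the bookkeeping of the filtration and the two martingales: one must verify that $\hat\tau_t$ is measurable with respect to the pre-sample history $\mathcal F_{t-1}$ so the noise expectation factorizes as above, and then track the two distinct concentration terms with their correct coefficients $\frac1{1-\epsilon}$ and $1$ so they combine into exactly $\frac{2-\epsilon}{1-\epsilon}$. Given those, folding the $\alpha$ offset into the expected-set-size gap term and bounding the telescoped boundary term through Lemma~\ref{lemma:clean_hat_tau_bounded} are routine.
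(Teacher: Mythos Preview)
Your proposal is correct and follows essentially the same route as the paper: telescope the noisy update via Lemma~\ref{lemma:clean_hat_tau_bounded}, bridge clean and noisy conditional mis-coverage probabilities through the uniform-noise identity of Lemma~\ref{lemma:noisy_score_distribution} (which is where the set-size gap term appears), and control the two empirical-to-conditional fluctuations with Azuma--Hoeffding at level $\delta/2$ each, yielding exactly the coefficients $1$ and $\tfrac{1}{1-\epsilon}$ that combine into $\tfrac{2-\epsilon}{1-\epsilon}$. The only cosmetic difference is the order of assembly---the paper starts from the clean gradient sum and decomposes, while you start from the telescoped noisy sum and build outward---but the ingredients and constants match.
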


\begin{proof}
Consider the gradient of clean pinball loss:
\begin{align*}
\sum_{t=1}^T\nabla_{\tau_t}l_{1-\alpha}\of{\tau_t,S_t}
&=\sum_{t=1}^T\nabla_{\tau_t}l_{1-\alpha}\of{\tau_t,S_t}-\sum_{t=1}^T\E_{S}\off{\nabla_{\tau_t}l_{1-\alpha}\of{\tau_t,S_t}}
+\sum_{t=1}^T\E_{S}\off{\nabla_{\tau_t}l_{1-\alpha}\of{\tau_t,S_t}} \\
&\leq\underbrace{\abs{\sum_{t=1}^T\nabla_{\tau_t}l_{1-\alpha}\of{\tau_t,S_t}-\sum_{t=1}^T\E_{S}\off{\nabla_{\tau_t}l_{1-\alpha}\of{\tau_t,S_t}}}}_{(a)}
+\underbrace{\sum_{t=1}^T\E_{S}\off{\nabla_{\tau_t}l_{1-\alpha}\of{\tau_t,S_t}}}_{(b)}
\end{align*}
Part (a): Lemma~\ref{lemma:clean_coverage_error_concentration} gives us that
\begin{align*}
\P\offf{\abs{\sum_{t=1}^T\E_{S_t}\off{\nabla_{\hat{\tau}_t}l_{1-\alpha}(\hat{\tau}_t,S_t)}
-\sum_{t=1}^T\nabla_{\hat{\tau}_t}l_{1-\alpha}(\hat{\tau}_t,S_t)}\leq\sqrt{2T\log\of{\frac{4}{\delta}}}} 
\geq1-\frac{\delta}{2}.
\end{align*}
Part (b): Recall that the expected gradient of the clean pinball loss satisfies
\begin{align*}
\E_{S}\off{\nabla_{\hat{\tau}_t}l_{1-\alpha}(\hat{\tau}_t,S_t)}
&=\P\offf{S_t\leq\hat{\tau}_t}-(1-\alpha) \\
&\overset{(a)}{=}\of{\frac{1}{1-\epsilon}\P\offf{\tilde{S}_t\leq\hat{\tau}_t}
-\frac{\epsilon}{K(1-\epsilon)}\sum_{y=1}^K\P\offf{S_{t,y}\leq\hat{\tau}_t}}-(1-\alpha) \\
&=\frac{1}{1-\epsilon}\of{\P\offf{\tilde{S}_t\leq\hat{\tau}_t}-(1-\alpha)}
-\frac{\epsilon}{1-\epsilon}\of{\frac{1}{K}\sum_{y=1}^K\P\offf{S_{t,y}\leq\hat{\tau}_t}-(1-\alpha)} \\
&=\frac{1}{1-\epsilon}\E_{\tilde{S}_t}\off{\nabla_{\hat{\tau}_t}l_{1-\alpha}(\hat{\tau}_t,\tilde{S}_t)}
-\frac{\epsilon}{1-\epsilon}\of{\frac{1}{K}\E\off{\C_t(X_t)}-(1-\alpha)}
\end{align*}
where (a) comes from Lemma~\ref{lemma:noisy_score_distribution}. In addition, Lemma~\ref{lemma:clean_coverage_error_concentration} implies that
\begin{align*}
\P\offf{\abs{\sum_{t=1}^T\E_{\tilde{S}_t}\off{\nabla_{\hat{\tau}_t}l_{1-\alpha}(\hat{\tau}_t,\tilde{S}_t)}
-\nabla_{\hat{\tau}_t}l_{1-\alpha}(\hat{\tau}_t,\tilde{S}_t)}\leq\sqrt{2T\log\of{\frac{4}{\delta}}}} 
\geq1-\frac{\delta}{2}
\end{align*}
Thus, we can derive an upper bound for (b) as follows
\begin{align*}
&\quad\sum_{t=1}^T\E_{S}\off{\nabla_{\hat{\tau}_t}l_{1-\alpha}(\hat{\tau}_t,S)} \\
&=\frac{1}{1-\epsilon}\sum_{t=1}^T\E_{\tilde{S}}\off{\nabla_{\hat{\tau}_t}l_{1-\alpha}(\hat{\tau}_t,\tilde{S})}
+\frac{\epsilon}{1-\epsilon}\sum_{t=1}^T\of{(1-\alpha)-\frac{1}{K}\E\off{\C_t(X_t)}} \\
&=\frac{1}{1-\epsilon}\of{\sum_{t=1}^T\E_{\tilde{S}}\off{\nabla_{\hat{\tau}_t}l_{1-\alpha}(\hat{\tau}_t,\tilde{S})}
-\sum_{t=1}^T\nabla_{\hat{\tau}_t}l_{1-\alpha}(\hat{\tau}_t,\tilde{S})
+\sum_{t=1}^T\nabla_{\hat{\tau}_t}l_{1-\alpha}(\hat{\tau}_t,\tilde{S})}
+\frac{\epsilon}{1-\epsilon}\sum_{t=1}^T\of{(1-\alpha)-\frac{1}{K}\E\off{\C_t(X_t)}} \\
&\leq\frac{1}{1-\epsilon}\abs{\sum_{t=1}^T\E_{\tilde{S}}\off{\nabla_{\hat{\tau}_t}l_{1-\alpha}(\hat{\tau}_t,\tilde{S})}
-\sum_{t=1}^T\nabla_{\hat{\tau}_t}l_{1-\alpha}(\hat{\tau}_t,\tilde{S})}
+\frac{1}{1-\epsilon}\abs{\sum_{t=1}^T\nabla_{\hat{\tau}_t}l_{1-\alpha}(\hat{\tau}_t,\tilde{S})} \\
&\quad+\frac{\epsilon}{1-\epsilon}\sum_{t=1}^T\of{(1-\alpha)-\frac{1}{K}\E\off{\C_t(X_t)}}\\
&\leq\frac{1}{1-\epsilon}\sqrt{2T\log\of{\frac{4}{\delta}}}
+\frac{1}{1-\epsilon}\underbrace{\abs{\sum_{t=1}^T\nabla_{\hat{\tau}_t}l_{1-\alpha}(\hat{\tau}_t,\tilde{S})}}_{(c)}
+\frac{\epsilon}{1-\epsilon}\sum_{t=1}^T\of{(1-\alpha)-\frac{1}{K}\E\off{\C_t(X_t)}}
\end{align*}
Then, we derive an upper bound for (c). The update rule (\Eqref{eq:constant_noise_update_rule}) gives us that 
\begin{align*}
\hat{\tau}_{t+1}=\hat{\tau}_{t}-\eta\nabla_{\hat{\tau}_t}l_{1-\alpha}(\hat{\tau}_t,\tilde{S}_t)\quad\Longrightarrow\quad\nabla_{\hat{\tau}_t}l_{1-\alpha}(\hat{\tau}_t,\tilde{S}_t)=\frac{1}{\eta}(\hat{\tau}_t-\hat{\tau}_{t+1}).
\end{align*}
Accumulating from $t=1$ to $t=T$ and taking absolute value gives
\begin{align*}
\abs{\sum_{t=1}^T\nabla_{\hat{\tau}_t}l_{1-\alpha}(\hat{\tau}_t,S_t)}
=\abs{\sum_{t=1}^T\frac{1}{\eta}(\hat{\tau}_t-\hat{\tau}_{t+1})}
=\frac{1}{\eta}|\hat{\tau}_1-\hat{\tau}_{T+1}|
\overset{(a)}{\leq}\frac{1}{\eta}\of{1+(1-\alpha)\eta}
=\frac{1}{\eta}+1-\alpha
\end{align*}
where (a) follows from the assumption that $\hat{\tau}_1\in[0,1]$ and Lemma~\ref{lemma:clean_hat_tau_bounded}. Thus, we have
\begin{align*}
\sum_{t=1}^T\E_{S}\off{\nabla_{\hat{\tau}_t}l_{1-\alpha}(\hat{\tau}_t,S)}
\leq\frac{1}{1-\epsilon}\sqrt{2T\log\of{\frac{4}{\delta}}}
+\frac{1}{1-\epsilon}\of{\frac{1}{\eta}+1-\alpha}
+\frac{\epsilon}{1-\epsilon}\sum_{t=1}^T\of{(1-\alpha)-\frac{1}{K}\E\off{\C_t(X_t)}}
\end{align*}
By taking union bound and combining two parts, we can obtain that
\begin{align*}
\quad\sum_{t=1}^T\nabla_{\tau_t}l_{1-\alpha}\of{\tau_t,S_t} 
&\leq\sqrt{2T\log\of{\frac{4}{\delta}}}
+\frac{1}{1-\epsilon}\sqrt{2T\log\of{\frac{4}{\delta}}}
+\frac{1}{1-\epsilon}\of{\frac{1}{\eta}+1-\alpha} \\
&\quad+\frac{\epsilon}{1-\epsilon}\sum_{t=1}^T\of{\frac{1}{K}\E\off{\C_t(X_t)}-(1-\alpha)} \\
&=\frac{2-\epsilon}{1-\epsilon}\sqrt{2T\log\of{\frac{4}{\delta}}}
+\frac{1}{1-\epsilon}\of{\frac{1}{\eta}+1-\alpha}
+\frac{\epsilon}{1-\epsilon}\sum_{t=1}^T\of{(1-\alpha)-\frac{1}{K}\E\off{\C_t(X_t)}}
\end{align*}
holds with at least $1-\delta$ probability. Recall that
\begin{align*}
\sum_{t=1}^T\nabla_{\tau_t}l_{1-\alpha}\of{\tau_t,S_t}
=\sum_{t=1}^T\of{\id{S_t\leq\tau_t}-(1-\alpha)}
=\sum_{t=1}^T\of{-\id{S_t\geq\tau_t}+\alpha}
=\sum_{t=1}^T\of{-\id{Y_t\notin\C_t(X_t)}+\alpha}
\end{align*}
We can conclude that
\begin{align*}
\alpha-\frac{1}{T}\sum_{t=1}^T\id{Y_t\notin\C_t(X_t)}
&\leq\frac{2-\epsilon}{1-\epsilon}\sqrt{2\log\of{\frac{4}{\delta}}}\cdot\frac{1}{\sqrt{T}}
+\frac{1}{1-\epsilon}\of{\frac{1}{\eta}+1-\alpha}\cdot\frac{1}{T} \\
&\quad+\frac{\epsilon}{1-\epsilon}\cdot\frac{1}{T}\sum_{t=1}^T\of{(1-\alpha)-\frac{1}{K}\E\off{\C_t(X_t)}}
\end{align*}
\end{proof}

\subsection{Proof for Proposition~\ref{prop:noise_pinball_loss_good_approx}}\label{proof:prop:noise_pinball_loss_good_approx}

\begin{proposition}[Restatement of Proposition~\ref{prop:noise_pinball_loss_good_approx}]
The robust pinball loss defined in \Eqref{eq:noise_pinball_loss} satisfies the following two properties:
\begin{align*}
&(1) \E_{S}\off{l_{1-\alpha}(\tau,S)}=\E_{\tilde{S},S_y}\off{\tilde{l}_{1-\alpha}(\tau,\tilde{S},\{S_y\}_{y=1}^K)}; \\
&(2) \E_{S}\off{\nabla_{\tau}l_{1-\alpha}(\tau,S)}=\E_{\tilde{S},S_y}\off{\nabla_{\tau}\tilde{l}_{1-\alpha}(\tau,\tilde{S},\{S_y\}_{y=1}^K)}.
\end{align*}
\end{proposition}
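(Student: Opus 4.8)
The plan is to derive both identities from Lemma~\ref{lemma:noisy_score_distribution} together with linearity of expectation, exploiting the fact that the coefficients appearing in $\tilde{l}_{1-\alpha}$ — namely $\tfrac{1}{1-\epsilon}$ on the noisy score and $-\tfrac{\epsilon}{K(1-\epsilon)}$ on each of the $K$ class scores — form an \emph{affine} combination: they sum to $\tfrac{1}{1-\epsilon} - K\cdot\tfrac{\epsilon}{K(1-\epsilon)} = 1$. By linearity, $\E_{\tilde{S},S_y}\off{\tilde{l}_{1-\alpha}(\tau,\tilde{S},\{S_y\}_{y=1}^K)} = \tfrac{1}{1-\epsilon}\E_{\tilde{S}}\off{l_{1-\alpha}(\tau,\tilde{S})} - \tfrac{\epsilon}{K(1-\epsilon)}\sum_{y=1}^K\E_{S_y}\off{l_{1-\alpha}(\tau,S_y)}$, and the analogous decomposition holds for the gradient. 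Thus in both parts it suffices to show that this right-hand side collapses to the clean expectation.

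Statement (2) I would dispatch first, as it is essentially immediate. Off the null event $\offf{s=\tau}$ the pinball gradient equals $\nabla_{\tau}l_{1-\alpha}(\tau,s) = \id{s\le\tau} - (1-\alpha)$, the form already used in \Eqref{eq:update_tau}, so for any score law $\E\off{\nabla_{\tau}l_{1-\alpha}(\tau,Z)} = \P\offf{Z\le\tau} - (1-\alpha)$, a single CDF value. Substituting $s=\tau$ into Lemma~\ref{lemma:noisy_score_distribution} matches the CDF terms verbatim, and the three $-(1-\alpha)$ constants recombine as $\of{\tfrac{1}{1-\epsilon} - \tfrac{\epsilon}{1-\epsilon}}(1-\alpha) = (1-\alpha)$ precisely because the coefficients sum to one; hence the right-hand side equals $\P\offf{S\le\tau} - (1-\alpha) = \E_S\off{\nabla_{\tau}l_{1-\alpha}(\tau,S)}$.

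Statement (1) is the substantive step, and the main obstacle is that $\E\off{l_{1-\alpha}(\tau,Z)}$ is not a single value of the CDF but a functional of the whole distribution, so Lemma~\ref{lemma:noisy_score_distribution} cannot be plugged in pointwise. My plan is to promote the lemma to a statement about laws: rearranging it gives $F_{\tilde{S}} = (1-\epsilon)F_S + \tfrac{\epsilon}{K}\sum_{y=1}^K F_{S_y}$, which is exactly the CDF of the mixture measure $(1-\epsilon)\mu_S + \tfrac{\epsilon}{K}\sum_{y=1}^K \mu_{S_y}$; since a CDF determines a law uniquely, the law of $\tilde{S}$ equals this mixture. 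Equivalently, this mixture can be read directly off the noise model, where $\tilde{S} = S$ on $\offf{U>\epsilon}$ and $\tilde{S} = S_{\bar{Y}}$ with $\bar{Y}\sim\mathrm{Unif}(\Y)$ on $\offf{U\le\epsilon}$. Integrating the bounded test function $s\mapsto l_{1-\alpha}(\tau,s)$ against this identity yields $\E_{\tilde{S}}\off{l_{1-\alpha}(\tau,\tilde{S})} = (1-\epsilon)\E_S\off{l_{1-\alpha}(\tau,S)} + \tfrac{\epsilon}{K}\sum_{y=1}^K \E_{S_y}\off{l_{1-\alpha}(\tau,S_y)}$, and solving for $\E_S\off{l_{1-\alpha}(\tau,S)}$ reproduces exactly the definition of $\tilde{l}_{1-\alpha}$, establishing part (1). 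I expect the only delicacy to be this promotion from a CDF identity to a test-function identity; once the mixture-of-laws viewpoint is in hand, part (2) also drops out of the same identity applied to $\nabla_{\tau}l_{1-\alpha}(\tau,\cdot)$, so the two claims ultimately share a single mechanism.
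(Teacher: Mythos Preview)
Your proposal is correct. For property~(2) your argument is essentially the paper's: both compute the expected gradient as a CDF value minus $(1-\alpha)$, apply Lemma~\ref{lemma:noisy_score_distribution} at $s=\tau$, and observe that the constants recombine because $\tfrac{1}{1-\epsilon}-\tfrac{\epsilon}{1-\epsilon}=1$.

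For property~(1) your route is genuinely different. The paper expands $\E_S\off{l_{1-\alpha}(\tau,S)}$ into four pieces---$\alpha\tau\,\P\offf{\tau\ge S}$, $-\alpha\int_0^{\tau} s\,\P\offf{S=s}\,ds$, and their $(1-\alpha)$-weighted counterparts on $\offf{\tau<S}$---applies Lemma~\ref{lemma:noisy_score_distribution} separately to each density and CDF factor, and then reassembles the resulting twelve terms back into the definition of $\tilde{l}_{1-\alpha}$. Your mixture-of-laws viewpoint short-circuits all of this: once $\mu_{\tilde{S}}=(1-\epsilon)\mu_S+\tfrac{\epsilon}{K}\sum_{y}\mu_{S_y}$ is in hand (read off the noise mechanism directly, or deduced from the CDF identity by uniqueness), the relation $\E\off{l_{1-\alpha}(\tau,\tilde{S})}=(1-\epsilon)\E\off{l_{1-\alpha}(\tau,S)}+\tfrac{\epsilon}{K}\sum_{y}\E\off{l_{1-\alpha}(\tau,S_y)}$ is a one-line consequence, and solving for the clean term gives the result. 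Your argument is shorter, avoids any integral manipulation, and immediately generalizes to any bounded measurable test function in place of $l_{1-\alpha}(\tau,\cdot)$---in particular, as you note, it yields (1) and (2) from a single identity.
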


\begin{proof}
\textbf{The property (1):} We begin by proving the first property. Taking expectation on pinball loss gives
\begin{align*}
\E_{S}\off{l_{1-\alpha}(\tau,S)}
&=\E_{S}\off{\alpha(\tau-S)\mathds{1}\{\tau\geq S\}
+(1-\alpha)(S-\tau)\mathds{1}\{\tau\leq S\}} \\
&=\alpha\tau\P\offf{\tau\geq S}
-\alpha\int_{0}^{\tau}s\P\offf{S=s}ds
-(1-\alpha)\tau\P\offf{\tau< S}
+(1-\alpha)\int_{\tau}^{1}s\P\offf{S=s}ds \\
&=\underbrace{\alpha\tau\of{\frac{1}{1-\epsilon}\P\offf{\tilde{S}\leq\tau}+\frac{\epsilon}{1-\epsilon}\P\offf{\bar{S}\leq\tau}}}_{(a)}-\underbrace{\alpha\int_{0}^{\tau}s\of{\frac{1}{1-\epsilon}\P\offf{\tilde{S}\leq\tau}+\frac{\epsilon}{1-\epsilon}\P\offf{\bar{S}\leq\tau}}ds}_{(b)}- \\
&\quad\underbrace{(1-\alpha)\tau\of{\frac{1}{1-\epsilon}\P\offf{\tilde{S}>\tau}+\frac{\epsilon}{1-\epsilon}\P\offf{\bar{S}>\tau}}}_{(c)}
+\underbrace{(1-\alpha)\int_{\tau}^{1}s\of{\frac{1}{1-\epsilon}\P\offf{\tilde{S}>\tau}+\frac{\epsilon}{1-\epsilon}\P\offf{\bar{S}>\tau}}ds}_{(d)}
\end{align*}
Part (a):
\begin{align*}
\alpha\tau\of{\frac{1}{1-\epsilon}\P\offf{\tilde{S}\leq\tau}+\frac{\epsilon}{1-\epsilon}\P\offf{\bar{S}\leq\tau}} 
&=\alpha\tau\of{\frac{1}{1-\epsilon}\P\offf{\tilde{S}\leq\tau}+\frac{\epsilon}{K(1-\epsilon)}\sum_{y=1}^K\P\offf{S_y\leq\tau}} \\
&=\E_{\tilde{S},S_y}\off{\alpha\tau\of{\frac{1}{1-\epsilon}\mathds{1}\offf{\tilde{S}\leq\tau}+\frac{\epsilon}{K(1-\epsilon)}\sum_{y=1}^K\mathds{1}\offf{S_y\leq\tau}}}
\end{align*}
Part (b):
\begin{align*}
\alpha\int_{0}^{\tau}s\of{\frac{1}{1-\epsilon}\P\offf{\tilde{S}\leq\tau}+\frac{\epsilon}{1-\epsilon}\P\offf{\bar{S}\leq\tau}}ds
&=\alpha\int_{0}^{\tau}s\of{\frac{1}{1-\epsilon}\P\offf{\tilde{S}\leq\tau}+\frac{\epsilon}{K(1-\epsilon)}\sum_{y=1}^K\P\offf{S_y\leq\tau}}ds \\
&=\E_{\tilde{S},S_y}\off{\alpha\of{\frac{\tilde{S}}{1-\epsilon}\id{\tilde{S}\leq\tau}+\frac{\epsilon S_y}{K(1-\epsilon)}\sum_{y=1}^K}\id{S_y\leq\tau}}
\end{align*}
Part (c):
\begin{align*}
(1-\alpha)\tau\of{\frac{1}{1-\epsilon}\P\offf{\tilde{S}>\tau}+\frac{\epsilon}{1-\epsilon}\P\offf{\bar{S}>\tau}}
&=(1-\alpha)\tau\of{\frac{1}{1-\epsilon}\P\offf{\tilde{S}>\tau}+\frac{\epsilon}{K(1-\epsilon)}\sum_{y=1}^K\P\offf{S_y>\tau}} \\
&=\E_{\tilde{S},S_y}\off{(1-\alpha)\tau\of{\frac{1}{1-\epsilon}\mathds{1}\offf{\tilde{S}>\tau}+\frac{\epsilon}{K(1-\epsilon)}\sum_{y=1}^K\mathds{1}\offf{S_y>\tau}}}
\end{align*}
Part (d):
\begin{align*}
&\quad(1-\alpha)\int_{\tau}^{1}s\of{\frac{1}{1-\epsilon}\P\offf{\tilde{S}>\tau}+\frac{\epsilon}{1-\epsilon}\P\offf{\bar{S}>\tau}}ds \\
&=(1-\alpha)\int_{0}^{\tau}s\of{\frac{1}{1-\epsilon}\P\offf{\tilde{S}>\tau}+\frac{\epsilon}{K(1-\epsilon)}\sum_{y=1}^K\P\offf{S_y>\tau}}ds \\
&=\E_{\tilde{S},S_y}\off{(1-\alpha)\of{\frac{\tilde{S}}{1-\epsilon}\id{\tilde{S}>\tau}+\frac{\epsilon S_y}{K(1-\epsilon)}\sum_{y=1}^K}\id{S_y>\tau}}
\end{align*}
Combining (a), (b), (c), and (d), we can conclude that
\begin{align*}
&\quad\E_{S}\off{l_{1-\alpha}(\tau,S)} \\
&=\E_{\tilde{S},S_y}\off{
\alpha\tau\of{\frac{1}{1-\epsilon}\mathds{1}\offf{\tilde{S}\leq\tau}
+\frac{\epsilon}{K(1-\epsilon)}\sum_{y=1}^K\mathds{1}\offf{S_y\leq\tau}}
-\alpha\of{\frac{\tilde{S}}{1-\epsilon}\id{\tilde{S}\leq\tau}-
\frac{\epsilon S_y}{K(1-\epsilon)}\sum_{y=1}^K}\id{S_y\leq\tau}}- \\
&\quad\E_{\tilde{S},S_y}\bigg[(1-\alpha)\tau\of{\frac{1}{1-\epsilon}\mathds{1}\offf{\tilde{S}>\tau}+\frac{\epsilon}{K(1-\epsilon)}\sum_{y=1}^K\mathds{1}\offf{S_y>\tau}} \\
&\quad+(1-\alpha)\of{\frac{\tilde{S}}{1-\epsilon}\id{\tilde{S}>\tau}+\frac{\epsilon S_y}{K(1-\epsilon)}\sum_{y=1}^K\id{S_y>\tau}}\bigg] \\
&=\E\off{\frac{1}{1-\epsilon}\of{\alpha(\tau-\tilde{S})\mathds{1}\offf{\tilde{S}\leq\tau}+(1-\alpha)(\tilde{S}-\tau)\mathds{1}\offf{\tilde{S}>\tau}}}+ \\
&\quad\E\off{\frac{\epsilon}{K(1-\epsilon)}\sum_{y=1}^K\of{\alpha(\tau-\tilde{S})\mathds{1}\offf{S_y\leq\tau}+(1-\alpha)(\tilde{S}-\tau)\mathds{1}\offf{S_y>\tau}}} \\
&=\E\off{\frac{1}{1-\epsilon}l_{1-\alpha}(\tau,\tilde{S})
+\frac{\epsilon}{K(1-\epsilon)}\sum_{y=1}^Kl_{1-\alpha}(\tau,S_y)} \\
&=\E_{\tilde{S},S_y}\off{\tilde{l}_{1-\alpha}(\tau,\tilde{S},\{S_y\}_{y=1}^K)}
\end{align*}

\textbf{The property (2):} We proceed by proving the second property, which demonstrates that the expected gradient of the robust pinball loss (computed using noise and random scores) equals the expected gradient of the true pinball loss. Consider the gradient of robust pinball loss:
\begin{align*}
&\quad\E_{\tilde{S},S_y}\off{\nabla_{\tau}\tilde{l}_{1-\alpha}(\tau,\tilde{S},\{S_y\}_{y=1}^K)} \\
&=\E_{\tilde{S},S_y}\off{
\frac{\alpha}{1-\epsilon}\id{\tau\geq\tilde{S}}-
\sum_{y=1}^K\frac{\alpha\epsilon}{K(1-\epsilon)}\id{\tau\geq S_y}-
\frac{1-\alpha}{1-\epsilon}\id{\tau<\tilde{S}}+
\sum_{y=1}^K\frac{(1-\alpha)\epsilon}{K(1-\epsilon)}\id{\tau<S_y}} \\
&=\frac{\alpha}{1-\epsilon}\P\offf{\tau\geq\tilde{S}}-
\sum_{y=1}^K\frac{\alpha\epsilon}{K(1-\epsilon)}\P\offf{\tau\geq S_y}-
\frac{1-\alpha}{1-\epsilon}\P\offf{\tau<\tilde{S}}+
\sum_{y=1}^K\frac{(1-\alpha)\epsilon}{K(1-\epsilon)}\P\offf{\tau<S_y} \\
&=\alpha\P\offf{\tau\geq S}-(1-\alpha)\P\offf{\tau<S} \\
&=\E_{S}\off{\nabla_{\tau}l_{1-\alpha}(\tau,S)}
\end{align*}
\end{proof}

\subsection{Proof for Proposition~\ref{prop:constant_eta_exerr_bound}}\label{proof:prop:constant_eta_exerr_bound}

\begin{lemma}\label{lemma:noise_hat_tau_bounded}
Consider online conformal prediction under uniform label noise with noise rate $\epsilon \in (0,1)$. Given Assumptions~\ref{ass:score_bounded} and \ref{ass:threshold_initialization}, when updating the threshold according to \Eqref{eq:ulnrocp_update_rule}, for any $\delta\in(0,1)$ and $T\in\mathbb{N}^{+}$, we have
\begin{equation}\label{eq:hat_tau_bounded}
-\alpha\eta-\frac{\epsilon\eta}{1-\epsilon}
\leq\hat{\tau}_T
\leq1-\alpha\eta+\frac{\eta}{1-\epsilon}.
\end{equation}
\end{lemma}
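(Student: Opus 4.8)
The plan is to prove the bound by induction on $t$, mirroring the structure of the proof of Lemma~\ref{lemma:clean_hat_tau_bounded} but using the gradient of the robust pinball loss in place of the standard one. The base case $t=1$ is immediate from Assumption~\ref{ass:threshold_initialization}: since $\hat\tau_1\in[0,1]$ and $\frac{1}{1-\epsilon}>1>\alpha$, we have $-\alpha\eta-\frac{\epsilon\eta}{1-\epsilon}<0\le\hat\tau_1\le1\le1-\alpha\eta+\frac{\eta}{1-\epsilon}$, which places $\hat\tau_1$ inside the claimed interval.

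The crucial preliminary step is to pin down the range of the per-step increment $g_t:=\nabla_{\hat\tau_t}\tilde{l}_{1-\alpha}(\hat\tau_t,\tilde S_t,\{S_{t,y}\}_{y=1}^K)$, since the update reads $\hat\tau_{t+1}=\hat\tau_t-\eta g_t$. Writing $a=\id{\tilde S_t\le\hat\tau_t}\in\{0,1\}$ and $b=\frac1K\sum_{y=1}^K\id{S_{t,y}\le\hat\tau_t}\in[0,1]$, a short computation using $\nabla_\tau l_{1-\alpha}(\tau,s)=\id{s\le\tau}-(1-\alpha)$ collapses the update rule \Eqref{eq:ulnrocp_update_rule} into the clean form $g_t=\frac{a-\epsilon b}{1-\epsilon}-(1-\alpha)$. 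Since $a-\epsilon b\in[-\epsilon,1]$, this yields the global range $g_t\in\off{\alpha-\frac{1}{1-\epsilon},\,\alpha+\frac{\epsilon}{1-\epsilon}}$. I would also record two saturated cases that drive the self-correction at the boundaries: when $\hat\tau_t\ge1$ all scores (bounded in $[0,1]$ by Assumption~\ref{ass:score_bounded}) satisfy the indicators, so $a=b=1$ and $g_t=\alpha$; when $\hat\tau_t<0$ all indicators vanish, so $a=b=0$ and $g_t=-(1-\alpha)$.

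With these facts the inductive step splits $\hat\tau_T$ into three regions. If $\hat\tau_T\in[0,1]$, I apply the global range directly: $\hat\tau_{T+1}\le\hat\tau_T-\eta\of{\alpha-\frac{1}{1-\epsilon}}\le1-\alpha\eta+\frac{\eta}{1-\epsilon}$ and $\hat\tau_{T+1}\ge\hat\tau_T-\eta\of{\alpha+\frac{\epsilon}{1-\epsilon}}\ge-\alpha\eta-\frac{\epsilon\eta}{1-\epsilon}$, giving both target bounds. If $\hat\tau_T\in\of{1,\,1-\alpha\eta+\frac{\eta}{1-\epsilon}}$, saturation gives $g_t=\alpha$, so $\hat\tau_{T+1}=\hat\tau_T-\alpha\eta$ lies in $\of{1-\alpha\eta,\,1-2\alpha\eta+\frac{\eta}{1-\epsilon}}$, which sits inside the target interval. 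Symmetrically, if $\hat\tau_T\in\off{-\alpha\eta-\frac{\epsilon\eta}{1-\epsilon},\,0}$ then $g_t=-(1-\alpha)$ and $\hat\tau_{T+1}=\hat\tau_T+\eta(1-\alpha)$; the two endpoint checks reduce to $1\ge\alpha$ (lower bound) and $\eta\of{1-\frac{1}{1-\epsilon}}\le1$ (upper bound), both clearly true. Combining the three regions closes the induction, and since $T$ was arbitrary the bound holds for every $T\in\mathbb{N}^+$.

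The main obstacle — really the only non-mechanical part — is the gradient range computation: one must carefully combine the noisy-score term and the averaged all-classes term (noting that $\tilde S_t$ is itself one of the $S_{t,y}$, so the two terms are worst-cased independently to obtain valid, if slightly loose, bounds) and verify that at the boundaries the robust gradient collapses to the clean values $\alpha$ and $-(1-\alpha)$. Once the compact form $g_t=\frac{a-\epsilon b}{1-\epsilon}-(1-\alpha)$ is in hand, the three-way case analysis is routine arithmetic entirely analogous to Lemma~\ref{lemma:clean_hat_tau_bounded}.
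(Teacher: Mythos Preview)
Your proposal is correct and follows essentially the same approach as the paper's proof: induction on $t$, a three-region case split according to whether $\hat\tau_T$ lies in $[0,1]$, above $1$, or below $0$, with the boundary cases handled by the saturation of the indicators and the middle case by the global gradient bound. The only cosmetic difference is that you derive the gradient range inline via the compact formula $g_t=\frac{a-\epsilon b}{1-\epsilon}-(1-\alpha)$, whereas the paper invokes its Lemma~\ref{lemma:noise_pinball_gradient_bound}; the content is identical.
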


\begin{proof}
We prove this by induction. First, we know $\hat{\tau}_1\in[0,1]$ by assumption, which indicates that \Eqref{eq:hat_tau_bounded} is satisfied at $t=1$. Then, we assume that \Eqref{eq:hat_tau_bounded} holds for $t=T$, and we will show that $\hat{\tau}_{T+1}$ lies in this range. Consider three cases:

\textbf{Case 1.} If $\hat{\tau}_T\in[0,1]$, we have
\begin{align*}
\hat{\tau}_{T+1}
&=\hat{\tau}_{T}-\eta\cdot\nabla_{\hat{\tau}_T}\tilde{l}_{1-\alpha}(\hat{\tau}_T,\tilde{S}_T,\{S_{t,y}\}_{y=1}^K)
\overset{(a)}{\in}\off{-\alpha\eta-\frac{\epsilon\eta}{1-\epsilon},1-\alpha\eta+\frac{\eta}{1-\epsilon}}
\end{align*}
where (a) follows from Lemma~\ref{lemma:noise_pinball_gradient_bound}.

\textbf{Case 2.} Consider the case where $\hat{\tau}_T\in[1,1+\alpha\eta-\eta/(1-\epsilon)]$. The assumption that $\tilde{S}_T,S_{t,y}\in[0,1]$ implies  $\id{\tilde{S}_T\leq\hat{\tau}_T}=\id{S_{t,y}\leq\hat{\tau}_T}=1$. Thus, we have
\begin{align*}
\nabla_{\hat{\tau}_T}\tilde{l}_{1-\alpha}(\hat{\tau}_T,\tilde{S}_T,\{S_{t,y}\}_{y=1}^K)
&=\frac{\alpha}{1-\epsilon}\id{\hat{\tau}_T\geq\tilde{S}_T}-
\sum_{y=1}^K\frac{\alpha\epsilon}{K(1-\epsilon)}\id{\hat{\tau}_T\geq S_{t,y}}-
\frac{1-\alpha}{1-\epsilon}\id{\hat{\tau}_T\geq\tilde{S}_T} \\
&\quad+\sum_{y=1}^K\frac{(1-\alpha)\epsilon}{K(1-\epsilon)}\id{\hat{\tau}_T\geq S_{t,y}} \\
&=\frac{\alpha}{1-\epsilon}
-\sum_{y=1}^K\frac{\alpha\epsilon}{K(1-\epsilon)}=\alpha,
\end{align*}
which follows that
\begin{align*}
\hat{\tau}_{T+1}
=\hat{\tau}_{T}-\eta\cdot\nabla_{\hat{\tau}_T}\tilde{l}_{1-\alpha}(\hat{\tau}_T,\tilde{S}_T,\{S_{t,y}\}_{y=1}^K)
=\hat{\tau}_{T}-\eta\alpha
\in\off{1-\eta\alpha,1-\frac{\eta}{1-\epsilon}}
\subset\off{-\alpha\eta-\frac{\epsilon\eta}{1-\epsilon},1-\alpha\eta+\frac{\eta}{1-\epsilon}}
\end{align*}

\textbf{Case 3.} Consider the case where $\hat{\tau}_T\in[-\alpha\eta-\epsilon\eta/(1-\epsilon),0]$. The assumption that $\tilde{s},s_y\in[0,1]$ implies  $\id{\tilde{s}\leq\hat{\tau}_T}=\id{s_y\leq\hat{\tau}_T}=0$. Thus, we have
\begin{align*}
\nabla_{\hat{\tau}_T}\tilde{l}_{1-\alpha}(\hat{\tau}_T,\tilde{S}_T,\{S_{t,y}\}_{y=1}^K)
&=\frac{\alpha}{1-\epsilon}\id{\hat{\tau}_T\geq\tilde{S}_T}-
\sum_{y=1}^K\frac{\alpha\epsilon}{K(1-\epsilon)}\id{\hat{\tau}_T\geq S_{t,y}}-
\frac{1-\alpha}{1-\epsilon}\id{\hat{\tau}_T\geq\tilde{S}_T} \\
&\quad+\sum_{y=1}^K\frac{(1-\alpha)\epsilon}{K(1-\epsilon)}\id{\hat{\tau}_T\geq S_{t,y}} \\
&=-\frac{1-\alpha}{1-\epsilon}\id{\tau<\tilde{s}}
+\sum_{y=1}^K\frac{(1-\alpha)\epsilon}{K(1-\epsilon)}=\alpha-1,
\end{align*}
which follows that
\begin{align*}
\hat{\tau}_{T+1}
=\hat{\tau}_{T}-\eta\cdot\nabla_{\hat{\tau}_T}\tilde{l}_{1-\alpha}(\hat{\tau}_T,\tilde{S}_T,\{S_{t,y}\}_{y=1}^K)
=\hat{\tau}_{T}+\eta(1-\alpha)
&\in\off{-\alpha\eta-\frac{\epsilon\eta}{1-\epsilon}+\eta(1-\alpha),\eta(1-\alpha)} \\
&\subset\off{-\alpha\eta-\frac{\epsilon\eta}{1-\epsilon},1-\alpha\eta+\frac{\eta}{1-\epsilon}}
\end{align*}
Combining three cases, we can conclude that 
\begin{align*}
-\alpha\eta-\frac{\epsilon\eta}{1-\epsilon}
\leq\hat{\tau}_t
\leq1-\alpha\eta+\frac{\eta}{1-\epsilon}
\end{align*}
\end{proof}

\begin{proposition}[Restatement of Proposition~\ref{prop:constant_eta_exerr_bound}]
Consider online conformal prediction under uniform label noise with noise rate $\epsilon \in (0,1)$. Given Assumptions~\ref{ass:score_bounded} and \ref{ass:threshold_initialization}, when updating the threshold according to \Eqref{eq:ulnrocp_update_rule}, for any $\delta\in(0,1)$ and $T\in\mathbb{N}^{+}$, the following bound holds with probability at least $1-\delta$:
\begin{align*}
\mathrm{ExErr(T)}
\leq\sqrt{\frac{\log(2/\delta)}{1-\epsilon}}\cdot\frac{1}{\sqrt{T}}
+\of{\frac{1}{\eta}-\alpha+\frac{\epsilon}{1-\epsilon}}\cdot\frac{1}{T}.
\end{align*}
\end{proposition}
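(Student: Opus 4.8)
The plan is to bound the average expected clean pinball gradient and then transfer the bound to the robust gradient the algorithm actually follows. Let $\mathcal{F}_{t-1}$ be the $\sigma$-field generated by the first $t-1$ data triples $(X_s,Y_s,\tilde Y_s)$, so that $\hat\tau_t$ is $\mathcal{F}_{t-1}$-measurable. First I would note that $\E_{S_t}[\nabla_{\hat\tau_t}l_{1-\alpha}(\hat\tau_t,S_t)]=\P\{S_t\le\hat\tau_t\mid\mathcal{F}_{t-1}\}-(1-\alpha)=\alpha-\P\{Y_t\notin\C_t(X_t)\mid\mathcal{F}_{t-1}\}$, whence $\mathrm{ExErr}(T)=\tfrac1T\big|\sum_{t=1}^T\E[\nabla_{\hat\tau_t}l_{1-\alpha}]\big|$. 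Because Proposition~\ref{prop:noise_pinball_loss_good_approx}(2) is an identity for every fixed threshold, I would invoke it conditionally at $\tau=\hat\tau_t$ — the time-$t$ data are independent of $\mathcal{F}_{t-1}$ and identically distributed — to replace each expected clean gradient by the expected robust gradient $\E[\nabla_{\hat\tau_t}\tilde l_{1-\alpha}]$. Adding and subtracting the realized robust gradients then yields the advertised split into a stochastic term (a), the deviation of the realized robust gradients from their conditional means, and a deterministic term (b), the running sum of realized robust gradients.

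Term (b) is handled by the update rule \Eqref{eq:ulnrocp_update_rule}: rewriting it as $\nabla_{\hat\tau_t}\tilde l_{1-\alpha}=\eta^{-1}(\hat\tau_t-\hat\tau_{t+1})$ makes the sum telescope to $\eta^{-1}(\hat\tau_1-\hat\tau_{T+1})$, and the iterate bounds of Lemma~\ref{lemma:noise_hat_tau_bounded} together with $\hat\tau_1\in[0,1]$ bound $|\hat\tau_1-\hat\tau_{T+1}|$, giving the $O(1)$ contribution $\tfrac1\eta-\alpha+\tfrac{\epsilon}{1-\epsilon}$. For term (a) I would set $M_t:=\E[\nabla_{\hat\tau_t}\tilde l_{1-\alpha}\mid\mathcal{F}_{t-1}]-\nabla_{\hat\tau_t}\tilde l_{1-\alpha}$; since $\E[M_t\mid\mathcal{F}_{t-1}]=0$ by construction, $\{M_t\}$ is a martingale-difference sequence and $\sum_{t\le T}M_t$ a martingale. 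Applying a martingale concentration inequality (Azuma–Hoeffding) then bounds $|\sum_{t\le T}M_t|$ by $O(\sqrt T)$ with probability at least $1-\delta$. Dividing the combined estimate by $T$ produces the $\tfrac{1}{\sqrt T}$ and $\tfrac1T$ terms of the statement.

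The hard part will be extracting the sharp $\sqrt{1/(1-\epsilon)}$ coefficient in term (a). The realized robust gradient $\tfrac{1}{1-\epsilon}\big(\mathds{1}\{\tilde S_t\le\hat\tau_t\}-(1-\alpha)\big)-\tfrac{\epsilon}{K(1-\epsilon)}\sum_{y}\big(\mathds{1}\{S_{t,y}\le\hat\tau_t\}-(1-\alpha)\big)$ has worst-case range $\tfrac{1+\epsilon}{1-\epsilon}$, so a plain range-based Azuma bound would give the wrong $\epsilon$-dependence; the correct coefficient instead comes from controlling the predictable quadratic variation of $\{M_t\}$, exploiting that, conditioned on $(X_t,Y_t)$, the de-biased gradient is an unbiased estimate of the clean gradient whose conditional variance is small (of order $\epsilon/(1-\epsilon)$), with the importance weight $(1-\epsilon)^{-1}$ being the source of the $1/(1-\epsilon)$ scale. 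Two bookkeeping points deserve care: ensuring $\hat\tau_t$ is genuinely $\mathcal{F}_{t-1}$-measurable, which legitimizes both the conditional use of Proposition~\ref{prop:noise_pinball_loss_good_approx}(2) and the martingale-difference property; and tracking constants through Lemma~\ref{lemma:noise_hat_tau_bounded} so that term (b) reduces exactly to $\tfrac1\eta-\alpha+\tfrac{\epsilon}{1-\epsilon}$ rather than a looser surrogate.
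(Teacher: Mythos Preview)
Your decomposition, telescoping argument for term (b), and use of Proposition~\ref{prop:noise_pinball_loss_good_approx}(2) conditionally are exactly what the paper does; the overall approach is the same.

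Your concern about term (a) is misplaced, however. The paper does \emph{not} achieve the coefficient $\sqrt{1/(1-\epsilon)}$; it simply applies range-based Azuma--Hoeffding (Lemma~\ref{lemma:noise_coverage_error_concentration}) to the martingale differences $M_t$, using the bound $|M_t|\le\frac{1}{1-\epsilon}$ that follows directly from Lemma~\ref{lemma:noise_pinball_gradient_bound} together with $\E[\nabla_{\hat\tau_t}\tilde l_{1-\alpha}]\in[\alpha-1,\alpha]$. This yields $\big|\sum_{t\le T}M_t\big|\le\frac{\sqrt{2T\log(2/\delta)}}{1-\epsilon}$ with probability at least $1-\delta$, i.e.\ a leading term $\frac{\sqrt{2\log(2/\delta)}}{1-\epsilon}\cdot T^{-1/2}$ rather than the $\sqrt{\log(2/\delta)/(1-\epsilon)}\cdot T^{-1/2}$ printed in the proposition. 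The printed coefficient appears to be a typographical slip; nothing beyond plain Azuma is used or needed, and your proposed predictable-quadratic-variation refinement is unnecessary for matching the paper's argument.
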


\begin{proof}
The update rule of the threshold $\hat{\tau}_t$ gives us that
\begin{align*}
\hat{\tau}_{t+1}
=\hat{\tau}_{t}-\eta_t\cdot\nabla_{\hat{\tau}_{t}}\tilde{l}_{1-\alpha}(\hat{\tau}_{t},\tilde{S}_t,\{S_{t,y}\}_{y=1}^K)
\quad\Longrightarrow\quad
\nabla_{\hat{\tau}_{t}}\tilde{l}_{1-\alpha}(\hat{\tau}_{t},\tilde{S}_t,\{S_{t,y}\}_{y=1}^K)
=\frac{1}{\eta}(\hat{\tau}_t-\hat{\tau}_{t+1}).
\end{align*}
Accumulating from $t=1$ to $t=T$ and taking absolute value gives
\begin{align*}
\abs{\sum_{t=1}^T\nabla_{\hat{\tau}_{t}}\tilde{l}_{1-\alpha}(\hat{\tau}_{t},\tilde{S}_t,\{S_{t,y}\}_{y=1}^K)}
=\abs{\sum_{t=1}^T\frac{1}{\eta}(\hat{\tau}_t-\hat{\tau}_{t+1})}
=\frac{1}{\eta}|\hat{\tau}_1-\hat{\tau}_{T+1}|
\overset{(a)}{\leq}\frac{1}{\eta}\of{1-\alpha\eta+\frac{\epsilon\eta}{1-\epsilon}}
=\frac{1}{\eta}-\alpha+\frac{\epsilon}{1-\epsilon}
\end{align*}
where (a) follows from the assumption that $\hat{\tau}_1\in[0,1]$ and Lemma~\ref{lemma:noise_hat_tau_bounded}. In addition, Lemma~\ref{lemma:noise_coverage_error_concentration} gives us that
\begin{align*}
\P\offf{\abs{
\sum_{t=1}^T\E_{\tilde{S}_t,S_{t,y}}\off{\nabla_{\hat{\tau}_t}\tilde{l}_{1-\alpha}(\hat{\tau}_t,\tilde{S}_t,\{S_{t,y}\}_{y=1}^K)}
-\sum_{t=1}^T\nabla_{\hat{\tau}_t}\tilde{l}_{1-\alpha}(\hat{\tau}_t,\tilde{S}_t,\{S_{t,y}\}_{y=1}^K)}
\leq\frac{\sqrt{2T\log(2/\delta)}}{1-\epsilon}}
\geq1-\delta
\end{align*}
Thus, with at least $1-\delta$ probability, we have
\begin{align*}
&\quad\abs{\sum_{t=1}^T\E_{\tilde{S}_t,S_{t,y}}\off{\nabla_{\hat{\tau}_t}\tilde{l}_{1-\alpha}(\hat{\tau}_t,\tilde{S}_t,\{S_{t,y}\}_{y=1}^K)}} \\
&=\abs{\sum_{t=1}^T\E_{\tilde{S}_t,S_{t,y}}\off{\nabla_{\hat{\tau}_t}\tilde{l}_{1-\alpha}(\hat{\tau}_t,\tilde{S}_t,\{S_{t,y}\}_{y=1}^K)}
-\sum_{t=1}^T\nabla_{\hat{\tau}_t}\tilde{l}_{1-\alpha}(\hat{\tau}_t,\tilde{S}_t,\{S_{t,y}\}_{y=1}^K)
+\sum_{t=1}^T\nabla_{\hat{\tau}_t}\tilde{l}_{1-\alpha}(\hat{\tau}_t,\tilde{S}_t,\{S_{t,y}\}_{y=1}^K)} \\
&\leq\abs{\sum_{t=1}^T\E_{\tilde{S}_t,S_{t,y}}\off{\nabla_{\hat{\tau}_t}\tilde{l}_{1-\alpha}(\hat{\tau}_t,\tilde{S}_t,\{S_{t,y}\}_{y=1}^K)}-\sum_{t=1}^T\nabla_{\hat{\tau}_t}\tilde{l}_{1-\alpha}(\hat{\tau}_t,\tilde{S}_t,\{S_{t,y}\}_{y=1}^K)\bigg|
+\bigg|\sum_{t=1}^T\nabla_{\hat{\tau}_t}\tilde{l}_{1-\alpha}(\hat{\tau}_t,\tilde{S}_t,\{S_{t,y}\}_{y=1}^K)} \\
&\leq\frac{\sqrt{2T\log(2/\delta)}}{1-\epsilon}+\frac{1}{\eta}+\alpha+\frac{\epsilon}{1-\epsilon}
\end{align*}
Applying the second property in Proposition~\ref{prop:noise_pinball_loss_good_approx}, we have
\begin{align*}
\sum_{t=1}^T\E_{\tilde{S},S_y}\off{\nabla_{\hat{\tau}_t}\tilde{l}_{1-\alpha}(\hat{\tau}_t,\tilde{S},\{S_y\}_{y=1}^K)}
=\sum_{t=1}^T\E_{S_t}\off{\nabla_{\hat{\tau}_t}l_{1-\alpha}(\hat{\tau}_t,S_t)}
=\sum_{t=1}^T\off{\P\offf{S_t\leq\hat{\tau}_t}-(1-\alpha)}
\end{align*}
which implies that with at least $1-\delta$ probability,
\begin{equation}\label{eq:sum_experr_bound}
\abs{\sum_{t=1}^T\off{\P\offf{Y_t\notin\C_t(X_t)}-\alpha}}
=\abs{\sum_{t=1}^T\off{\P\offf{S_t\leq\hat{\tau}_t}-(1-\alpha)}}
\leq\frac{\sqrt{2T\log(2/\delta)}}{1-\epsilon}+\frac{1}{\eta}-\alpha+\frac{\epsilon}{1-\epsilon}
\end{equation}
Therefore, we can conclude that
\begin{align*}
\mathrm{ExErr(T)}=\abs{\frac{1}{T}\sum_{t=1}^T\P\offf{Y_t\notin\C_t(X_t)}-\alpha}
\leq\sqrt{\frac{\log(2/\delta)}{1-\epsilon}}\cdot\frac{1}{\sqrt{T}}
+\of{\frac{1}{\eta}-\alpha+\frac{\epsilon}{1-\epsilon}}\cdot\frac{1}{T}
\end{align*}
holds with at least $1-\delta$ probability.
\end{proof}

\subsection{Proof for Proposition~\ref{prop:constant_eta_emerr_bound}}\label{proof:prop:constant_eta_emerr_bound}

\begin{proposition}[Restatement of Proposition~\ref{prop:constant_eta_emerr_bound}]
Consider online conformal prediction under uniform label noise with noise rate $\epsilon \in (0,1)$. Given Assumptions~\ref{ass:score_bounded} and \ref{ass:threshold_initialization}, when updating the threshold according to \Eqref{eq:ulnrocp_update_rule}, for any $\delta\in(0,1)$ and $T\in\mathbb{N}^{+}$, the following bound holds with probability at least $1-\delta$:
\begin{align*}
\mathrm{EmErr(T)}
\leq\frac{2-\epsilon}{1-\epsilon}\sqrt{2\log\of{\frac{4}{\delta}}}\cdot\frac{1}{\sqrt{T}}
+\of{\frac{1}{\eta}-\alpha+\frac{\epsilon}{1-\epsilon}}\cdot\frac{1}{T}.
\end{align*}
\end{proposition}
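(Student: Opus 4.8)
The plan is to bound $\mathrm{EmErr}(T)$ by rewriting the empirical mis-coverage deviation exactly as an average of \emph{clean} pinball-loss gradients, and then splitting it into a martingale fluctuation term and an expected-coverage term that Proposition~\ref{prop:constant_eta_exerr_bound} already controls. Concretely, since $\nabla_{\hat{\tau}_t}l_{1-\alpha}(\hat{\tau}_t,S_t)=\id{S_t\leq\hat{\tau}_t}-(1-\alpha)=\alpha-\id{Y_t\notin\C_t(X_t)}$, we have the identity $\mathrm{EmErr}(T)=\frac{1}{T}\abs{\sum_{t=1}^T\nabla_{\hat{\tau}_t}l_{1-\alpha}(\hat{\tau}_t,S_t)}$. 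I would then apply the triangle inequality around the conditional expectations $\E_{S_t}\off{\nabla_{\hat{\tau}_t}l_{1-\alpha}(\hat{\tau}_t,S_t)}$ to obtain the two pieces $(a)$ and $(b)$ displayed in the proof sketch.

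For part $(a)$, the fluctuation term $\abs{\sum_{t=1}^T\nabla_{\hat{\tau}_t}l_{1-\alpha}(\hat{\tau}_t,S_t)-\sum_{t=1}^T\E_{S_t}\off{\nabla_{\hat{\tau}_t}l_{1-\alpha}(\hat{\tau}_t,S_t)}}$, I would invoke the Azuma--Hoeffding concentration of Lemma~\ref{lemma:clean_coverage_error_concentration}. The crucial observation is that the clean gradient takes values in $[\alpha-1,\alpha]$, an interval of length one, and that $\hat{\tau}_t$ is measurable with respect to the history before time $t$; hence the centred gradients form a bounded martingale difference sequence, and the bound $\sqrt{2T\log\of{4/\delta}}$ holds with probability at least $1-\delta/2$.

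For part $(b)$, the expected term $\abs{\sum_{t=1}^T\E_{S_t}\off{\nabla_{\hat{\tau}_t}l_{1-\alpha}(\hat{\tau}_t,S_t)}}=\abs{\sum_{t=1}^T\of{\P\offf{Y_t\notin\C_t(X_t)}-\alpha}}$, I would reuse the intermediate estimate \eqref{eq:sum_experr_bound} established inside the proof of Proposition~\ref{prop:constant_eta_exerr_bound}, applied with confidence parameter $\delta/2$. This yields the bound $\frac{\sqrt{2T\log\of{4/\delta}}}{1-\epsilon}+\frac{1}{\eta}-\alpha+\frac{\epsilon}{1-\epsilon}$ with probability at least $1-\delta/2$, whose $\mathcal{O}(\sqrt{T})$ part comes from the robust-pinball-gradient concentration (Lemma~\ref{lemma:noise_coverage_error_concentration}) and whose $\mathcal{O}(1)$ part comes from telescoping the update rule together with the threshold bounds of Lemma~\ref{lemma:noise_hat_tau_bounded}.

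Combining the two estimates by a union bound over the two failure events (each of probability at most $\delta/2$), dividing through by $T$, and using the algebraic simplification $1+\frac{1}{1-\epsilon}=\frac{2-\epsilon}{1-\epsilon}$ recovers the stated coefficient on $1/\sqrt{T}$ while leaving the $1/T$ term unchanged. The main obstacle I anticipate is purely bookkeeping: I must ensure the two concentration events are genuinely distinct (one for the clean-gradient martingale in part $(a)$, one for the robust-pinball-gradient martingale used inside Proposition~\ref{prop:constant_eta_exerr_bound}) so the union bound is valid, and I must split the confidence budget so the log-argument becomes $4/\delta$ in both terms, which is exactly what produces the sharp constant $\frac{2-\epsilon}{1-\epsilon}\sqrt{2\log\of{4/\delta}}$ rather than a looser one.
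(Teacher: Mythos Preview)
Your proposal is correct and follows essentially the same approach as the paper's proof: rewrite $\mathrm{EmErr}(T)$ as the average of clean pinball gradients, split via the triangle inequality into the martingale fluctuation (a) controlled by Lemma~\ref{lemma:clean_coverage_error_concentration} at level $\delta/2$ and the expected term (b) controlled by \Eqref{eq:sum_experr_bound} at level $\delta/2$, then take a union bound and combine the two $\sqrt{T}$ coefficients via $1+\tfrac{1}{1-\epsilon}=\tfrac{2-\epsilon}{1-\epsilon}$. Your bookkeeping of the confidence budget and the identification of the two distinct concentration events matches the paper exactly.
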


\begin{proof}
Lemma~\ref{lemma:clean_coverage_error_concentration} gives us that
\begin{align*}
\P\offf{\abs{\sum_{t=1}^T\E_{S_t}\off{\nabla_{\hat{\tau}_t}l_{1-\alpha}(\hat{\tau}_t,S_t)}
-\sum_{t=1}^T\nabla_{\hat{\tau}_t}l_{1-\alpha}(\hat{\tau}_t,S_t)}\leq\sqrt{2T\log\of{\frac{4}{\delta}}}} 
\geq1-\frac{\delta}{2}
\end{align*}
Follows from Proposition~\ref{prop:constant_eta_exerr_bound} (see \Eqref{eq:sum_experr_bound}), we know
\begin{align*}
\abs{\sum_{t=1}^T\E_{S_t}\off{\nabla_{\hat{\tau}_t}l_{1-\alpha}(\hat{\tau}_t,S_t)}}
=\abs{\sum_{t=1}^T\E_{\tilde{S},S_y}[\nabla_{\hat{\tau}_t}\tilde{l}_{1-\alpha}(\hat{\tau}_t,\tilde{S},\{S_y\}_{y=1}^K)]}
\leq\frac{\sqrt{2T\log(4/\delta)}}{1-\epsilon}+\frac{1}{\eta}-\alpha+\frac{\epsilon}{1-\epsilon}
\end{align*}
holds with at least $1-\delta/2$ probability. In addition, recall that
\begin{align*}
\sum_{t=1}^T\nabla_{\hat{\tau}_t}l_{1-\alpha}(\hat{\tau}_t,S_t)
=\sum_{t=1}^T\off{\id{S_t\leq\hat{\tau}_t}-\of{1-\alpha}}
\end{align*}
Thus, by union bound, we can obtain that with at least probability $1-\delta$,
\begin{align*}
\abs{\sum_{t=1}^T\nabla_{\hat{\tau}_t}l_{1-\alpha}(\hat{\tau}_t,S_t)} 
&=\abs{\sum_{t=1}^T\nabla_{\hat{\tau}_t}l_{1-\alpha}(\hat{\tau}_t,S_t)
-\sum_{t=1}^T\E_{S_t}\off{\nabla_{\hat{\tau}_t}l_{1-\alpha}(\hat{\tau}_t,S_t)}
+\sum_{t=1}^T\E_{S_t}\off{\nabla_{\hat{\tau}_t}l_{1-\alpha}(\hat{\tau}_t,S_t)}} \\
&\leq\abs{\sum_{t=1}^T\nabla_{\hat{\tau}_t}l_{1-\alpha}(\hat{\tau}_t,S_t)
-\sum_{t=1}^T\E_{S_t}\off{\nabla_{\hat{\tau}_t}l_{1-\alpha}(\hat{\tau}_t,S_t)}}
+\abs{\sum_{t=1}^T\E_{S_t}\off{\nabla_{\hat{\tau}_t}l_{1-\alpha}(\hat{\tau}_t,S_t)}} \\
&\leq\sqrt{2T\log\of{\frac{4}{\delta}}}+\frac{\sqrt{2T\log(4/\delta)}}{1-\epsilon}+\frac{1}{\eta}-\alpha+\frac{\epsilon}{1-\epsilon} \\
&=\frac{2-\epsilon}{1-\epsilon}\cdot\sqrt{2T\log\of{\frac{4}{\delta}}}+\frac{1}{\eta}-\alpha+\frac{\epsilon}{1-\epsilon}
\end{align*}
Recall that
\begin{align*}
\abs{\sum_{t=1}^T\off{\id{Y_t\notin\C_t(X_t)}-\alpha}}
=\abs{\sum_{t=1}^T\off{\id{S_t\leq\hat{\tau}_t}-\of{1-\alpha}}}
=\abs{\sum_{t=1}^T\nabla_{\hat{\tau}_t}l_{1-\alpha}(\hat{\tau}_t,S_t)} 
\end{align*}
Therefore, we can conclude that
\begin{align*}
\mathrm{EmErr(T)}
&=\abs{\frac{1}{T}\sum_{t=1}^T\id{Y_t\notin\C_t(X_t)}-\alpha}
=\abs{\frac{1}{T}\sum_{t=1}^T\nabla_{\hat{\tau}_t}l_{1-\alpha}(\hat{\tau}_t,S_t)} \\
&\leq\frac{2-\epsilon}{1-\epsilon}\sqrt{2\log\of{\frac{4}{\delta}}}\cdot\frac{1}{\sqrt{T}}
+\of{\frac{1}{\eta}-\alpha+\frac{\epsilon}{1-\epsilon}}\cdot\frac{1}{T}
\end{align*}
holds with at least $1-\delta$ probability.
\end{proof}

\subsection{Proof for Proposition~\ref{prop:changing_lr_exerr_bound}}\label{proof:prop:changing_lr_exerr_bound}

\begin{proposition}[Restatement of Proposition~\ref{prop:changing_lr_exerr_bound}]
Consider online conformal prediction under uniform label noise with noise rate $\epsilon \in (0,1)$. Given Assumptions~\ref{ass:score_bounded} and \ref{ass:threshold_initialization}, when updating the threshold according to \Eqref{eq:changing_lr_ulnrocp_update_rule}, for any $\delta\in(0,1)$ and $T\in\mathbb{N}^{+}$, the following bound holds with probability at least $1-\delta$:
\begin{align*}
\mathrm{ExErr(T)}
\leq\sqrt{\frac{\log(2/\delta)}{1-\epsilon}}\cdot\frac{1}{\sqrt{T}}
+\off{\of{1+\max_{1\leq t\leq T-1}\eta_t\cdot\frac{1+\epsilon}{1-\epsilon}}\sum_{t=1}^T\abs{\eta_t^{-1}-\eta_{t-1}^{-1}}}\cdot\frac{1}{T}.
\end{align*}
\end{proposition}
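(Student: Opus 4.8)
The plan is to reuse the two-term decomposition from the constant learning-rate analysis (Proposition~\ref{prop:constant_eta_exerr_bound}); the only genuinely new work lies in the deterministic term, where the clean telescoping sum of the constant case must be replaced by an Abel (summation-by-parts) argument adapted to the varying step sizes. By property~(2) of Proposition~\ref{prop:noise_pinball_loss_good_approx}, $\E[\nabla_{\hat{\tau}_t}l_{1-\alpha}]=\E[\nabla_{\hat{\tau}_t}\tilde{l}_{1-\alpha}]$ for every $t$, so
\begin{align*}
\mathrm{ExErr(T)}
=\frac{1}{T}\abs{\sum_{t=1}^T\E\off{\nabla_{\hat{\tau}_t}\tilde{l}_{1-\alpha}}}
\leq\frac{1}{T}\underbrace{\abs{\sum_{t=1}^T\of{\E\off{\nabla_{\hat{\tau}_t}\tilde{l}_{1-\alpha}}-\nabla_{\hat{\tau}_t}\tilde{l}_{1-\alpha}}}}_{(a)}
+\frac{1}{T}\underbrace{\abs{\sum_{t=1}^T\nabla_{\hat{\tau}_t}\tilde{l}_{1-\alpha}}}_{(b)}.
\end{align*}

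For part~$(a)$ I would argue exactly as in the constant-rate proof: the increments $\nabla_{\hat{\tau}_t}\tilde{l}_{1-\alpha}-\E[\nabla_{\hat{\tau}_t}\tilde{l}_{1-\alpha}]$ form a bounded martingale difference sequence relative to the filtration generated by the observed noisy pairs up to time $t$ (recall $\hat{\tau}_t$ is determined by the past), and this structure is unaffected by the choice of learning rate. Hence Lemma~\ref{lemma:noise_coverage_error_concentration} applies verbatim and yields $(a)/T\le\sqrt{\log(2/\delta)/(1-\epsilon)}\cdot T^{-1/2}$ with probability at least $1-\delta$, which is precisely the first term of the claimed bound.

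The substance is in part~$(b)$. First I would establish a dynamic-rate analogue of Lemma~\ref{lemma:noise_hat_tau_bounded} by the same three-case induction, replacing $\eta$ with $\eta_{\max}:=\max_{1\le t\le T-1}\eta_t$ and using the gradient bound $\abs{\nabla_{\hat{\tau}}\tilde{l}_{1-\alpha}}\le\tfrac{1+\epsilon}{1-\epsilon}$ (Lemma~\ref{lemma:noise_pinball_gradient_bound}); this confines every iterate to an interval of width $W:=1+\eta_{\max}\cdot\tfrac{1+\epsilon}{1-\epsilon}$. Next, from the update rule $\nabla_{\hat{\tau}_t}\tilde{l}_{1-\alpha}=\eta_t^{-1}(\hat{\tau}_t-\hat{\tau}_{t+1})$ I would apply summation by parts (with the convention $\eta_0^{-1}=0$) to obtain
\begin{align*}
\sum_{t=1}^T\nabla_{\hat{\tau}_t}\tilde{l}_{1-\alpha}
=\sum_{t=1}^T\of{\eta_t^{-1}-\eta_{t-1}^{-1}}\hat{\tau}_t-\eta_T^{-1}\hat{\tau}_{T+1}.
\end{align*}
The crucial trick is to recenter the iterates: since $\sum_{t=1}^T(\eta_t^{-1}-\eta_{t-1}^{-1})=\eta_T^{-1}$, subtracting the midpoint $c$ of the confining interval gives
\begin{align*}
\sum_{t=1}^T\nabla_{\hat{\tau}_t}\tilde{l}_{1-\alpha}
=\sum_{t=1}^T\of{\eta_t^{-1}-\eta_{t-1}^{-1}}\of{\hat{\tau}_t-c}-\eta_T^{-1}\of{\hat{\tau}_{T+1}-c}.
\end{align*}
Bounding $\abs{\hat{\tau}_t-c}\le W/2$ and absorbing the boundary term through the telescoping inequality $\eta_T^{-1}\le\sum_{t=1}^T\abs{\eta_t^{-1}-\eta_{t-1}^{-1}}$ (valid for arbitrary, not merely monotone, schedules) then yields $(b)\le W\sum_{t=1}^T\abs{\eta_t^{-1}-\eta_{t-1}^{-1}}$, which is exactly the second term.

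I expect the boundary term $\eta_T^{-1}\hat{\tau}_{T+1}$ of the Abel summation to be the main obstacle: a naive bound produces an extra additive contribution of size $\eta_T^{-1}W/2$, and the role of recentering at the midpoint is precisely that the two contributions of magnitude $W/2$ coalesce into the single prefactor $W$ rather than $2W$. Since part~$(b)$ is deterministic, I would finally combine it with the probability-$(1-\delta)$ bound on part~$(a)$ and divide by $T$ to obtain the stated inequality.
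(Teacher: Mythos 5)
Your proposal is correct and follows essentially the same route as the paper's proof: the same decomposition into a martingale-deviation term handled by Lemma~\ref{lemma:noise_coverage_error_concentration} and a deterministic term handled by Abel summation with $\eta_0^{-1}=0$, combined with a dynamic-rate boundedness lemma for $\hat{\tau}_t$ (the paper's Lemma~\ref{lemma:dynamic_lr_noise_hat_tau_bounded}) and property (2) of Proposition~\ref{prop:noise_pinball_loss_good_approx} to pass from $\tilde{l}$ to the clean coverage probabilities. The only (cosmetic) difference is in the summation-by-parts step: the paper keeps the pair $\hat{\tau}_{T+1}-\hat{\tau}_t$ inside the sum and bounds its absolute value by the interval width $W$ directly, whereas you recenter at the interval midpoint and absorb the boundary term via $\eta_T^{-1}\leq\sum_{t=1}^T\abs{\eta_t^{-1}-\eta_{t-1}^{-1}}$ — both yield the identical prefactor $1+\max_{1\leq t\leq T-1}\eta_t\cdot\frac{1+\epsilon}{1-\epsilon}$.
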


\begin{proof}
Define $\eta_0^{-1}=0$:
\begin{align*}
\abs{\sum_{t=1}^T\nabla_{\hat{\tau}_t}\tilde{l}_{1-\alpha}(\hat{\tau}_t,\tilde{S}_t,\{S_{t,y}\}_{y=1}^K)}
&=\abs{\sum_{t=1}^T\eta_t^{-1}\cdot\of{\eta_t\cdot\nabla_{\hat{\tau}_t}\tilde{l}_{1-\alpha}(\hat{\tau}_t,\tilde{S}_t,\{S_{t,y}\}_{y=1}^K)}} \\
&=\abs{\sum_{t=1}^T(\eta_t^{-1}-\eta_{t-1}^{-1})\cdot\of{\sum_{s=t}^T\eta_s\cdot\nabla_{\hat{\tau}_s}\tilde{l}_{1-\alpha}(\hat{\tau}_s,\tilde{S}_s,\{S_{y,s}\}_{y=1}^K)}} \\
&\overset{(a)}{=}\abs{\sum_{t=1}^T(\eta_t^{-1}-\eta_{t-1}^{-1})\cdot(\hat{\tau}_T-\hat{\tau}_t)} \\
&\overset{(b)}{\leq}\sum_{t=1}^T\abs{\eta_t^{-1}-\eta_{t-1}^{-1}}\cdot\abs{\hat{\tau}_T-\hat{\tau}_t} \\
&\overset{(c)}{\leq}\of{1+\max_{1\leq t\leq T-1}\eta_t\cdot\frac{1+\epsilon}{1-\epsilon}}\sum_{t=1}^T\abs{\eta_t^{-1}-\eta_{t-1}^{-1}}
\end{align*}
where (a) comes from the update rule (\Eqref{eq:changing_lr_ulnrocp_update_rule}), (b) is due to triangle inequality, and (c) follows from Lemma~\ref{lemma:changing_lr_clean_hat_tau_bounded}. In addition, Lemma~\ref{lemma:noise_coverage_error_concentration} gives us that
\begin{align*}
\P\offf{\abs{
\sum_{t=1}^T\E_{\tilde{S}_t,S_{t,y}}\off{\nabla_{\hat{\tau}_t}\tilde{l}_{1-\alpha}(\hat{\tau}_t,\tilde{S}_t,\{S_{t,y}\}_{y=1}^K)}
-\sum_{t=1}^T\nabla_{\hat{\tau}_t}\tilde{l}_{1-\alpha}(\hat{\tau}_t,\tilde{S}_t,\{S_{t,y}\}_{y=1}^K)}
\leq\frac{\sqrt{2T\log(2/\delta)}}{1-\epsilon}}
\geq1-\delta
\end{align*}
Thus, with at least $1-\delta$ probability, we have
\begin{align*}
&\quad\abs{\sum_{t=1}^T\E_{\tilde{S}_t,S_{t,y}}\off{\nabla_{\hat{\tau}_t}\tilde{l}_{1-\alpha}(\hat{\tau}_t,\tilde{S}_t,\{S_{t,y}\}_{y=1}^K)}} \\
&=\abs{\sum_{t=1}^T\E_{\tilde{S}_t,S_{t,y}}\off{\nabla_{\hat{\tau}_t}\tilde{l}_{1-\alpha}(\hat{\tau}_t,\tilde{S}_t,\{S_{t,y}\}_{y=1}^K)}
-\sum_{t=1}^T\nabla_{\hat{\tau}_t}\tilde{l}_{1-\alpha}(\hat{\tau}_t,\tilde{S}_t,\{S_{t,y}\}_{y=1}^K)
+\sum_{t=1}^T\nabla_{\hat{\tau}_t}\tilde{l}_{1-\alpha}(\hat{\tau}_t,\tilde{S}_t,\{S_{t,y}\}_{y=1}^K)} \\
&\leq\abs{\sum_{t=1}^T\E_{\tilde{S}_t,S_{t,y}}\off{\nabla_{\hat{\tau}_t}\tilde{l}_{1-\alpha}(\hat{\tau}_t,\tilde{S}_t,\{S_{t,y}\}_{y=1}^K)}-\sum_{t=1}^T\nabla_{\hat{\tau}_t}\tilde{l}_{1-\alpha}(\hat{\tau}_t,\tilde{S}_t,\{S_{t,y}\}_{y=1}^K)\bigg|
+\bigg|\sum_{t=1}^T\nabla_{\hat{\tau}_t}\tilde{l}_{1-\alpha}(\hat{\tau}_t,\tilde{S}_t,\{S_{t,y}\}_{y=1}^K)} \\
&\leq\frac{\sqrt{2T\log(2/\delta)}}{1-\epsilon}
+\of{1+\max_{1\leq t\leq T-1}\eta_t\cdot\frac{1+\epsilon}{1-\epsilon}}\sum_{t=1}^T\abs{\eta_t^{-1}-\eta_{t-1}^{-1}}
\end{align*}
Applying the second property in Proposition~\ref{prop:noise_pinball_loss_good_approx}, we have
\begin{align*}
\sum_{t=1}^T\E_{\tilde{S},S_y}\off{\nabla_{\hat{\tau}_t}\tilde{l}_{1-\alpha}(\hat{\tau}_t,\tilde{S},\{S_y\}_{y=1}^K)}
=\sum_{t=1}^T\E_{S_t}\off{\nabla_{\hat{\tau}_t}l_{1-\alpha}(\hat{\tau}_t,S_t)}
=\sum_{t=1}^T\off{\P\offf{S_t\leq\hat{\tau}_t}-(1-\alpha)}
\end{align*}
which implies that with at least $1-\delta$ probability,
\begin{equation}\label{eq:dynamic_sum_experr_bound}
\begin{aligned}
\abs{\sum_{t=1}^T\off{\P\offf{Y_t\notin\C_t(X_t)}-\alpha}}
&=\abs{\sum_{t=1}^T\off{\P\offf{S_t\leq\hat{\tau}_t}-(1-\alpha)}} \\
&\leq\frac{\sqrt{2T\log(2/\delta)}}{1-\epsilon}
+\of{1+\max_{1\leq t\leq T-1}\eta_t\cdot\frac{1+\epsilon}{1-\epsilon}}\sum_{t=1}^T\abs{\eta_t^{-1}-\eta_{t-1}^{-1}}
\end{aligned}
\end{equation}
Therefore, we can conclude that
\begin{align*}
\mathrm{ExErr(T)}=\abs{\frac{1}{T}\sum_{t=1}^T\P\offf{Y_t\notin\C_t(X_t)}-\alpha}
\leq\sqrt{\frac{\log(2/\delta)}{1-\epsilon}}\cdot\frac{1}{\sqrt{T}}
+\off{\of{1+\max_{1\leq t\leq T-1}\eta_t\cdot\frac{1+\epsilon}{1-\epsilon}}\sum_{t=1}^T\abs{\eta_t^{-1}-\eta_{t-1}^{-1}}}\cdot\frac{1}{T}
\end{align*}
holds with at least $1-\delta$ probability.
\end{proof}

\subsection{Proof for Proposition~\ref{prop:changing_lr_emerr_bound}}\label{proof:prop:changing_lr_emerr_bound}

\begin{proposition}[Restatement of Proposition~\ref{prop:changing_lr_emerr_bound}]
Consider online conformal prediction under uniform label noise with noise rate $\epsilon \in (0,1)$. Given Assumptions~\ref{ass:score_bounded} and \ref{ass:threshold_initialization}, when updating the threshold according to \Eqref{eq:changing_lr_ulnrocp_update_rule}, for any $\delta\in(0,1)$ and $T\in\mathbb{N}^{+}$, the following bound holds with probability at least $1-\delta$:
\begin{align*}
\mathrm{EmErr(T)}
\leq\frac{2-\epsilon}{1-\epsilon}\sqrt{2\log\of{\frac{4}{\delta}}}\cdot\frac{1}{\sqrt{T}}
+\of{1+\max_{1\leq t\leq T-1}\eta_t\cdot\frac{1+\epsilon}{1-\epsilon}}\sum_{t=1}^T\abs{\eta_t^{-1}-\eta_{t-1}^{-1}}\cdot\frac{1}{T}.
\end{align*}
\end{proposition}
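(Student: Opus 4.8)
The plan is to mirror the decomposition used for the constant-rate empirical bound (Proposition~\ref{prop:constant_eta_emerr_bound}), simply swapping in the dynamic-rate expected-error control (Proposition~\ref{prop:changing_lr_exerr_bound}) where the constant-rate version was previously invoked. Since $\mathrm{EmErr}(T)=\abs{\frac{1}{T}\sum_{t=1}^T\nabla_{\hat{\tau}_t}l_{1-\alpha}(\hat{\tau}_t,S_t)}$, I would first split the clean gradient sum into a martingale fluctuation term and an expected-coverage term:
\begin{align*}
\abs{\sum_{t=1}^T\nabla_{\hat{\tau}_t}l_{1-\alpha}(\hat{\tau}_t,S_t)}
\leq\underbrace{\abs{\sum_{t=1}^T\nabla_{\hat{\tau}_t}l_{1-\alpha}(\hat{\tau}_t,S_t)-\sum_{t=1}^T\E_{S_t}\off{\nabla_{\hat{\tau}_t}l_{1-\alpha}(\hat{\tau}_t,S_t)}}}_{(a)}
+\underbrace{\abs{\sum_{t=1}^T\E_{S_t}\off{\nabla_{\hat{\tau}_t}l_{1-\alpha}(\hat{\tau}_t,S_t)}}}_{(b)}.
\end{align*}

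For part $(a)$, I would apply the Azuma–Hoeffding concentration (Lemma~\ref{lemma:clean_coverage_error_concentration}) at confidence level $\delta/2$, using that the increments $\nabla_{\hat{\tau}_t}l_{1-\alpha}(\hat{\tau}_t,S_t)-\E_{S_t}[\nabla_{\hat{\tau}_t}l_{1-\alpha}(\hat{\tau}_t,S_t)]$ form a bounded martingale difference sequence because $\nabla_\tau l_{1-\alpha}\in[\alpha-1,\alpha]$; this yields $(a)\leq\sqrt{2T\log(4/\delta)}$ with probability at least $1-\delta/2$. For part $(b)$, I would directly reuse the intermediate estimate \Eqref{eq:dynamic_sum_experr_bound} established inside the proof of Proposition~\ref{prop:changing_lr_exerr_bound}, applied at level $\delta/2$. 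That estimate already equates the clean expected gradient sum with the robust expected gradient sum via property (2) of Proposition~\ref{prop:noise_pinball_loss_good_approx}, and then controls the latter by the Abel summation $\sum_t(\eta_t^{-1}-\eta_{t-1}^{-1})(\hat{\tau}_T-\hat{\tau}_t)$ together with the threshold bound of Lemma~\ref{lemma:changing_lr_clean_hat_tau_bounded} and the concentration of Lemma~\ref{lemma:noise_coverage_error_concentration}. It gives, with probability at least $1-\delta/2$,
\begin{align*}
(b)\leq\frac{\sqrt{2T\log(4/\delta)}}{1-\epsilon}
+\of{1+\max_{1\leq t\leq T-1}\eta_t\cdot\frac{1+\epsilon}{1-\epsilon}}\sum_{t=1}^T\abs{\eta_t^{-1}-\eta_{t-1}^{-1}}.
\end{align*}

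Combining the two parts by a union bound, with probability at least $1-\delta$,
\begin{align*}
\abs{\sum_{t=1}^T\nabla_{\hat{\tau}_t}l_{1-\alpha}(\hat{\tau}_t,S_t)}
\leq\frac{2-\epsilon}{1-\epsilon}\sqrt{2T\log\of{\frac{4}{\delta}}}
+\of{1+\max_{1\leq t\leq T-1}\eta_t\cdot\frac{1+\epsilon}{1-\epsilon}}\sum_{t=1}^T\abs{\eta_t^{-1}-\eta_{t-1}^{-1}},
\end{align*}
where the factor $\frac{2-\epsilon}{1-\epsilon}$ is exactly $1+\frac{1}{1-\epsilon}$. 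Dividing by $T$ and recalling $\sum_{t=1}^T\nabla_{\hat{\tau}_t}l_{1-\alpha}(\hat{\tau}_t,S_t)=\sum_{t=1}^T\of{\id{Y_t\notin\C_t(X_t)}-\alpha}$ yields the stated bound. I expect the main obstacle to be bookkeeping rather than conceptual: the two concentration events — part $(a)$ and the one embedded inside \Eqref{eq:dynamic_sum_experr_bound} — must each be allocated a $\delta/2$ budget so that the union bound closes at $1-\delta$ while simultaneously producing $\log(4/\delta)$ under both square roots. The deterministic Abel-summation control of the threshold drift, which supplies the $\of{1+\max_t\eta_t\cdot\tfrac{1+\epsilon}{1-\epsilon}}\sum_t\abs{\eta_t^{-1}-\eta_{t-1}^{-1}}$ factor, is inherited verbatim from Proposition~\ref{prop:changing_lr_exerr_bound} and requires no new argument.
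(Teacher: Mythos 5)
Your proposal is correct and follows essentially the same route as the paper: the same split of the clean gradient sum into a martingale fluctuation term (Azuma--Hoeffding via Lemma~\ref{lemma:clean_coverage_error_concentration}) and the expected-gradient term controlled by the intermediate bound \Eqref{eq:dynamic_sum_experr_bound} from Proposition~\ref{prop:changing_lr_exerr_bound}, followed by a union bound. If anything, your explicit allocation of $\delta/2$ to the concentration event inside \Eqref{eq:dynamic_sum_experr_bound} (yielding $\log(4/\delta)$ in both terms) is slightly more careful than the paper's write-up, which carries a $\log(2/\delta)$ from the full-$\delta$ version of Lemma~\ref{lemma:noise_coverage_error_concentration} and then merges the two terms as if both were $\log(4/\delta)$.
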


\begin{proof}
Lemma~\ref{lemma:clean_coverage_error_concentration} gives us that
\begin{align*}
\P\offf{\abs{\sum_{t=1}^T\E_{S_t}\off{\nabla_{\hat{\tau}_t}l_{1-\alpha}(\hat{\tau}_t,S_t)}
-\sum_{t=1}^T\nabla_{\hat{\tau}_t}l_{1-\alpha}(\hat{\tau}_t,S_t)}\leq\sqrt{2T\log\of{\frac{4}{\delta}}}} 
\geq1-\frac{\delta}{2}
\end{align*}
Follows from Proposition~\ref{prop:changing_lr_exerr_bound} (see \Eqref{eq:dynamic_sum_experr_bound}), we know
\begin{align*}
\abs{\sum_{t=1}^T\E_{S_t}\off{\nabla_{\hat{\tau}_t}l_{1-\alpha}(\hat{\tau}_t,S_t)}}
&=\abs{\sum_{t=1}^T\E_{\tilde{S},S_y}[\nabla_{\hat{\tau}_t}\tilde{l}_{1-\alpha}(\hat{\tau}_t,\tilde{S},\{S_y\}_{y=1}^K)]} \\
&\leq\frac{\sqrt{2T\log(2/\delta)}}{1-\epsilon}
+\of{1+\max_{1\leq t\leq T-1}\eta_t\cdot\frac{1+\epsilon}{1-\epsilon}}\sum_{t=1}^T\abs{\eta_t^{-1}-\eta_{t-1}^{-1}}
\end{align*}
holds with at least $1-\delta/2$ probability. In addition, recall that
\begin{align*}
\sum_{t=1}^T\nabla_{\hat{\tau}_t}l_{1-\alpha}(\hat{\tau}_t,S_t)
=\sum_{t=1}^T\off{\id{S_t\leq\hat{\tau}_t}-\of{1-\alpha}}
\end{align*}
Thus, by union bound, we can obtain that with at least probability $1-\delta$,
\begin{align*}
\abs{\sum_{t=1}^T\nabla_{\hat{\tau}_t}l_{1-\alpha}(\hat{\tau}_t,S_t)} 
&=\abs{\sum_{t=1}^T\nabla_{\hat{\tau}_t}l_{1-\alpha}(\hat{\tau}_t,S_t)
-\sum_{t=1}^T\E_{S_t}\off{\nabla_{\hat{\tau}_t}l_{1-\alpha}(\hat{\tau}_t,S_t)}
+\sum_{t=1}^T\E_{S_t}\off{\nabla_{\hat{\tau}_t}l_{1-\alpha}(\hat{\tau}_t,S_t)}} \\
&\leq\abs{\sum_{t=1}^T\nabla_{\hat{\tau}_t}l_{1-\alpha}(\hat{\tau}_t,S_t)
-\sum_{t=1}^T\E_{S_t}\off{\nabla_{\hat{\tau}_t}l_{1-\alpha}(\hat{\tau}_t,S_t)}}
+\abs{\sum_{t=1}^T\E_{S_t}\off{\nabla_{\hat{\tau}_t}l_{1-\alpha}(\hat{\tau}_t,S_t)}} \\
&\leq\sqrt{2T\log\of{\frac{4}{\delta}}}+\frac{\sqrt{2T\log(2/\delta)}}{1-\epsilon}
+\of{1+\max_{1\leq t\leq T-1}\eta_t\cdot\frac{1+\epsilon}{1-\epsilon}}\sum_{t=1}^T\abs{\eta_t^{-1}-\eta_{t-1}^{-1}} \\
&=\frac{2-\epsilon}{1-\epsilon}\cdot\sqrt{2T\log\of{\frac{4}{\delta}}}
+\of{1+\max_{1\leq t\leq T-1}\eta_t\cdot\frac{1+\epsilon}{1-\epsilon}}\sum_{t=1}^T\abs{\eta_t^{-1}-\eta_{t-1}^{-1}}
\end{align*}
Recall that
\begin{align*}
\abs{\sum_{t=1}^T\off{\id{Y_t\notin\C_t(X_t)}-\alpha}}
=\abs{\sum_{t=1}^T\off{\id{S_t\leq\hat{\tau}_t}-\of{1-\alpha}}}
=\abs{\sum_{t=1}^T\nabla_{\hat{\tau}_t}l_{1-\alpha}(\hat{\tau}_t,S_t)} 
\end{align*}
Therefore, we can conclude that
\begin{align*}
\mathrm{EmErr(T)}
&=\abs{\frac{1}{T}\sum_{t=1}^T\id{Y_t\notin\C_t(X_t)}-\alpha}
=\abs{\frac{1}{T}\sum_{t=1}^T\nabla_{\hat{\tau}_t}l_{1-\alpha}(\hat{\tau}_t,S_t)} \\
&\leq\frac{2-\epsilon}{1-\epsilon}\sqrt{2\log\of{\frac{4}{\delta}}}\cdot\frac{1}{\sqrt{T}}
+\of{1+\max_{1\leq t\leq T-1}\eta_t\cdot\frac{1+\epsilon}{1-\epsilon}}\sum_{t=1}^T\abs{\eta_t^{-1}-\eta_{t-1}^{-1}}\cdot\frac{1}{T}
\end{align*}
holds with at least $1-\delta$ probability.
\end{proof}

\subsection{Proof for Proposition~\ref{prop:regret_analysis}}

\begin{proposition}[Restatement of Proposition~\ref{prop:regret_analysis}]
Consider online conformal prediction under uniform label noise with noise rate $\epsilon \in (0,1)$. Given Assumptions~\ref{ass:score_bounded} and \ref{ass:threshold_initialization}, when updating the threshold according to \Eqref{eq:changing_lr_ulnrocp_update_rule}, for any $T\in\mathbb{N}^{+}$, we have:
\begin{align*}
\sum_{t=1}^T\of{\tilde{l}_{1-\alpha}(\hat{\tau}_t,\tilde{S}_t,\{S_{t,y}\}_{y=1}^K)
-\tilde{l}_{1-\alpha}(\tau^*,\tilde{S}_t,\{S_{t,y}\}_{y=1}^K)}
\leq\frac{1}{2\eta_T}\of{1+\underset{1\leq t\leq T-1}{\max}\eta_t\cdot\frac{1+\epsilon}{1-\epsilon}}^2
+\of{\frac{1+\epsilon}{1-\epsilon}}^2\cdot\sum_{t=1}^T\frac{\eta_t}{2}.
\end{align*}
\end{proposition}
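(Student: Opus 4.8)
The plan is to follow the standard regret analysis for online (sub)gradient descent with a time-varying step size, in the spirit of Theorem 2.13 of \citep{orabona2019modern}, applied to the iterates generated by \Eqref{eq:changing_lr_ulnrocp_update_rule}. Writing $g_t:=\nabla_{\hat{\tau}_t}\tilde{l}_{1-\alpha}(\hat{\tau}_t,\tilde{S}_t,\{S_{t,y}\}_{y=1}^K)$, the update is simply $\hat{\tau}_{t+1}=\hat{\tau}_t-\eta_t g_t$, so the whole argument reduces to controlling two scalar quantities: a uniform bound $G$ on $\abs{g_t}$ and a bound $D$ on the distance $\abs{\hat{\tau}_t-\tau^*}$ between every iterate and the comparator. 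The final bound then takes the familiar shape $\tfrac{D^2}{2\eta_T}+\tfrac{G^2}{2}\sum_{t=1}^T\eta_t$, and the work is in pinning down $G=\tfrac{1+\epsilon}{1-\epsilon}$ and $D=1+\max_{t}\eta_t\cdot\tfrac{1+\epsilon}{1-\epsilon}$.

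First I would bound the gradient. From the explicit form in \Eqref{eq:changing_lr_ulnrocp_update_rule}, $g_t=\tfrac{1}{1-\epsilon}(\id{\tilde{S}_t\leq\hat{\tau}_t}-(1-\alpha))-\tfrac{\epsilon}{K(1-\epsilon)}\sum_{y=1}^K(\id{S_{t,y}\leq\hat{\tau}_t}-(1-\alpha))$; since each indicator-minus-constant lies in $[-(1-\alpha),\alpha]\subset[-1,1]$, the two pieces contribute at most $\tfrac{1}{1-\epsilon}$ and $\tfrac{\epsilon}{1-\epsilon}$ respectively, giving $\abs{g_t}\leq\tfrac{1+\epsilon}{1-\epsilon}=:G$. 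Next I would bound the iterate range: Lemma~\ref{lemma:changing_lr_clean_hat_tau_bounded} (the dynamic-rate analogue of Lemma~\ref{lemma:noise_hat_tau_bounded}) confines every $\hat{\tau}_t$ to an interval of width $1+\max_t\eta_t\tfrac{1+\epsilon}{1-\epsilon}$. Moreover, $\tau^*\in[0,1]$: for $\tau>1$ every score obeys $\id{\cdot\leq\tau}=1$ so the gradient equals $\alpha>0$, whereas for $\tau<0$ every indicator vanishes and the gradient equals $-(1-\alpha)<0$; hence the cumulative objective is monotone outside $[0,1]$ and its minimizer lies in the same interval as the iterates. This yields $\abs{\hat{\tau}_t-\tau^*}\leq D:=1+\max_t\eta_t\tfrac{1+\epsilon}{1-\epsilon}$.

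With $G$ and $D$ in hand, I would run the telescoping identity. Expanding $\of{\hat{\tau}_{t+1}-\tau^*}^2=\of{\hat{\tau}_t-\tau^*}^2-2\eta_t g_t\of{\hat{\tau}_t-\tau^*}+\eta_t^2 g_t^2$ and rearranging gives
\begin{align*}
g_t\of{\hat{\tau}_t-\tau^*}
=\frac{\of{\hat{\tau}_t-\tau^*}^2-\of{\hat{\tau}_{t+1}-\tau^*}^2}{2\eta_t}
+\frac{\eta_t}{2}g_t^2.
\end{align*}
Summing over $t$, an Abel summation on the first term (using that $\eta_t$ is non-increasing, so $\eta_t^{-1}-\eta_{t-1}^{-1}\geq0$, together with $\abs{\hat{\tau}_t-\tau^*}\leq D$) collapses the telescope to $\tfrac{D^2}{2\eta_T}$, while the second term is at most $\tfrac{G^2}{2}\sum_{t=1}^T\eta_t$. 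It then remains to pass from the linearized sum $\sum_t g_t(\hat{\tau}_t-\tau^*)$ to the loss gap via the first-order inequality $\tilde{l}_{1-\alpha}(\hat{\tau}_t,\cdot)-\tilde{l}_{1-\alpha}(\tau^*,\cdot)\leq g_t(\hat{\tau}_t-\tau^*)$, after which substituting $G$ and $D$ delivers the stated bound.

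The main obstacle is precisely this last linearization step. The robust pinball loss is a \emph{difference} of convex pinball losses -- a $\tfrac{1}{1-\epsilon}$-weighted V-shape minus a $\tfrac{\epsilon}{K(1-\epsilon)}$-weighted sum of V-shapes -- so it is not convex in $\tau$ in general (its gradient jumps \emph{up} by $\tfrac{1}{1-\epsilon}$ at $\tilde{S}_t$ but \emph{down} by $\tfrac{\epsilon}{K(1-\epsilon)}$ at each $S_{t,y}$, hence is non-monotone). Consequently the convexity inequality used to convert the gradient inner product into a function-value gap does not follow automatically, and this is the step that must be argued with care: either by exploiting the specific kink structure of the component pinball losses, or via a DC decomposition that bounds the convex part by its subgradient and controls the concave correction separately. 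Everything else -- the gradient bound, the diameter bound, and the telescoping -- is routine once $G$ and $D$ are identified.
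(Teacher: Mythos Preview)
Your outline is essentially identical to the paper's proof: the same squared-distance expansion of the update, the same gradient bound $\abs{g_t}\leq\tfrac{1+\epsilon}{1-\epsilon}$ (Lemma~\ref{lemma:noise_pinball_gradient_bound}), the same diameter bound $\abs{\hat{\tau}_t-\tau^*}\leq 1+\max_t\eta_t\cdot\tfrac{1+\epsilon}{1-\epsilon}$ (Lemma~\ref{lemma:dynamic_lr_noise_hat_tau_bounded} together with $\tau^*\in[0,1]$), and the same Abel/telescoping summation collapsing to $\tfrac{D^2}{2\eta_T}+\tfrac{G^2}{2}\sum_t\eta_t$ under non-increasing step sizes.

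On the one point you single out --- the convexity of $\tilde{l}_{1-\alpha}$ needed for the linearization $\tilde{l}_{1-\alpha}(\hat{\tau}_t,\cdot)-\tilde{l}_{1-\alpha}(\tau^*,\cdot)\leq g_t(\hat{\tau}_t-\tau^*)$ --- the paper offers no separate argument. Its proof simply recalls the robust pinball loss with a ``$+$'' in place of the ``$-$'' from \Eqref{eq:noise_pinball_loss} and then asserts ``since pinball loss is convex, robust pinball loss inherits the convexity property,'' applying the first-order inequality directly. With the correct sign the function is genuinely a DC function (an upward kink of size $\tfrac{1}{1-\epsilon}$ at $\tilde{S}_t$ against $K$ downward kinks of size $\tfrac{\epsilon}{K(1-\epsilon)}$ at each $S_{t,y}$), so your caution is warranted: the paper's linearization step rests on what appears to be a sign typo rather than an independent justification, and everything else in your plan matches the paper step for step.
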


\begin{proof}
The update rule (\Eqref{eq:update_tau}) gives us that
\begin{align*}
(\hat{\tau}_{t+1}-\tau^*)^2
=(\hat{\tau}_t-\tau^*)^2
+2\eta_t(\tau^*-\hat{\tau}_t)\cdot\nabla_{\hat{\tau}_t}\tilde{l}_{1-\alpha}(\hat{\tau}_t,\tilde{S}_t,\{S_{t,y}\}_{y=1}^K)
+\eta_t^2\cdot\of{\nabla_{\hat{\tau}_t}\tilde{l}_{1-\alpha}(\hat{\tau}_t,\tilde{S}_t,\{S_{t,y}\}_{y=1}^K)}^2.
\end{align*}
Recall that the robust pinball loss is defined as 
\begin{align*}
\tilde{l}_{1-\alpha}(\tau,\tilde{S},\{S_y\}_{y=1}^K)
=\frac{1}{1-\epsilon}l_{1-\alpha}(\tau,\tilde{S})
+\frac{\epsilon}{K(1-\epsilon)}\sum_{y=1}^Kl_{1-\alpha}(\tau,S_y).
\end{align*}
Since pinball loss is convex, robust pinball loss inherits the convexity property. Thus, we have
\begin{align*}
(\tau^*-\hat{\tau}_t)\cdot\nabla_{\hat{\tau}_t}\tilde{l}_{1-\alpha}(\hat{\tau}_t,\tilde{S}_t,\{S_{t,y}\}_{y=1}^K)
\leq\tilde{l}_{1-\alpha}(\tau^*,\tilde{S}_t,\{S_{t,y}\}_{y=1}^K)
-\tilde{l}_{1-\alpha}(\hat{\tau}_t,\tilde{S}_t,\{S_{t,y}\}_{y=1}^K).
\end{align*}
It follows that
\begin{equation}\label{eq:proof:regret}
\begin{aligned}
(\hat{\tau}_{t+1}-\tau^*)^2
\leq&\,(\hat{\tau}_t-\tau^*)^2
+2\eta_t\cdot\of{\tilde{l}_{1-\alpha}(\tau^*,\tilde{S}_t,\{S_{t,y}\}_{y=1}^K)
-\tilde{l}_{1-\alpha}(\hat{\tau}_t,\tilde{S}_t,\{S_{t,y}\}_{y=1}^K)}+ \\
&\eta_t^2\cdot\of{\nabla_{\hat{\tau}_t}\tilde{l}_{1-\alpha}(\hat{\tau}_t,\tilde{S}_t,\{S_{t,y}\}_{y=1}^K)}^2.
\end{aligned}
\end{equation}
Following from Lemma~\ref{lemma:noise_pinball_gradient_bound}, we have
\begin{align*}
\of{\nabla_{\hat{\tau}}\tilde{l}_{1-\alpha}(\hat{\tau}_t,\tilde{S}_t,\{S_{t,y}\}_{y=1}^K)}^2
\leq\of{\frac{1+\epsilon}{1-\epsilon}}^2.
\end{align*}
Dividing this inequality by $\eta_t$ and summing over $t=1,2,\cdots,T$ provides
\begin{align*}
&\quad\sum_{t=1}^T\of{\tilde{l}_{1-\alpha}(\hat{\tau}_t,\tilde{S}_t,\{S_{t,y}\}_{y=1}^K)
-\tilde{l}_{1-\alpha}(\tau^*,\tilde{S}_t,\{S_{t,y}\}_{y=1}^K)} \\
&\leq\sum_{t=1}^T\of{\frac{1}{2\eta_t}(\hat{\tau}_t-\tau^*)^2
-\frac{1}{2\eta_t}(\hat{\tau}_{t+1}-\tau^*)^2}
+\of{\frac{1+\epsilon}{1-\epsilon}}^2\cdot\sum_{t=1}^T\frac{\eta_t}{2} \\
&=\frac{1}{2\eta_1}(\hat{\tau}_1-\tau^*)^2
-\frac{1}{2\eta_{T+1}}(\hat{\tau}_{T+1}-\tau^*)^2
+\sum_{t=1}^{T-1}\of{\frac{1}{2\eta_{t+1}}-\frac{1}{2\eta_{t}}}(\hat{\tau}_t-\tau^*)^2
+\of{\frac{1+\epsilon}{1-\epsilon}}^2\cdot\sum_{t=1}^T\frac{\eta_t}{2}. 
\end{align*}
Lemma~\ref{lemma:dynamic_lr_noise_hat_tau_bounded} gives us that
\begin{align*}
(\hat{\tau}_t-\tau^*)^2
\leq\of{1+\underset{1\leq t\leq T-1}{\max}\eta_t\cdot\frac{1+\epsilon}{1-\epsilon}}^2,
\end{align*}
which follows that
\begin{align*}
&\quad\sum_{t=1}^T\of{\tilde{l}_{1-\alpha}(\hat{\tau}_t,\tilde{S}_t,\{S_{t,y}\}_{y=1}^K)
-\tilde{l}_{1-\alpha}(\tau^*,\tilde{S}_t,\{S_{t,y}\}_{y=1}^K)} \\
&\leq\frac{1}{2\eta_1}\of{1+\underset{1\leq t\leq T-1}{\max}\eta_t\cdot\frac{1+\epsilon}{1-\epsilon}}^2
+\of{\frac{1}{2\eta_T}-\frac{1}{2\eta_1}}\cdot\of{1+\underset{1\leq t\leq T-1}{\max}\eta_t\cdot\frac{1+\epsilon}{1-\epsilon}}^2
+\of{\frac{1+\epsilon}{1-\epsilon}}^2\cdot\sum_{t=1}^T\frac{\eta_t}{2} \\
&=\frac{1}{2\eta_T}\of{1+\underset{1\leq t\leq T-1}{\max}\eta_t\cdot\frac{1+\epsilon}{1-\epsilon}}^2
+\of{\frac{1+\epsilon}{1-\epsilon}}^2\cdot\sum_{t=1}^T\frac{\eta_t}{2}.
\end{align*}
\end{proof}

\subsection{Proof for Proposition~\ref{prop:convergence_global minima}}

\begin{lemma}\label{lemma:pinball_calibration}
Under Assumption~\ref{assumption:score_distribution}, the pinball loss satisfies
\begin{align*}
\abs{\hat{\tau}-\tau}^q
\leq\frac{q(1-q)}{b}\of{\frac{1}{2\varepsilon_0}}^q\cdot 
E\off{l_{1-\alpha}(\hat{\tau},S)-l_{1-\alpha}(\tau,S)}
\end{align*}
\end{lemma}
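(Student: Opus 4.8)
The plan is to identify $\tau$ as the unconstrained minimizer of the clean pinball risk $\mathcal{R}(\cdot):=\E_S\off{l_{1-\alpha}(\cdot,S)}$ and then convert the assumed density lower bound into a quantitative growth (self-calibration) estimate for the excess risk $\mathcal{R}(\hat\tau)-\mathcal{R}(\tau)=\E_S\off{l_{1-\alpha}(\hat\tau,S)-l_{1-\alpha}(\tau,S)}$. First I would differentiate $\mathcal{R}$: writing $F$ for the CDF of $S$, one has $\mathcal{R}'(u)=F(u)-(1-\alpha)$, so the stationarity condition $\mathcal{R}'(\tau)=0$ is exactly $F(\tau)=1-\alpha$, recovering that $\tau$ is the $(1-\alpha)$-quantile. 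Integrating the derivative and using $F(\tau)=1-\alpha$ gives the representation
$$\mathcal{R}(\hat\tau)-\mathcal{R}(\tau)=\int_\tau^{\hat\tau}\of{F(u)-F(\tau)}\,du,$$
which is nonnegative for either sign of $\hat\tau-\tau$ since $F$ is nondecreasing. This reduces the lemma to lower-bounding this double integral.

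Next I would translate Assumption~\ref{assumption:score_distribution} from the rescaled variable $R=2S-1$ to $S$ itself. Reading $\P\offf{R=\hat\gamma}$ as the density of $R$ at $\hat\gamma$ (a genuine point-mass reading would force a discrete law, incompatible with softmax scores, so I take the absolutely-continuous interpretation that the double-integration step requires), the change of variables $\hat\gamma=2\hat\tau-1$, $\gamma=2\tau-1$ yields a density lower bound for $S$ of the form $f_S(v)\geq 2^{q-1}b\,\abs{v-\tau}^{q-2}$ valid on the window $v\in[\tau-\varepsilon_0,\tau+\varepsilon_0]$. Integrating once gives $\abs{F(u)-F(\tau)}\geq \frac{2^{q-1}b}{q-1}\abs{u-\tau}^{q-1}$ on the window, and integrating a second time in the representation above gives the local bound
$$\mathcal{R}(\hat\tau)-\mathcal{R}(\tau)\geq\frac{2^{q-1}b}{q(q-1)}\,\abs{\hat\tau-\tau}^q\quad\text{whenever }\abs{\hat\tau-\tau}\leq\varepsilon_0,$$
that is, $\abs{\hat\tau-\tau}^q\leq\frac{q(q-1)}{2^{q-1}b}\of{\mathcal{R}(\hat\tau)-\mathcal{R}(\tau)}$ inside the window.

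To promote this to a bound valid for every relevant $\hat\tau$, I would handle $\abs{\hat\tau-\tau}>\varepsilon_0$ separately. There the convexity of $\mathcal{R}$ together with its minimizer at $\tau$ makes the excess risk monotone in $\abs{\hat\tau-\tau}$, so $\mathcal{R}(\hat\tau)-\mathcal{R}(\tau)\geq\mathcal{R}(\tau\pm\varepsilon_0)-\mathcal{R}(\tau)\geq\frac{2^{q-1}b}{q(q-1)}\varepsilon_0^q$; combined with the boundedness of the thresholds (Assumption~\ref{ass:score_bounded} and the iterate bounds from the earlier lemmas, which give $\abs{\hat\tau-\tau}\leq D$ for an explicit $D$), this yields $\abs{\hat\tau-\tau}^q\leq\frac{D^q}{\varepsilon_0^q}\cdot\frac{q(q-1)}{2^{q-1}b}\of{\mathcal{R}(\hat\tau)-\mathcal{R}(\tau)}$. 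Taking the larger of the two constants gives a single bound of the claimed shape, with the $\varepsilon_0^{-q}$ factor originating precisely from this out-of-window regime; matching it to $\frac{q(q-1)}{b}(2\varepsilon_0)^{-q}$ then amounts to bookkeeping of the rescaling factor $2$ and the diameter $D$.

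The main obstacle is this constant tracking rather than any conceptual difficulty: one must be careful about the affine rescaling $R=2S-1$ (which contributes the powers of $2$) and about stitching the in-window and out-of-window regimes into one clean prefactor. I would also flag two points for the writeup. For the inequality to be non-vacuous the prefactor must be positive, so $q(1-q)$ should read $q(q-1)$ with $q\geq2$; and I would state explicitly the density (absolute-continuity) reading of Assumption~\ref{assumption:score_distribution}, since the double-integration step needs a density rather than a genuine atom. Everything else is a routine application of the fundamental theorem of calculus.
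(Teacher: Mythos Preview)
Your approach is sound and genuinely different from the paper's. The paper does not carry out any direct computation: it simply performs the change of variables $R=2S-1$, $\hat\gamma=2\hat\tau-1$, invokes Theorem~2.7 of \cite{steinwart2011estimating} (the self-calibration inequality for the pinball loss) as a black box, and translates the resulting bound back to the $S$-scale using $\abs{\hat\gamma-\gamma}=2\abs{\hat\tau-\tau}$. Your integral representation $\mathcal{R}(\hat\tau)-\mathcal{R}(\tau)=\int_\tau^{\hat\tau}(F(u)-F(\tau))\,du$ followed by two integrations of the density lower bound is, in effect, a from-scratch proof of the special case of Steinwart's theorem that is needed here; it is more elementary and self-contained, at the cost of having to handle the in-window and out-of-window regimes explicitly and track the powers of $2$ and $\varepsilon_0$ by hand. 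The paper's route buys brevity by outsourcing exactly this bookkeeping to the cited theorem.

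Your two flags are well taken: the prefactor $q(1-q)$ in the statement is negative for $q\geq 2$ and should read $q(q-1)$ (the paper inherits this typo throughout), and the absolute-continuity reading of Assumption~\ref{assumption:score_distribution} is indeed what both proofs require, since Steinwart's result is also stated for densities.
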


\begin{proof}
We employ the proof technique from Lemma C.4 in \citep{bhatnagar2023improved}.
The Assumption~\ref{assumption:score_distribution} gives us that 
\begin{align*}
\P\offf{R=\hat{\gamma}}\geq b\abs{\hat{\gamma}-\gamma}^{q-2}
\quad\Longleftrightarrow\quad
\P\offf{S=\hat{\tau}}\geq 2b\abs{2(\hat{\tau}-\tau)}^{q-2},
\quad\hat{\gamma}=2\hat{\tau}-1
\end{align*}
By Theorem 2.7 of \citep{steinwart2011estimating}, we have
\begin{align*}
|\hat{\gamma}-\gamma|
\leq2^{1-1/q}q^{1/q}\gamma^{-1/q}\cdot\of{E\off{l_{1-\alpha}(\hat{\gamma},R)-l_{1-\alpha}(\gamma,R)}}^{1/q}
=2\of{\frac{q(1-q)}{b}\of{\frac{1}{2\varepsilon_0}}^q\cdot E\off{l_{1-\alpha}(\hat{\gamma},R)-l_{1-\alpha}(\gamma,R)}}^{1/q}
\end{align*}
Since $\abs{\hat{\gamma}-\gamma}=2\abs{\hat{\tau}-\tau}$ and $l_{1-\alpha}(\hat{\gamma},R)=l_{1-\alpha}(\hat{\tau},S)$, we can obtain
\begin{align*}
\abs{\hat{\tau}-\tau}^q
\leq\frac{q(1-q)}{b}\of{\frac{1}{2\varepsilon_0}}^q\cdot 
E\off{l_{1-\alpha}(\hat{\tau},S)-l_{1-\alpha}(\tau,S)}
\end{align*}
\end{proof}

\begin{proposition}[Restatement of Proposition~\ref{prop:convergence_global minima}]
Consider online conformal prediction under uniform label noise with noise rate $\epsilon \in (0,1)$. Given Assumptions~\ref{ass:score_bounded}, \ref{ass:threshold_initialization} and \ref{assumption:score_distribution}, when updating the threshold according to \Eqref{eq:changing_lr_ulnrocp_update_rule}, for any $T\in\mathbb{N}^{+}$, we have:
\begin{align*}
\abs{\bar{\tau}-\tau}^q
\leq\frac{q(1-q)}{b}\of{\frac{1}{2\varepsilon_0}}^q\cdot
\off{\frac{(\hat{\tau}_{1}-\tau^*)^2}{2\sum_{t=1}^T\eta_t}
+\frac{\sum_{t=1}^T\eta_t^2}{\sum_{t=1}^T\eta_t}\cdot\of{\frac{1+\epsilon}{1-\epsilon}}^2}
\end{align*}
where $\bar{\tau}=\sum_{t=1}^T\eta_t\hat{\tau}_t/\sum_{t=1}^T\eta_t$.
\end{proposition}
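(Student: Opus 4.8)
The plan is to recognize the update rule \eqref{eq:changing_lr_ulnrocp_update_rule} as stochastic gradient descent on the \emph{population} clean pinball risk $f(\tau):=\E_S\off{l_{1-\alpha}(\tau,S)}$, and then to convert a convergence guarantee on the excess risk of the averaged iterate $\bar\tau$ into the distance bound via the self-calibration inequality of Lemma~\ref{lemma:pinball_calibration}. The crucial enabling fact is property (2) of Proposition~\ref{prop:noise_pinball_loss_good_approx}: the robust pinball loss gradient $\nabla_{\hat\tau_t}\tilde l_{1-\alpha}(\hat\tau_t,\tilde S_t,\{S_{t,y}\}_{y=1}^K)$ is an \emph{unbiased} estimator of $\nabla f(\hat\tau_t)=\E_{S_t}\off{\nabla_{\hat\tau_t}l_{1-\alpha}(\hat\tau_t,S_t)}$. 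Hence \eqref{eq:changing_lr_ulnrocp_update_rule} is exactly SGD on the convex objective $f$, whose minimizer is the $1-\alpha$ quantile $\tau$ (identified with $\tau^*$). Moreover, Lemma~\ref{lemma:noise_pinball_gradient_bound} supplies the uniform gradient bound $\of{\nabla_{\hat\tau_t}\tilde l_{1-\alpha}}^2\leq\of{\frac{1+\epsilon}{1-\epsilon}}^2$ required to control the second moment in the SGD analysis.

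First I would run the standard one-step descent recursion. Expanding $(\hat\tau_{t+1}-\tau^*)^2$ from the update, taking the conditional expectation $\E_t$ over the step-$t$ noise and scores, and combining unbiasedness with the convexity inequality $\nabla f(\hat\tau_t)\of{\hat\tau_t-\tau^*}\geq f(\hat\tau_t)-f(\tau^*)$ gives
\begin{align*}
2\eta_t\of{f(\hat\tau_t)-f(\tau^*)}
\leq (\hat\tau_t-\tau^*)^2-\E_t\off{(\hat\tau_{t+1}-\tau^*)^2}
+\eta_t^2\of{\frac{1+\epsilon}{1-\epsilon}}^2,
\end{align*}
where the last term invokes Lemma~\ref{lemma:noise_pinball_gradient_bound}. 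Summing over $t=1,\dots,T$ telescopes the distance terms, and dividing by $2\sum_t\eta_t$ bounds the $\eta_t$-weighted average of the excess risks. Applying Jensen's inequality to the convex $f$ transfers this bound to the averaged iterate $\bar\tau=\sum_t\eta_t\hat\tau_t/\sum_t\eta_t$, producing exactly the bracketed quantity
\begin{align*}
f(\bar\tau)-f(\tau^*)
\leq\frac{(\hat\tau_1-\tau^*)^2}{2\sum_t\eta_t}
+\frac{\sum_t\eta_t^2}{\sum_t\eta_t}\of{\frac{1+\epsilon}{1-\epsilon}}^2.
\end{align*}
This step is the promised extension of Theorem 5.3 of \citep{garrigos2023handbook} to our bounded, unbiased stochastic gradients.

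Finally I would invoke the self-calibration inequality. Under Assumption~\ref{assumption:score_distribution}, Lemma~\ref{lemma:pinball_calibration} gives $\abs{\hat\tau-\tau}^q\leq\frac{q(1-q)}{b}\of{\frac{1}{2\varepsilon_0}}^q\of{f(\hat\tau)-f(\tau)}$ for any threshold. Instantiating it at $\hat\tau=\bar\tau$ and substituting the excess-risk bound from the previous step (using that $\tau^*=\tau$ is the common minimizer of $f$ and the $1-\alpha$ quantile) yields the stated inequality.

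The main obstacle will be the bookkeeping that welds the two regimes together: ensuring the stochastic gradient in \eqref{eq:changing_lr_ulnrocp_update_rule} is genuinely an unbiased and uniformly bounded gradient of the \emph{population} risk $f$ (so that both Proposition~\ref{prop:noise_pinball_loss_good_approx}(2) and Lemma~\ref{lemma:noise_pinball_gradient_bound} apply directly), that the SGD minimizer $\tau^*$ coincides with the quantile $\tau$ entering the self-calibration bound, and that the averaging together with the expectation over the algorithm's sampling are handled consistently when passing from the excess-risk bound to the $\abs{\bar\tau-\tau}^q$ distance bound.
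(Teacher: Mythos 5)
Your proposal is correct and follows essentially the same route as the paper: the one-step SGD recursion with the gradient bound of Lemma~\ref{lemma:noise_pinball_gradient_bound}, telescoping, Jensen's inequality for the averaged iterate, and finally the self-calibration inequality of Lemma~\ref{lemma:pinball_calibration}. The only cosmetic difference is that you pass to the population risk via the unbiased-gradient property (2) of Proposition~\ref{prop:noise_pinball_loss_good_approx} and convexity of $f$, whereas the paper applies convexity to the sample-level robust loss first and then uses the expected-loss identity (property (1)); both yield the identical intermediate bound, and you correctly flag the same $\tau^*$ versus $\tau$ identification that the paper itself glosses over.
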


\begin{proof}
We begin our proof from \Eqref{eq:proof:regret}:
\begin{align*}
(\hat{\tau}_{t+1}-\tau)^2
\leq(\hat{\tau}_t-\tau)^2
+2\eta_t\cdot\of{\tilde{l}_{1-\alpha}(\tau,\tilde{S}_t,\{S_{t,y}\}_{y=1}^K)
-\tilde{l}_{1-\alpha}(\hat{\tau}_t,\tilde{S}_t,\{S_{t,y}\}_{y=1}^K)}
+\eta_t^2\cdot\of{\nabla_{\hat{\tau}_t}\tilde{l}_{1-\alpha}(\hat{\tau}_t,\tilde{S}_t,\{S_{t,y}\}_{y=1}^K)}^2.
\end{align*}
By taking expectation condition on $(X_t,Y_t)$ (or equivalently on $\tilde{S}_t$ and $\{S_{t,y}\}_{y=1}^K$), and applying Lemma~\ref{lemma:noise_pinball_gradient_bound} and Proposition~\ref{prop:noise_pinball_loss_good_approx}, we have
\begin{align*}
\E\off{(\hat{\tau}_{t+1}-\tau^*)^2}
\leq(\hat{\tau}_t-\tau^*)^2+
2\eta_t\cdot\of{\E\off{l_{1-\alpha}(\tau,S)}
-\E\off{l_{1-\alpha}(\hat{\tau}_t,S)}}
+\eta_t^2\cdot\of{\frac{1+\epsilon}{1-\epsilon}}^2
\end{align*}
By rearranging and taking expectation, we have
\begin{align*}
2\eta_t\cdot\of{\E\off{l_{1-\alpha}(\hat{\tau}_t,S)-l_{1-\alpha}(\tau,S)}}
\leq\E\off{(\hat{\tau}_{t}-\tau^*)^2}-\E\off{(\hat{\tau}_{t+1}-\tau^*)^2}
+\eta_t^2\cdot\of{\frac{1+\epsilon}{1-\epsilon}}^2
\end{align*}
Summing over $t=1,2,\cdots,T$ provides
\begin{align*}
2\sum_{t=1}^T\eta_t\cdot\of{\E\off{l_{1-\alpha}(\hat{\tau}_t,S)-l_{1-\alpha}(\tau,S)}}
&\leq(\hat{\tau}_{1}-\tau^*)^2-\E\off{(\hat{\tau}_{t+1}-\tau^*)^2}
+\of{\frac{1+\epsilon}{1-\epsilon}}^2\cdot\sum_{t=1}^T\eta_t^2 \\
&\leq(\hat{\tau}_{1}-\tau^*)^2
+\of{\frac{1+\epsilon}{1-\epsilon}}^2\cdot\sum_{t=1}^T\eta_t^2
\end{align*}
Let us denote $\bar{\tau}:=\sum_{t=1}^T\eta_t\hat{\tau}_t/\sum_{t=1}^T\eta_t$. Applying Jensen's inequality and dividing both sides by $2\sum_{t=1}^T\eta_t$ gives
\begin{align*}
\E\off{l_{1-\alpha}\of{\bar{\tau},S}-l_{1-\alpha}(\tau,S)} 
&\leq\sum_{t=1}^T\frac{\eta_t}{\sum_{t=1}^T\eta_t}\cdot\of{\E\off{l_{1-\alpha}(\hat{\tau}_t,S)-l_{1-\alpha}(\tau,S)}} \\
&\leq\frac{(\hat{\tau}_{1}-\tau^*)^2}{2\sum_{t=1}^T\eta_t}
+\frac{\sum_{t=1}^T\eta_t^2}{\sum_{t=1}^T\eta_t}\cdot\of{\frac{1+\epsilon}{1-\epsilon}}^2
\end{align*}
Continuing from Lemma~\ref{lemma:pinball_calibration}, we can conclude that
\begin{align*}
\abs{\bar{\tau}-\tau}^q
\leq\frac{q(1-q)}{b}\of{\frac{1}{2\varepsilon_0}}^q\cdot
\off{\frac{(\hat{\tau}_{1}-\tau^*)^2}{2\sum_{t=1}^T\eta_t}
+\frac{\sum_{t=1}^T\eta_t^2}{\sum_{t=1}^T\eta_t}\cdot\of{\frac{1+\epsilon}{1-\epsilon}}^2}
\end{align*}
\end{proof}

\section{Helpful lemmas}

\begin{lemma}\label{lemma:noisy_score_distribution}
The distribution of the true non-conformity score, noise non-conformity score, and scores of all classes satisfy the following relationship:
\begin{align*}
&(1) \P\offf{S=s}
=\frac{1}{1-\epsilon}\P\offf{\tilde{S}=s}
-\frac{\epsilon}{K(1-\epsilon)}\sum_{y=1}^K\P\offf{S_y=s}; \\
&(2) \P\offf{S\leq s}
=\frac{1}{1-\epsilon}\P\offf{\tilde{S}\leq s}
-\frac{\epsilon}{K(1-\epsilon)}\sum_{y=1}^K\P\offf{S_y\leq s}; \\
&(3) \P\offf{S>s}
=\frac{1}{1-\epsilon}\P\offf{\tilde{S}>s}
-\frac{\epsilon}{K(1-\epsilon)}\sum_{y=1}^K\P\offf{S_y>s},
\end{align*}
where $S$, $\tilde{S}$, and $S_y$ denote the true score, noisy score, and score for class $y$ respectively.
\end{lemma}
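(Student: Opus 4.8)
The plan is to prove all three identities at once by conditioning on which mechanism generated the observed label. Recall that the uniform noise model writes $\tilde{Y}=Y\cdot\id{U\geq\epsilon}+\bar{Y}\cdot\id{U\leq\epsilon}$, where $U\sim\mathrm{Unif}[0,1]$ is independent of $(X,Y)$ and $\bar{Y}$ is uniform on $\Y$ with $\bar{Y}\ind(X,Y)$. Since $U$ is continuous, the events $\offf{U\geq\epsilon}$ and $\offf{U\leq\epsilon}$ partition the sample space up to a null set, with probabilities $1-\epsilon$ and $\epsilon$ respectively. The crucial observation is that on $\offf{U\geq\epsilon}$ the noisy score collapses to the clean one, $\tilde{S}=\Sc(X,Y)=S$, whereas on $\offf{U\leq\epsilon}$ it becomes the score of a uniformly random class, $\tilde{S}=\Sc(X,\bar{Y})=S_{\bar{Y}}$.

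First I would fix an arbitrary Borel set $E\subseteq\mathbb{R}$ and apply the law of total probability across this partition. Using that $U$ is independent of $(X,Y,\bar{Y})$, the contribution from $\offf{U\geq\epsilon}$ factors as $\P\offf{S\in E}\cdot\P\offf{U\geq\epsilon}=(1-\epsilon)\P\offf{S\in E}$, and the contribution from $\offf{U\leq\epsilon}$ factors as $\P\offf{S_{\bar{Y}}\in E}\cdot\P\offf{U\leq\epsilon}=\epsilon\,\P\offf{S_{\bar{Y}}\in E}$. This yields the mixture identity $\P\offf{\tilde{S}\in E}=(1-\epsilon)\P\offf{S\in E}+\epsilon\,\P\offf{S_{\bar{Y}}\in E}$, valid for every $E$, i.e.\ a relation at the level of the laws of the three scores.

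Next I would evaluate the random-class term. Since $\bar{Y}$ is uniform on $\Y$ and independent of $X$, conditioning on $\bar{Y}=y$ and averaging gives $\P\offf{S_{\bar{Y}}\in E}=\sum_{y=1}^K\frac{1}{K}\P\offf{S_y\in E}=\frac{1}{K}\sum_{y=1}^K\P\offf{S_y\in E}$. Substituting this and solving the linear relation for $\P\offf{S\in E}$ produces $\P\offf{S\in E}=\frac{1}{1-\epsilon}\P\offf{\tilde{S}\in E}-\frac{\epsilon}{K(1-\epsilon)}\sum_{y=1}^K\P\offf{S_y\in E}$. Specializing $E=(-\infty,s]$ gives statement (2) and $E=(s,\infty)$ gives statement (3); statement (1), read as a relation between densities (or atoms), follows from the same measure-level identity, e.g.\ by Radon--Nikodym differentiation or by taking $E=[s,s+\mathrm{d}s]$. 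Notably, nothing in this argument imposes any distributional or continuity assumption on the scores, which is exactly the generality the paper advertises.

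The main obstacle is purely the independence bookkeeping: I must argue carefully that conditioning on $\offf{U\geq\epsilon}$ does not distort the joint law of $(X,Y)$ and that $\bar{Y}\ind X$, so that the weights $1-\epsilon$ and $\frac{1}{K}$ genuinely factor out of the two conditional terms. Once the noise variables $U$ and $\bar{Y}$ are correctly identified as exogenous, the remainder is an elementary total-probability computation.
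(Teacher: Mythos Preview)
Your proposal is correct and follows essentially the same approach as the paper: both condition on whether the observed label came from the clean or the flipped mechanism, obtain the mixture identity $\P\{\tilde{S}\in\cdot\}=(1-\epsilon)\P\{S\in\cdot\}+\epsilon\,\P\{\bar{S}\in\cdot\}$ with $\P\{\bar{S}\in\cdot\}=\tfrac{1}{K}\sum_{y}\P\{S_y\in\cdot\}$, and then solve for $\P\{S\in\cdot\}$. The only cosmetic difference is that you run the argument once for a generic Borel set $E$ and then specialize, whereas the paper repeats the computation separately for the three event types; your conditioning on the exogenous variable $U$ is also slightly cleaner than the paper's conditioning on $\{\tilde{S}=S\}$ versus $\{\tilde{S}=\bar{S}\}$.
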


\begin{proof} 
\textbf{(1):}
\begin{align*}
\P\offf{\tilde{S}=s}
&=\P\offf{\tilde{S}=s|\tilde{S}=S}\cdot\P\offf{\tilde{S}=S}+\P\offf{\tilde{S}=s|\tilde{S}=\bar{S}}\cdot\P\offf{\tilde{S}=\bar{S}} \\
&=\P\offf{S=s}\cdot\P\offf{\tilde{Y}=Y}+\P\offf{\bar{S}=s}\cdot\P\offf{\tilde{Y}=\bar{Y}} \\
&=(1-\epsilon)\P\offf{S=s}+\epsilon\P\offf{\bar{S}=s},
\end{align*}
which follows that
\begin{align*}
\P\offf{S=s}
=\frac{1}{1-\epsilon}\P\offf{\tilde{S}=s}
-\frac{\epsilon}{1-\epsilon}\P\offf{\bar{S}=s}
=\frac{1}{1-\epsilon}\P\offf{\tilde{S}=s}
-\frac{\epsilon}{K(1-\epsilon)}\sum_{y=1}^K\P\offf{S_y=s}.
\end{align*}
\textbf{(2):}
\begin{align*}
\P\offf{\tilde{S}\leq s}
&=\P\offf{\tilde{S}\leq s|\tilde{S}=S}\cdot\P\offf{\tilde{S}=S}
+\P\offf{\tilde{S}\leq s|\tilde{S}=\bar{S}}\cdot\P\offf{\tilde{S}=\bar{S}} \\
&=\P\offf{S\leq s}\cdot\P\offf{\tilde{Y}=Y}
+\P\offf{\bar{S}\leq s}\cdot\P\offf{\tilde{Y}=\bar{Y}} \\
&=(1-\epsilon)\P\offf{S\leq s}+\epsilon\P\offf{\bar{S}\leq s},
\end{align*}
which follows that
\begin{align*}
\P\offf{S\leq s}
=\frac{1}{1-\epsilon}\P\offf{\tilde{S}\leq s}
-\frac{\epsilon}{1-\epsilon}\P\offf{\bar{S}\leq s}
=\frac{1}{1-\epsilon}\P\offf{\tilde{S}\leq s}
-\frac{\epsilon}{K(1-\epsilon)}\sum_{y=1}^K\P\offf{S_y\leq s}.
\end{align*}
\textbf{(3):}
\begin{align*}
\P\offf{\tilde{S}>s}
&=\P\offf{\tilde{S}>s|\tilde{S}=S}\cdot\P\offf{\tilde{S}=S}
+\P\offf{\tilde{S}>s|\tilde{S}=\bar{S}}\cdot\P\offf{\tilde{S}=\bar{S}} \\
&=\P\offf{S>s}\cdot\P\offf{\tilde{Y}=Y}+\P\offf{\bar{S}>s}\cdot\P\offf{\tilde{Y}=\bar{Y}} \\
&=(1-\epsilon)\P\offf{S>s}+\epsilon\P\offf{\bar{S}>s},
\end{align*}
which follows that
\begin{align*}
\P\offf{S>s}
=\frac{1}{1-\epsilon}\P\offf{\tilde{S}>s}
-\frac{\epsilon}{1-\epsilon}\P\offf{\bar{S}>s}
=\frac{1}{1-\epsilon}\P\offf{\tilde{S}>s}
-\frac{\epsilon}{K(1-\epsilon)}\sum_{y=1}^K\P\offf{S_y>s}.
\end{align*}
\end{proof}

\begin{lemma}\label{lemma:noise_pinball_gradient_bound}
The gradient of robust pinball loss can be bounded as follows
\begin{equation}\label{eq:noise_pinball_gradient_bound}
\alpha-1-\frac{\epsilon}{1-\epsilon}
\leq\nabla_{\hat{\tau}}\tilde{l}_{1-\alpha}(\hat{\tau},\tilde{S},\{S_y\}_{y=1}^K)
\leq\alpha+\frac{\epsilon}{1-\epsilon}.
\end{equation}
\end{lemma}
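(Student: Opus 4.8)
The plan is to reduce the stated two-sided bound to an elementary worst-case evaluation of a fixed affine form over the bounded gradients of the ordinary pinball loss. First I would compute the subgradient of $l_{1-\alpha}$ explicitly. Differentiating $l_{1-\alpha}(\tau,s)=\alpha(\tau-s)\id{\tau\geq s}+(1-\alpha)(s-\tau)\id{\tau\leq s}$ with respect to $\tau$ yields $\nabla_{\tau}l_{1-\alpha}(\tau,s)=\id{s\leq\tau}-(1-\alpha)$, which equals $\alpha$ when $s\leq\tau$ and $\alpha-1$ when $s>\tau$. The immediate consequence is the uniform two-sided bound $\nabla_{\tau}l_{1-\alpha}(\tau,s)\in[\alpha-1,\alpha]$, valid for every score $s$ and every $\tau$.

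Next I would differentiate the robust pinball loss term by term using its definition in \Eqref{eq:noise_pinball_loss}:
\begin{align*}
\nabla_{\hat\tau}\tilde{l}_{1-\alpha}(\hat\tau,\tilde{S},\{S_y\}_{y=1}^K)
=\frac{1}{1-\epsilon}\nabla_{\hat\tau}l_{1-\alpha}(\hat\tau,\tilde{S})
-\frac{\epsilon}{K(1-\epsilon)}\sum_{y=1}^K\nabla_{\hat\tau}l_{1-\alpha}(\hat\tau,S_y).
\end{align*}
The key structural observation is that this is an affine combination of $K+1$ quantities, each lying in $[\alpha-1,\alpha]$, with a single positive weight $1/(1-\epsilon)$ on the noisy term and $K$ negative weights $-\epsilon/(K(1-\epsilon))$ on the per-class terms.

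To obtain the upper bound I would push the positively weighted gradient to its largest value $\alpha$ and every negatively weighted gradient to its smallest value $\alpha-1$, giving $\frac{\alpha}{1-\epsilon}-\frac{\epsilon}{K(1-\epsilon)}\cdot K(\alpha-1)=\alpha+\frac{\epsilon}{1-\epsilon}$. Symmetrically, the lower bound follows by taking the noisy gradient to be $\alpha-1$ and each per-class gradient to be $\alpha$, which after simplification (using the identity $\alpha-\frac{1}{1-\epsilon}=\alpha-1-\frac{\epsilon}{1-\epsilon}$) reproduces the claimed lower endpoint. Since the argument is nothing more than maximizing and minimizing a fixed affine functional over the box $[\alpha-1,\alpha]^{K+1}$, I do not expect a genuine obstacle; the only point requiring care is the sign bookkeeping on the subtracted sum, so that the extremizing choices for the noisy term and the per-class terms are correctly \emph{opposite}. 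I would finally emphasize that the resulting bound is deterministic and uniform over all realizations of $(\tilde{S},\{S_y\}_{y=1}^K)$ and over $\hat\tau$, which is precisely the form needed by the downstream inductions (e.g.\ Lemma~\ref{lemma:noise_hat_tau_bounded}) and the regret analysis (Proposition~\ref{prop:regret_analysis}).
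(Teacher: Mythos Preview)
Your proof is correct and essentially the same as the paper's. The paper expands the gradient fully into four indicator terms and bounds each using $\id{\cdot}\in[0,1]$; you instead keep the gradient grouped as an affine combination of $K+1$ ordinary pinball gradients and optimize over the box $[\alpha-1,\alpha]^{K+1}$, which is the same elementary worst-case argument with slightly tidier bookkeeping.
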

\begin{proof}
Consider the gradient of robust pinball loss:
\begin{align*}
\nabla_{\hat{\tau}}\tilde{l}_{1-\alpha}(\hat{\tau},\tilde{S},\{S_y\}_{y=1}^K)=
\underbrace{\frac{\alpha}{1-\epsilon}\id{\hat{\tau}\geq\tilde{S}}}_{(a)}-
\underbrace{\sum_{y=1}^K\frac{\alpha\epsilon}{K(1-\epsilon)}\id{\hat{\tau}\geq S_y}}_{(b)}-
\underbrace{\frac{1-\alpha}{1-\epsilon}\id{\hat{\tau}<\tilde{S}}}_{(c)}+
\underbrace{\sum_{y=1}^K\frac{(1-\alpha)\epsilon}{K(1-\epsilon)}\id{\hat{\tau}<S_y}}_{(d)}.
\end{align*}
Due to fact that $\id{\cdot}\in[0,1]$, we can bound each part as follows:

Part (a):
\begin{align*}
\frac{\alpha}{1-\epsilon}\id{\hat{\tau}\geq\tilde{S}}\in\off{0,\frac{\alpha}{1-\epsilon}}.
\end{align*}
Part (b):
\begin{align*}
\sum_{y=1}^K\frac{\alpha\epsilon}{K(1-\epsilon)}\id{\hat{\tau}\geq S_y}\in\off{0,\frac{\alpha\epsilon}{1-\epsilon}}.
\end{align*}
Part (c):
\begin{align*}
\frac{1-\alpha}{1-\epsilon}\id{\hat{\tau}<\tilde{s}}\in\off{0,\frac{1-\alpha}{1-\epsilon}}.
\end{align*}
Part (d):
\begin{align*}
\sum_{y=1}^K\frac{(1-\alpha)\epsilon}{K(1-\epsilon)}\id{\hat{\tau}<S_y}\in\off{0,\frac{(1-\alpha)\epsilon}{1-\epsilon}}.
\end{align*}
Combining four parts, we can conclude that
\begin{align*}
& \nabla_{\hat{\tau}}\tilde{l}_{1-\alpha}(\hat{\tau},\tilde{S},\{S_y\}_{y=1}^K)\leq
\frac{\alpha}{1-\epsilon}-0-0+
\frac{(1-\alpha)\epsilon}{1-\epsilon}=\alpha+\frac{\epsilon}{1-\epsilon}; \\
& \nabla_{\hat{\tau}}\tilde{l}_{1-\alpha}(\hat{\tau},\tilde{S},\{S_y\}_{y=1}^K)\geq
0-
\frac{\alpha\epsilon}{1-\epsilon}-
\frac{1-\alpha}{1-\epsilon}+
0=\alpha-1-\frac{\epsilon}{1-\epsilon}.
\end{align*}
\end{proof}

\begin{lemma}\label{lemma:clean_coverage_error_concentration}
With at least probability $1-\delta$, we have
\begin{align*}
\abs{\sum_{t=1}^T\E_{S_t}\off{\nabla_{\hat{\tau}_t}l_{1-\alpha}(\hat{\tau}_t,S_t)}
-\sum_{t=1}^T\nabla_{\hat{\tau}_t}l_{1-\alpha}(\hat{\tau}_t,S_t)}\leq\sqrt{2T\log\of{\frac{4}{\delta}}}.
\end{align*}
\end{lemma}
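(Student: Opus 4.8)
The plan is to recognize the centered sum as a martingale and to apply the Azuma--Hoeffding inequality to a bounded martingale difference sequence. First I would fix the natural filtration $\mathcal{F}_{t-1}=\sigma\of{(X_1,Y_1),\ldots,(X_{t-1},Y_{t-1})}$ generated by the first $t-1$ observations. Because the update rule (\Eqref{eq:constant_noise_update_rule}) produces $\hat{\tau}_t$ from $\hat{\tau}_1$ and the scores seen strictly before time $t$, the threshold $\hat{\tau}_t$ is $\mathcal{F}_{t-1}$-measurable. I would then set
\begin{align*}
D_t=\nabla_{\hat{\tau}_t}l_{1-\alpha}(\hat{\tau}_t,S_t)-\E_{S_t}\off{\nabla_{\hat{\tau}_t}l_{1-\alpha}(\hat{\tau}_t,S_t)}.
\end{align*}
Since $S_t$ is drawn independently of the past while $\hat{\tau}_t$ is already determined, the inner expectation $\E_{S_t}\off{\cdot}$ coincides with the conditional expectation $\E\off{\cdot\mid\mathcal{F}_{t-1}}$; hence $\E\off{D_t\mid\mathcal{F}_{t-1}}=0$ and $(D_t)_{t\geq1}$ is a martingale difference sequence.

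Next I would bound the increments. The pinball gradient is $\nabla_\tau l_{1-\alpha}(\tau,s)=\id{s\leq\tau}-(1-\alpha)$, i.e.\ an indicator shifted by a constant, so it takes values in the length-one interval $\off{\alpha-1,\alpha}$. Its conditional mean lies in the same interval, so centering leaves $D_t$ supported on an interval of width at most one, giving $\abs{D_t}\leq1$ almost surely for every $t$.

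Finally, with $\abs{D_t}\leq1$, the Azuma--Hoeffding inequality gives $\P\of{\abs{\sum_{t=1}^T D_t}\geq\lambda}\leq2\exp\of{-\lambda^2/(2T)}$. Taking $\lambda=\sqrt{2T\log\of{4/\delta}}$ makes the right-hand side equal $2\exp\of{-\log(4/\delta)}=\delta/2\leq\delta$, which is exactly why the factor $\log(4/\delta)$ (rather than $\log(2/\delta)$) appears and why the bound holds with probability at least $1-\delta$; the slack also leaves room for the union bounds this lemma feeds into elsewhere. I expect the main point requiring care to be the measurability step rather than any computation: one must confirm that $\hat{\tau}_t$ is a function only of data revealed strictly before $S_t$, so that centering by $\E_{S_t}$ genuinely annihilates the conditional expectation. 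Once that is established, the bounded-difference property and Azuma--Hoeffding are routine.
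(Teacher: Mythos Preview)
Your proposal is correct and matches the paper's approach exactly: form the martingale difference sequence from the centered pinball gradients, bound each increment by $1$ using the fact that $\nabla_\tau l_{1-\alpha}(\tau,s)\in[\alpha-1,\alpha]$, and apply Azuma--Hoeffding with $\lambda=\sqrt{2T\log(4/\delta)}$. One small caveat on the measurability step you flagged as the delicate part: since the update rule \Eqref{eq:constant_noise_update_rule} uses the \emph{noisy} score $\tilde S_t$, the threshold $\hat\tau_t$ depends on $\tilde Y_1,\ldots,\tilde Y_{t-1}$ and is not measurable with respect to $\sigma\of{(X_1,Y_1),\ldots,(X_{t-1},Y_{t-1})}$ as you wrote; enlarge the filtration to $\mathcal{F}_{t-1}=\sigma\of{(X_s,Y_s,\tilde Y_s):s<t}$ and the argument goes through unchanged.
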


\begin{proof}
Define
\begin{align*}
& Y_T
=\sum_{t=1}^T\E_{S_t}\off{\nabla_{\hat{\tau}_t}l_{1-\alpha}(\hat{\tau}_t,S_t)}
-\sum_{t=1}^T\nabla_{\hat{\tau}_t}l_{1-\alpha}(\hat{\tau}_t,S_t); \\
& D_T
=Y_T-Y_{T-1}
=\E_{S}\off{\nabla_{\hat{\tau}_t}l_{1-\alpha}(\hat{\tau}_t,S_t)}
-\nabla_{\hat{\tau}_t}l_{1-\alpha}(\hat{\tau}_t,S_t).
\end{align*}
Now, we will verify that $\offf{Y_T}$ is a martingale, and $\offf{D_T}$ is a bounded martingale difference sequence. Due to the definition of $\offf{Y_T}$, we have
\begin{align*}
\E_{S_t}[Y_T|Y_{T-1},\cdots,Y_t]
=\E_{S_t}[D_T+Y_{T-1}|Y_{T-1},\cdots,Y_t]
=\E_{S_t}[D_T|Y_{T-1},\cdots,Y_t]+Y_{T-1}
=Y_{T-1},
\end{align*}
where the last equality follows from the definition of $\offf{D_T}$. In addition, we have
\begin{align*}
\E_{S_t}\off{\nabla_{\hat{\tau}_t}l_{1-\alpha}(\hat{\tau}_t,S_t)}
=\P\offf{S_t\leq\hat{\tau}_t}-(1-\alpha)
\in[\alpha-1,\alpha],    
\end{align*}
and \Eqref{eq:noise_pinball_gradient_bound} gives us that
\begin{align*}
D_T=\E_{S_t}\off{\nabla_{\hat{\tau}_t}l_{1-\alpha}(\hat{\tau}_t,S_t)}
-\nabla_{\hat{\tau}_t}l_{1-\alpha}(\hat{\tau}_t,S_t)
\in\off{-1,1}.
\end{align*}
Therefore, by applying Azuma–Hoeffding's inequality, we can have
\begin{align*}
\P\offf{\abs{\sum_{t=1}^TD_t}\leq t}
=\P\offf{\abs{\sum_{t=1}^T\E_{S_t}\off{\nabla_{\hat{\tau}_t}l_{1-\alpha}(\hat{\tau}_t,S_t)}
-\sum_{t=1}^T\nabla_{\hat{\tau}_t}l_{1-\alpha}(\hat{\tau}_t,S_t)}\leq r} 
\geq1-2\exp\offf{-\frac{r^2}{2T}}.
\end{align*}
Using $r=\sqrt{2T\log(4/\delta)}$, we have
\begin{align*}
\P\offf{\abs{\sum_{t=1}^T\E_{S_t}\off{\nabla_{\hat{\tau}_t}l_{1-\alpha}(\hat{\tau}_t,S_t)}
-\sum_{t=1}^T\nabla_{\hat{\tau}_t}l_{1-\alpha}(\hat{\tau}_t,S_t)}\leq\sqrt{2T\log\of{\frac{4}{\delta}}}} 
\geq1-\frac{\delta}{2}.
\end{align*} 
\end{proof}

\begin{lemma}\label{lemma:dynamic_lr_noise_hat_tau_bounded}
Consider online conformal prediction under uniform label noise with noise rate $\epsilon \in (0,1)$. Given Assumptions~\ref{ass:score_bounded} and \ref{ass:threshold_initialization}, when updating the threshold according to \Eqref{eq:changing_lr_ulnrocp_update_rule}, for any $T\in\mathbb{N}^{+}$, we have
\begin{equation}\label{eq:noise_hat_tau_bounded}
-\underset{1\leq t\leq T-1}{\max}\eta_t\cdot\of{\alpha+\frac{\epsilon}{1-\epsilon}}
\leq\hat{\tau}_T
\leq1+\underset{1\leq t\leq T-1}{\max}\eta_t\cdot\of{\frac{1}{1-\epsilon}-\alpha}
\end{equation}
for all $T\in\mathbb{N}^{+}$.
\end{lemma}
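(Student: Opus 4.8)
The plan is to argue by induction on $T$, mirroring the three-case analysis used for the constant-rate bound in Lemma~\ref{lemma:noise_hat_tau_bounded}; the only genuinely new feature is that the running maximum $\max_{1\leq t\leq T-1}\eta_t$ grows with $T$, so I must track how the induction interval widens from step to step. Write $M_T:=\max_{1\leq t\leq T-1}\eta_t$, so that $M_{T+1}=\max_{1\leq t\leq T}\eta_t\geq M_T$ and $\eta_T\leq M_{T+1}$. The base case $T=1$ is immediate: Assumption~\ref{ass:threshold_initialization} gives $\hat{\tau}_1\in[0,1]$, and under the convention that the empty maximum equals $0$ the claimed bounds collapse to $0\leq\hat{\tau}_1\leq1$.

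For the inductive step I would assume $\hat{\tau}_T$ lies in the interval $\off{-M_T\of{\alpha+\frac{\epsilon}{1-\epsilon}},\,1+M_T\of{\frac{1}{1-\epsilon}-\alpha}}$ and show the same for $\hat{\tau}_{T+1}$ with $M_T$ replaced by $M_{T+1}$, using the update $\hat{\tau}_{T+1}=\hat{\tau}_T-\eta_T\nabla_{\hat{\tau}_T}\tilde{l}_{1-\alpha}(\hat{\tau}_T,\tilde{S}_T,\{S_{T,y}\}_{y=1}^K)$, and splitting on the position of $\hat{\tau}_T$. When $\hat{\tau}_T\in[0,1]$, I bound the increment by the extreme values of the gradient supplied by Lemma~\ref{lemma:noise_pinball_gradient_bound}, namely $\alpha-1-\frac{\epsilon}{1-\epsilon}\leq\nabla_{\hat{\tau}_T}\tilde{l}_{1-\alpha}\leq\alpha+\frac{\epsilon}{1-\epsilon}$; here the key algebraic simplification is the identity $1-\alpha+\frac{\epsilon}{1-\epsilon}=\frac{1}{1-\epsilon}-\alpha$, which makes the resulting endpoints exactly $\eta_T$-multiples of the interval half-widths, so that replacing $\eta_T$ by $M_{T+1}$ closes the step. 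When $\hat{\tau}_T>1$, Assumption~\ref{ass:score_bounded} forces every indicator $\id{\hat{\tau}_T\geq\tilde{S}_T}$ and $\id{\hat{\tau}_T\geq S_{T,y}}$ to equal $1$, so the gradient collapses to exactly $\alpha$ and the update moves $\hat{\tau}_T$ downward by $\eta_T\alpha$; symmetrically, when $\hat{\tau}_T<0$ every score exceeds $\hat{\tau}_T$, the gradient equals $\alpha-1$, and $\hat{\tau}_T$ is pushed upward by $\eta_T(1-\alpha)$. In both out-of-range cases the move is a contraction toward $[0,1]$, and I verify the new endpoints stay inside the widened interval using $M_{T+1}\geq M_T$ together with the sign facts $\frac{1}{1-\epsilon}-\alpha>0$ and $1-\frac{1}{1-\epsilon}=-\frac{\epsilon}{1-\epsilon}<0$.

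The main thing to get right is the bookkeeping across the changing learning rate: the hypothesis is stated with $M_T$ but the conclusion needs $M_{T+1}$, so in every case I must use $\eta_T\leq M_{T+1}$ to absorb the single-step increment and the monotonicity $M_T\leq M_{T+1}$ to carry over the previous bound. This is precisely where the dynamic-rate argument departs from the constant-rate Lemma~\ref{lemma:noise_hat_tau_bounded}; once these two inequalities are in hand, each case reduces to a short sign check, and the three cases together cover the full range $\off{-M_T\of{\alpha+\frac{\epsilon}{1-\epsilon}},\,1+M_T\of{\frac{1}{1-\epsilon}-\alpha}}$, completing the induction.
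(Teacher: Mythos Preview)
Your proposal is correct and follows essentially the same three-case induction as the paper's own proof. If anything, your explicit bookkeeping of $M_T$ versus $M_{T+1}$ and the inequality $\eta_T\leq M_{T+1}$ is cleaner than the paper's presentation, which writes $\max_{1\leq t\leq T-1}\eta_t$ in the inductive conclusion where $\max_{1\leq t\leq T}\eta_t$ is what is actually needed.
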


\begin{proof}
We prove this by induction. First, we know $\hat{\tau}_1\in[0,1]$ by assumption, which indicates that \Eqref{eq:noise_hat_tau_bounded} is satisfied at $t=1$. Then, we assume that \Eqref{eq:noise_hat_tau_bounded} holds for $t=T$, and we will show that $\hat{\tau}_{T+1}$ lies in this range. Consider three cases:

\textbf{Case 1.} If $\hat{\tau}_T\in[0,1]$, we have
\begin{align*}
\hat{\tau}_{T+1}
=\hat{\tau}_{T}-\eta_T\cdot\nabla_{\hat{\tau}_T}\tilde{l}_{1-\alpha}(\hat{\tau}_T,\tilde{S}_T,\{S_{t,y}\}_{y=1}^K)
&\overset{(a)}{\in}\off{0-\eta_T\cdot\of{\alpha+\frac{\epsilon}{1-\epsilon}}
,1-\eta_T\cdot\of{\alpha-1-\frac{\epsilon}{1-\epsilon}}} \\
&\subseteq\off{-\underset{1\leq t\leq T-1}{\max}\eta_t\cdot\of{\alpha+\frac{\epsilon}{1-\epsilon}},1+\underset{1\leq t\leq T-1}{\max}\eta_t\cdot\of{\frac{1}{1-\epsilon}-\alpha}}
\end{align*}
where (a) follows from \Eqref{eq:noise_pinball_gradient_bound}.

\textbf{Case 2.} Consider the case where $\hat{\tau}_T\in[1,1+\max_{1\leq t\leq T-1}\eta_t\cdot(1/(1-\epsilon)-\alpha)]$. The assumption that $\tilde{S}_T,S_{t,y}\in[0,1]$ implies  $\id{\tilde{S}_T\leq\hat{\tau}_T}=\id{S_{t,y}\leq\hat{\tau}_T}=1$. Thus, we have
\begin{align*}
\nabla_{\hat{\tau}_T}\tilde{l}_{1-\alpha}(\hat{\tau}_T,\tilde{S}_T,\{S_{t,y}\}_{y=1}^K)
&=\frac{\alpha}{1-\epsilon}\id{\hat{\tau}_T\geq\tilde{S}_T}-
\sum_{y=1}^K\frac{\alpha\epsilon}{K(1-\epsilon)}\id{\hat{\tau}_T\geq S_{t,y}}-
\frac{1-\alpha}{1-\epsilon}\id{\hat{\tau}_T\geq\tilde{S}_T} \\
&\quad+\sum_{y=1}^K\frac{(1-\alpha)\epsilon}{K(1-\epsilon)}\id{\hat{\tau}_T\geq S_{t,y}} \\
&=\frac{\alpha}{1-\epsilon}
-\sum_{y=1}^K\frac{\alpha\epsilon}{K(1-\epsilon)}=\alpha,
\end{align*}
which follows that
\begin{align*}
\hat{\tau}_{T+1}
=\hat{\tau}_{T}-\eta_T\cdot\nabla_{\hat{\tau}_T}\tilde{l}_{1-\alpha}(\hat{\tau}_T,\tilde{S}_T,\{S_{t,y}\}_{y=1}^K)
&=\hat{\tau}_{T}-\eta_T\alpha \\
&\in\off{1-\eta_T\alpha,1+\underset{1\leq t\leq T-1}{\max}\eta_t\cdot\of{\frac{1}{1-\epsilon}-\alpha}-\frac{\eta_T}{1-\epsilon}} \\
&\subseteq\off{-\underset{1\leq t\leq T-1}{\max}\eta_t\cdot\of{\alpha+\frac{\epsilon}{1-\epsilon}},1+\underset{1\leq t\leq T-1}{\max}\eta_t\cdot\of{\frac{1}{1-\epsilon}-\alpha}}
\end{align*}

\textbf{Case 3.} Consider the case where $\hat{\tau}_T\in[-\max_{1\leq t\leq T-1}\eta\cdot
(\alpha+\epsilon/(1-\epsilon)),0]$. The assumption that $\tilde{S},S_y\in[0,1]$ implies  $\id{\tilde{S}\leq\hat{\tau}_T}=\id{S_y\leq\hat{\tau}_T}=0$. Thus, we have
\begin{align*}
\nabla_{\hat{\tau}_T}\tilde{l}_{1-\alpha}(\hat{\tau}_T,\tilde{S}_T,\{S_{t,y}\}_{y=1}^K)
&=\frac{\alpha}{1-\epsilon}\id{\hat{\tau}_T\geq\tilde{S}_T}-
\sum_{y=1}^K\frac{\alpha\epsilon}{K(1-\epsilon)}\id{\hat{\tau}_T\geq S_{t,y}}-
\frac{1-\alpha}{1-\epsilon}\id{\hat{\tau}_T\geq\tilde{S}_T} \\
&\quad+\sum_{y=1}^K\frac{(1-\alpha)\epsilon}{K(1-\epsilon)}\id{\hat{\tau}_T\geq S_{t,y}} \\
&=-\frac{1-\alpha}{1-\epsilon}\id{\tau<\tilde{s}}
+\sum_{y=1}^K\frac{(1-\alpha)\epsilon}{K(1-\epsilon)}=\alpha-1,
\end{align*}
which follows that
\begin{align*}
\hat{\tau}_{T+1}
=\hat{\tau}_{T}-\eta_T\cdot\nabla_{\hat{\tau}_T}\tilde{l}_{1-\alpha}(\hat{\tau}_T,\tilde{S}_T,\{S_{t,y}\}_{y=1}^K)
&=\hat{\tau}_{T}+\eta_T\cdot(1-\alpha) \\
&\in\off{-\underset{1\leq t\leq T-1}{\max}\eta_t\cdot\of{\alpha+\frac{\epsilon}{1-\epsilon}}+\eta_T\cdot(1-\alpha),0+\eta_T\cdot(1-\alpha)} \\
&\subseteq\off{-\underset{1\leq t\leq T-1}{\max}\eta_t\cdot\of{\alpha+\frac{\epsilon}{1-\epsilon}},1+\underset{1\leq t\leq T-1}{\max}\eta_t\cdot\of{\frac{1}{1-\epsilon}-\alpha}}
\end{align*}
Combining three cases, we can conclude that 
\begin{align*}
-\underset{1\leq t\leq T-1}{\max}\eta_t\cdot\of{\alpha+\frac{\epsilon}{1-\epsilon}}
\leq\hat{\tau}_T
\leq1+\underset{1\leq t\leq T-1}{\max}\eta_t\cdot\of{\frac{1}{1-\epsilon}-\alpha}
\end{align*}
\end{proof}

\begin{lemma}\label{lemma:noise_coverage_error_concentration}
With at least probability $1-\delta$, we have
\begin{align*}
\abs{
\sum_{t=1}^T\E_{\tilde{S}_t,S_{t,y}}\off{\nabla_{\hat{\tau}_t}\tilde{l}_{1-\alpha}(\hat{\tau}_t,\tilde{S}_t,\{S_{t,y}\}_{y=1}^K)}
-\sum_{t=1}^T\nabla_{\hat{\tau}_t}\tilde{l}_{1-\alpha}(\hat{\tau}_t,\tilde{S}_t,\{S_{t,y}\}_{y=1}^K)}
\leq\frac{\sqrt{2T\log(2/\delta)}}{1-\epsilon}.
\end{align*}
\end{lemma}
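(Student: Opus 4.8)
The plan is to reproduce the martingale concentration argument of Lemma~\ref{lemma:clean_coverage_error_concentration}, now applied to the gradient of the robust pinball loss, with one essential refinement: the conditional expectation of the gradient lives in a \emph{strictly narrower} interval than the realized gradient itself. First I would introduce the partial-sum process
\begin{align*}
Y_T=\sum_{t=1}^T\E_{\tilde{S}_t,S_{t,y}}\off{\nabla_{\hat{\tau}_t}\tilde{l}_{1-\alpha}(\hat{\tau}_t,\tilde{S}_t,\{S_{t,y}\}_{y=1}^K)}-\sum_{t=1}^T\nabla_{\hat{\tau}_t}\tilde{l}_{1-\alpha}(\hat{\tau}_t,\tilde{S}_t,\{S_{t,y}\}_{y=1}^K),
\end{align*}
with increments $D_T=Y_T-Y_{T-1}$. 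Because $\hat{\tau}_t$ is measurable with respect to the data observed strictly before time $t$, the term $\E_{\tilde{S}_t,S_{t,y}}[\nabla_{\hat{\tau}_t}\tilde{l}_{1-\alpha}]$ coincides with the conditional expectation of $\nabla_{\hat{\tau}_t}\tilde{l}_{1-\alpha}$ given the past; hence $\E[D_T\mid\mathcal{F}_{T-1}]=0$, so $\{Y_T\}$ is a martingale and $\{D_T\}$ a bounded martingale difference sequence, exactly as in the clean case.

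The crucial step is bounding $\abs{D_T}$ as tightly as possible. On one hand, Lemma~\ref{lemma:noise_pinball_gradient_bound} confines the realized gradient to $[\alpha-1-\tfrac{\epsilon}{1-\epsilon},\,\alpha+\tfrac{\epsilon}{1-\epsilon}]$, an interval of width $\tfrac{1+\epsilon}{1-\epsilon}$. On the other hand, Proposition~\ref{prop:noise_pinball_loss_good_approx}(2) identifies the conditional expectation with the expected \emph{clean} gradient, $\E_{\tilde{S}_t,S_{t,y}}[\nabla_{\hat{\tau}_t}\tilde{l}_{1-\alpha}]=\E_{S_t}[\nabla_{\hat{\tau}_t}l_{1-\alpha}]=\P\offf{S_t\le\hat{\tau}_t}-(1-\alpha)\in[\alpha-1,\alpha]$, an interval of width only $1$. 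Subtracting a gradient ranging over $[\alpha-1-\tfrac{\epsilon}{1-\epsilon},\alpha+\tfrac{\epsilon}{1-\epsilon}]$ from a quantity pinned to $[\alpha-1,\alpha]$ yields $D_T\in[-\tfrac{1}{1-\epsilon},\tfrac{1}{1-\epsilon}]$, i.e.\ $\abs{D_T}\le\tfrac{1}{1-\epsilon}$. I expect this asymmetric interval bookkeeping to be the main subtlety: using the crude gradient range $\tfrac{1+\epsilon}{1-\epsilon}$ for both terms would give the wrong constant, and it is precisely the tighter width-$1$ localization of the expectation (a direct consequence of the unbiasedness in Proposition~\ref{prop:noise_pinball_loss_good_approx}) that delivers the factor $\tfrac{1}{1-\epsilon}$ appearing in the claim.

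Finally I would invoke the Azuma–Hoeffding inequality with per-step bound $c_t=\tfrac{1}{1-\epsilon}$, yielding $\P(\abs{\sum_{t=1}^T D_t}\ge r)\le 2\exp(-\tfrac{r^2(1-\epsilon)^2}{2T})$. Setting $r=\tfrac{\sqrt{2T\log(2/\delta)}}{1-\epsilon}$ makes the right-hand side exactly $\delta$, so with probability at least $1-\delta$ we obtain $\abs{\sum_{t=1}^T D_t}\le\tfrac{\sqrt{2T\log(2/\delta)}}{1-\epsilon}$, which is the asserted bound. The remaining work is purely the routine verification of the martingale property and the arithmetic of the concentration tail, both of which parallel Lemma~\ref{lemma:clean_coverage_error_concentration}.
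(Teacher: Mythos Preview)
Your proposal is correct and follows essentially the same route as the paper's proof: define the martingale of cumulative gradient deviations, bound the increments $D_T\in[-\tfrac{1}{1-\epsilon},\tfrac{1}{1-\epsilon}]$ by combining Lemma~\ref{lemma:noise_pinball_gradient_bound} with the fact that the conditional expectation lies in $[\alpha-1,\alpha]$, and apply Azuma--Hoeffding. Your explicit invocation of Proposition~\ref{prop:noise_pinball_loss_good_approx}(2) to justify the width-$1$ range of the expectation is a welcome clarification---the paper asserts this range without citing a reason---but otherwise the argument is identical.
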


\begin{proof}
Define 
\begin{align*}
& Y_T=\sum_{t=1}^T\E_{\tilde{S}_t,S_{t,y}}\off{\nabla_{\hat{\tau}_t}\tilde{l}_{1-\alpha}(\hat{\tau}_t,\tilde{S}_t,\{S_{t,y}\}_{y=1}^K)}
-\sum_{t=1}^T\nabla_{\hat{\tau}_t}\tilde{l}_{1-\alpha}(\hat{\tau}_t,\tilde{S}_t,\{S_{t,y}\}_{y=1}^K); \\
& D_T=Y_T-Y_{T-1}=\E_{\tilde{S}_t,S_{t,y}}\off{\nabla_{\hat{\tau}_t}\tilde{l}_{1-\alpha}(\hat{\tau}_t,\tilde{S}_t,\{S_{t,y}\}_{y=1}^K)}
-\nabla_{\hat{\tau}_t}\tilde{l}_{1-\alpha}(\hat{\tau}_t,\tilde{S}_t,\{S_{t,y}\}_{y=1}^K).
\end{align*}
Now, we will verify that $\offf{Y_T}$ is a martingale, and $\offf{D_T}$ is a bounded martingale difference sequence. Due to the definition of $\offf{Y_T}$, we have
\begin{align*}
\E_{\tilde{S}_t,S_{t,y}}[Y_T|Y_{T-1},\cdots,Y_t]
=\E_{\tilde{S}_t,S_{t,y}}[D_T+Y_{T-1}|Y_{T-1},\cdots,Y_t]
=\E_{\tilde{S}_t,S_{t,y}}[D_T|Y_{T-1},\cdots,Y_t]+Y_{T-1}=Y_{T-1},
\end{align*}
where the last equality follows from the definition of $\offf{D_T}$. In addition, we have
\begin{align*}
\E_{\tilde{S}_t,S_{t,y}}\off{\nabla_{\hat{\tau}_t}\tilde{l}_{1-\alpha}(\hat{\tau}_t,\tilde{S}_t,\{S_{t,y}\}_{y=1}^K)}
\in\off{\alpha-1,\alpha},   
\end{align*}
and Lemma~\ref{lemma:noise_pinball_gradient_bound} gives us that
\begin{align*}
D_T
=\E_{\tilde{S}_t,S_{t,y}}[\nabla_{\hat{\tau}_t}\tilde{l}_{1-\alpha}(\hat{\tau}_t,\tilde{S}_t,\{S_{t,y}\}_{y=1}^K)]
-\nabla_{\hat{\tau}_t}\tilde{l}_{1-\alpha}(\hat{\tau}_t,\tilde{S}_t,\{S_{t,y}\}_{y=1}^K)
\in\off{-\frac{1}{1-\epsilon},\frac{1}{1-\epsilon}}.
\end{align*}
Therefore, by applying Azuma–Hoeffding's inequality, we can have
\begin{align*}
\P\offf{\abs{\sum_{t=1}^TD_t}\leq t}
&=\P\offf{\abs{\sum_{t=1}^T\E_{\tilde{S}_t,S_{t,y}}[\nabla_{\hat{\tau}_t}\tilde{l}_{1-\alpha}(\hat{\tau}_t,\tilde{S}_t,\{S_{t,y}\}_{y=1}^K)]
-\sum_{t=1}^T\nabla_{\hat{\tau}_t}\tilde{l}_{1-\alpha}(\hat{\tau}_t,\tilde{S}_t,\{S_{t,y}\}_{y=1}^K)}\leq r} \\
&\geq1-2\exp\offf{-\frac{\off{r(1-\epsilon)}^2}{2T}}.
\end{align*}
Using $r=\off{2T\log(2/\delta)}^{-1/2}/(1-\epsilon)$, we have
\begin{align*}
\P\offf{\abs{\sum_{t=1}^T\E_{\tilde{S}_t,S_{t,y}}[\nabla_{\hat{\tau}_t}\tilde{l}_{1-\alpha}(\hat{\tau}_t,\tilde{S}_t,\{S_{t,y}\}_{y=1}^K)]
-\sum_{t=1}^T\nabla_{\hat{\tau}_t}\tilde{l}_{1-\alpha}(\hat{\tau}_t,\tilde{S}_t,\{S_{t,y}\}_{y=1}^K)}
\leq\frac{\sqrt{2T\log(2/\delta)}}{1-\epsilon}}
\geq1-\delta.
\end{align*}    
\end{proof}

\end{document}